\let\footnote=\endnote
 \def\bibsep{\smallskipamount}%
\begin{document}



\RUNAUTHOR{Che and Namkoong}
\RUNTITLE{Adaptive Experimentation at Scale}

\TITLE{Adaptive Experimentation at Scale: A Computational Framework for Flexible
  Batches}

\ARTICLEAUTHORS{%

\AUTHOR{Ethan Che}
\AFF{Decision, Risk, and Operations Division, Columbia Business School, New York, NY 10027, \EMAIL{namkoong@gsb.columbia.edu}} 

\AUTHOR{Hongseok Namkoong}
\AFF{Decision, Risk, and Operations Division, Columbia Business School, New York, NY 10027, \EMAIL{namkoong@gsb.columbia.edu}} 

} 

\ABSTRACT{

Standard bandit algorithms that assume continual reallocation of measurement
effort are challenging to implement due to delayed feedback and
infrastructural/organizational difficulties. Motivated by practical instances
involving a handful of reallocation opportunities in which outcomes are
measured in batches, we develop a computation-driven adaptive experimentation
framework that can flexibly handle batching.  Our main observation is that
normal approximations, which are universal in statistical inference, can also
guide the design of adaptive algorithms. By deriving a Gaussian sequential
experiment, we formulate a dynamic program that can leverage prior information
on \emph{average rewards}.  Instead of the typical theory-driven paradigm, we
leverage computational tools and empirical benchmarking for algorithm
development. Our empirical analysis highlights a simple yet effective
algorithm, $\algofull$, which iteratively solves a planning problem using
stochastic gradient descent.  Our approach significantly improves power over
standard methods, even when compared to Bayesian algorithms (e.g., Thompson
sampling) that require full distributional knowledge of \emph{individual
rewards}.  Overall, we expand the scope of adaptive experimentation to
settings standard methods struggle with, involving limited adaptivity, low
signal-to-noise ratio, and unknown reward distributions.

}


\KEYWORDS{adaptive experimentation, A/B testing, experimental design}

\maketitle

%


\else

\documentclass[11pt]{article}
\usepackage[numbers]{natbib}
\usepackage{./macros/packages}
\usepackage{./macros/formatting}
\usepackage{./macros/statistics-macros}
\usepackage{multirow}


\begin{document}

\abovedisplayskip=8pt plus0pt minus3pt
\belowdisplayskip=8pt plus0pt minus3pt


\begin{center}
  {\huge Adaptive Experimentation at Scale: \\ A Computational Framework for Flexible
  Batches } \\
  \vspace{.5cm} {\Large Ethan Che ~~~ Hongseok Namkoong} \\
  \vspace{.2cm}
  {\large Decision, Risk, and Operations Division, Columbia Business School} \\
  \vspace{.2cm}
  \texttt{\{eche25, namkoong\}@gsb.columbia.edu}
\end{center}


\begin{abstract}%
  
\end{abstract}

\fi

\section{Introduction}
\label{section:introduction}

Experimentation is the basis of scientific decision-making for medical
treatments, engineering solutions, policy-making, and business products
alike. Since experimenting is typically expensive or risky (e.g., clinical
trials), the cost of collecting data poses a central operational
constraint. Simultaneously, as policy interventions and engineering solutions
become more sophisticated, modern experiments increasingly involve many
treatment arms. When the differences in average rewards across arms (average
treatment effects) are small relative to the sample size, statistical power is
of fundamental concern~\citep{KohaviLoSoHe09, ButtonIoMoNoFlRoMu13,
  CziborJiLi19}. Even for online platforms that can automatically deploy
experiments to millions to billions of users, typical A/B tests are
underpowered as they involve incremental changes to a product that have a
small relative impact on key business metrics such as revenue or user
satisfaction~\citep{KohaviLoSoHe09, KohaviDeFrLoWaXu12,
  KohaviDeFrWaXuPo13}. When there is interference across individuals,
treatments may be randomized over entire markets or geographic regions
severely limiting statistical power~\citep{ImbensRu15}.

Adaptive allocation of measurement effort can significantly improve
statistical power and allow reliable identification of the optimal
decision/treatment.  Accordingly, adaptive methods---dubbed pure-exploration
multi-armed bandit (MAB) algorithms---have received tremendous attention since
the foundational works of Thompson, Chernoff, Robbins, and
Lai~\citep{Thompson33, Chernoff59, Robbins52, LaiRo85}. Most of these
algorithms are specifically designed to enjoy strong theoretical guarantees as
the number of reallocation epochs grows to infinity~\citep{BubeckCe12,
  LattimoreSz19, Slivkins19}. However, standard frameworks cannot model
typical experimentation paradigms where adaptive reallocation incurs high
operational cost. Although a universal assumption in the MAB literature,
unit-level continual reallocation of sampling effort is often expensive or
infeasible due to organizational cost and delayed feedback. Even in online
platforms with advanced experimentation infrastructures, engineering
difficulties and lack of organizational incentives deter continual
reallocation at the unit level~\citep{SculleyEtAl15, AgarwalEtAl16,
  BakshyEtAl18, NamkoongDaBa20}.

Due to the challenges associated with reallocating measurement effort, typical
real-world adaptive experiments employ a few reallocation epochs in which
outcomes are measured for many units in parallel (``batches'')
\citep{BakshyEtAl18, OfferWestortCoGr20, JobjornssonScMuFr22, ChowCh08,
  KasySa21, EspositoSa22}.
Motivated by these operational considerations, we develop and analyze batched
adaptive experimentation methods tailored to a handful of reallocation
opportunities. Our main conceptual contribution is the formulation of a
dynamic program that allows designing adaptive methods that are near-optimal
for the fixed number of reallocation epochs. Algorithms designed from our
framework can flexibly handle any batch size, and are automatically tailored
to the instance-specific measurement noise and statistical
power. Specifically, we use a normal approximation for aggregate rewards to
formulate a \emph{Gaussian sequential experiment} where each experiment epoch
consists of draws from a Gaussian distribution for each arm. The dynamic
program solves for the best adaptive allocation where noisy Gaussian draws
become more accurate with increased sampling allocation (see
Figure~\ref{fig:arm-diagram}).

\begin{figure}[t]

  \centering
  \includegraphics[height=5cm]{./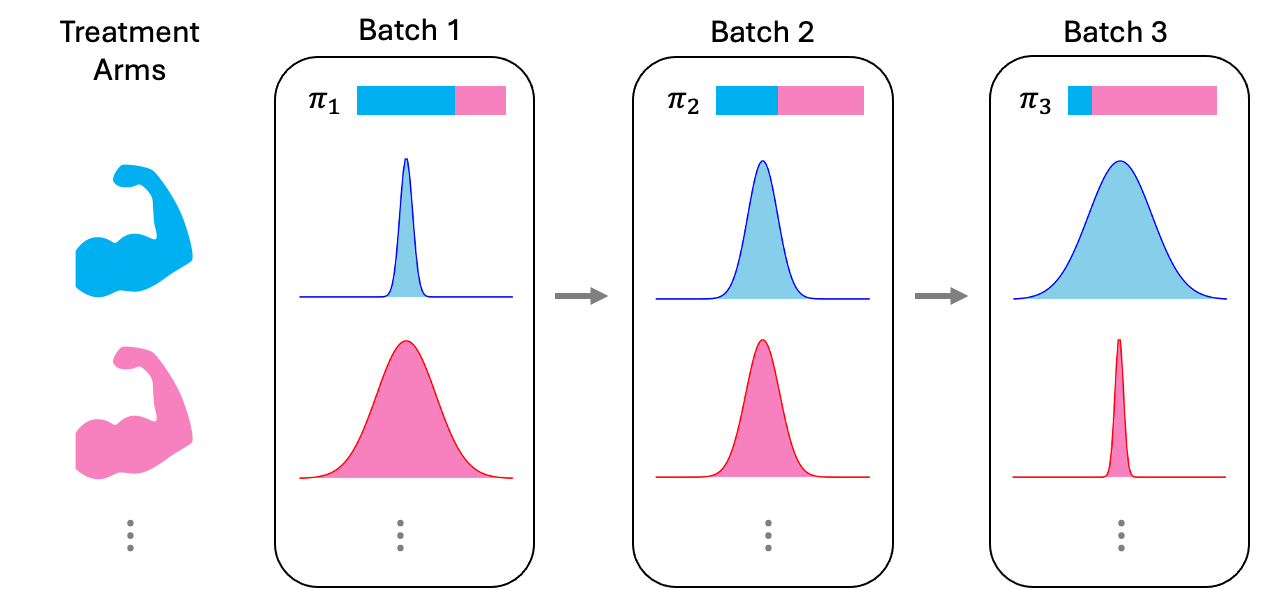}%
  \vspace{0.4cm}
  \caption{\label{fig:arm-diagram} Gaussian sequential experiment with $T=3$
    reallocation epochs. Bar visualizes sampling allocations at each epoch
    and bell curves depict normal approximations of the aggregated reward (sample mean)
    distribution.
  }
\end{figure}

Although we use (frequentist) central limit-based normal approximations to
derive the Gaussian sequential experiment, our proposed dynamic programming
framework solves for adaptive allocations using Bayesian approaches. Unlike
standard Bayesian bandit formulations (e.g.,~\citep{FrazierPoDa08, KasySa21,
  Russo20}) that require distributional knowledge of individual rewards, we
only use a prior over the \emph{average rewards} and our likelihood functions
over average rewards are Gaussian from the CLT.  Our
formulation thus allows leveraging prior information on average rewards
constructed using the rich reservoir of previous experiments, but retains some
of the advantages of model-free frequentist methods. Computationally, sampling
allocations derived from our Gaussian sequential experiment can be computed
offline after each epoch, in contrast to typical Bayesian bandit algorithms
that require real-time posterior inference (e.g., top-two Thompson
sampling~\citep{Russo20}).

Our formulation provides a computation-driven framework for algorithm
design. Despite continuous and high-dimensional state/action spaces, our
limiting dynamic program offers a novel computational advantage: the Gaussian 
sequential experiment provides a \emph{smoothed} model of partial feedback
for which sample paths are fully \emph{differentiable} with respect to the sampling
probabilities chosen by the experimenter.  By simulating a trajectories of the
experiment, we can calculate performance metrics (e.g., cumulative or simple
regret) and compute policy gradients through modern auto-differentiation
softwares such as Tensorflow~\citep{TensorFlow15} or PyTorch~\citep{PyTorch19}.
These gradients can be used to directly optimize a planning problem over
allocations, e.g., policy gradient or policy iteration. We demonstrate our
framework by providing efficient implementations of approximate dynamic
programming (ADP) and reinforcement learning (RL) methods, and evaluating them
through \emph{empirical benchmarking}. Our algorithm development and
evaluation approach is in stark contrast to the bandit literature that largely
operates on mathematical insight and researcher ingenuity~\citep{MinMoRu2020}.


Our empirical analysis highlights a simple yet effective heuristic,
$\algofull$, which iteratively solves for the optimal static allocation that
only uses currently available information. Similar to model predictive control
(MPC), the allocation is used in the corresponding period and newly collected
observations are used to update the planning problem in the next
period. Through extensive empirical validation, we demonstrate $\algo$
consistently provides major gains in decision-making performance over a rich
set of benchmark methods. Out of 640 problem instances we consider---varying
across number of reallocation opportunities, number of arms, reward
distributions, priors, etc.---$\algo$ achieves the best performance in 493
(77\%) of them despite its simplicity.  As a summary of our findings,
Figure~\ref{fig:bern-bsr-batch} compares the performance of $\algo$ against
standard batch policies such as oracle Thompson sampling-based policies that
have full distributional knowledge of individual rewards.  Despite relying on
Gaussian approximations, $\algo$ provides significant power gains in hard
instances with high measurement noise.  Overall, our approach expands the
scope of adaptive experimentation to settings standard adaptive policies
struggle with, involving few reallocation epochs, low signal-to-noise ratio,
and unknown reward distributions.

\begin{figure}[t]
  
  \centering \hspace{-.9cm} \subfloat[\centering Batch size $= 100$
samples.]{\label{fig:bern_bsr_100}{\includegraphics[height=5cm]{./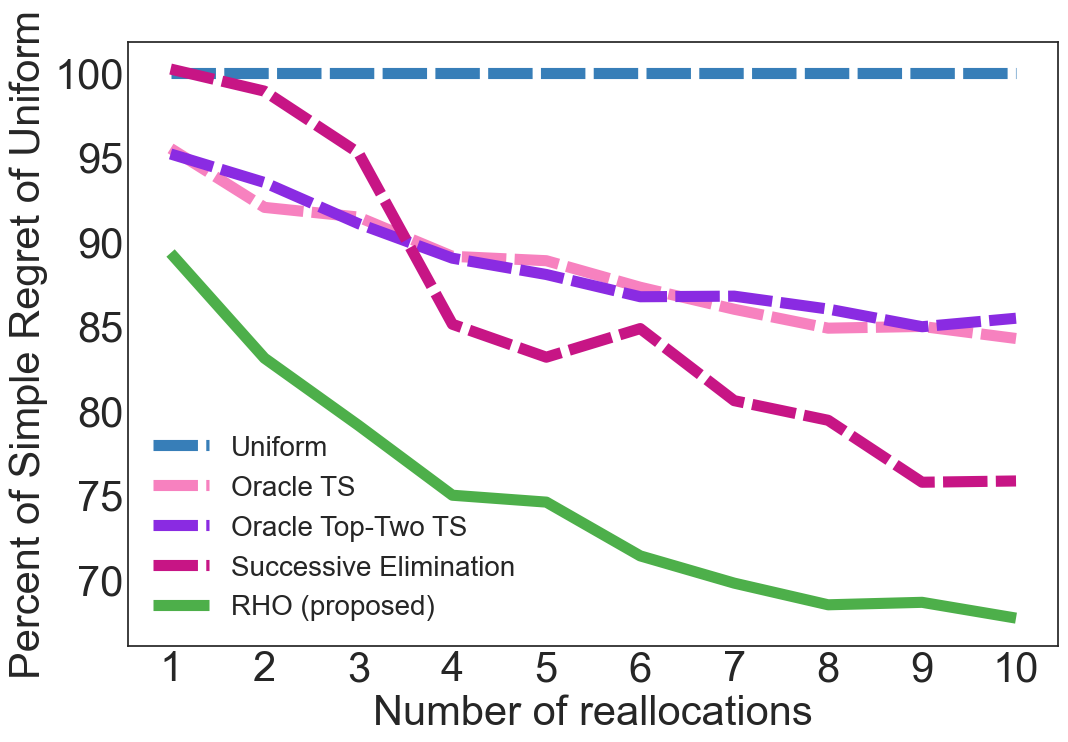}}
} \hspace{.3cm} \subfloat[\centering Batch size $= 10,000$
samples.]{\label{fig:bern_bsr_10000}{\includegraphics[height=5cm]{./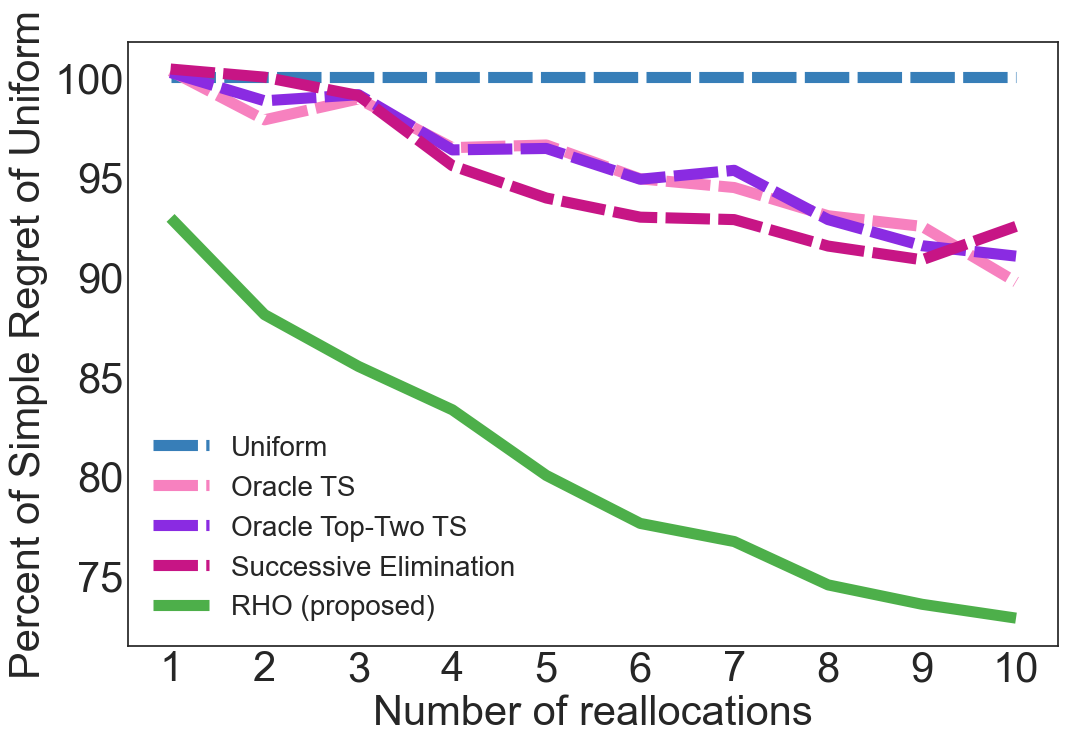}
}}

  \caption{\label{fig:bern-bsr-batch} Relative gains over the uniform
allocation as measured by the Bayes simple regret for the finite batch problem
with $K = 100$ treatment arms.  Individual rewards are Bernoulli with a
$\mbox{Beta}$ prior. Despite relying on normal approximation of aggregate
rewards over a batch, $\algo$ delivers substantial performance gains even over
oracle algorithms that know the true reward model.  These gains persist even
for small batch sizes in Figure~\ref{fig:bern_bsr_100} where batch size is
equal to the number of arms.}
\end{figure}

\paragraph{Paper outline} We begin our discussion by showing that the Gaussian
sequential experiment is a valid approximation as the batch size becomes large
(Section~\ref{section:formulation}). We study the admissible regime where
differences in average rewards scale as $1/\sqrt{n}$, where $n$ is the batch
size at each epoch.  The scaling is standard in formalizing statistical power
in inferential scenarios~\citep{VanDerVaart98, LeCamYa00}. By extending this
conceptual framework to sequential experiments, we show that normal
approximations universal in statistical inference is also useful for designing
adaptive experimentation methods. In Section~\ref{section:algorithms}, we
observe the Gaussian sequential experiment can be represented by a Markov
decision process over the experimenter's beliefs on average rewards.  At every
reallocation epoch, state transitions in the MDP are governed by posterior
updates based on new observations.

We illustrate our computation-driven paradigm to algorithm development in
Section~\ref{section:algorithms-adp} and~\ref{section:experiments}.  Our MDP
formulation gives rise to several ADP and RL-based policies, which we detail
in Section~\ref{section:adp}.  Through empirical benchmarking, we find that a
simple MPC-style policy, $\algo$, consistently outperforms even carefully
tuned RL methods (Section~\ref{section:benchmarking}).  Motivated by its
strong empirical performance, we provide basic theoretical insights on $\algo$
in Section~\ref{section:rho-analysis}. We show $\algo$ is optimal among those
that do not optimize for potential future feedback and in particular, is
always better than the uniform allocation---the de facto standard in practice.
By noting that $\algo$ can be seen as a dynamic extension of the single-batch
problem analyzed by~\citet{FrazierPo10}, we also characterize how $\algo$
calibrates exploration when the remaining sampling budget is large.  In
Section~\ref{section:experiments}, we perform a thorough empirical comparison
with standard batched bandit policies under diverse settings, e.g., low vs
high measurement noise, large vs small batches, flat vs concentrated priors.
We summarize our empirical benchmarking in the interactive
app~\eqref{eqn:streamlit}.

We defer a full discussion of the literature to the end of the paper to
maximize clarity.  Our asymptotic formulation and resulting algorithms are
new, but intimately connected to the vast body of work on adaptive allocation
of measurement effort. In Section~\ref{section:discussion}, we situate the
current paper across several fields, such as pure-exploration MABs, batched
MABs, ranking \& selection in simulation optimization, diffusion limits for
MABs, as well as various works studying Gaussian environments related to the
one we outline in Figure~\ref{fig:arm-diagram}.



\section{Gaussian Sequential Experiment}
\label{section:formulation}


Our goal is to select the best treatment arm out of $\numarm$ alternatives,
using a small (known) number of reallocation epochs ($\horizon$).  Each
experimentation epoch $t=0,...,\horizon - 1$ involves a batch of $b_{t}n$
samples, where $b_{t}>0$ is a fixed and known constant and $n$ is a scaling
parameter. The constants $b_t$ can be thought of as the length of time between
experiment epochs, which may vary across epochs $t$. Solving for the optimal
adaptive allocation is challenging as it involves a dynamic program over a
combinatorially large action space that depends on unknown reward
distributions. Any solution to such ``finite batch'' problem will depend
heavily on the batch size and need to be re-solved if the batch size changes,
that is if more/less samples are collected than expected.

To circumvent such complexity, we formulate an asymptotic approximation of the
adaptive experimentation problem using normal approximation of aggregate
rewards. The \emph{sequential Gaussian experiment} we derive in this section
provides tractable policies that can deal with any \emph{flexible batch
  size}. To derive this CLT-based large batch limit, we must first choose a
natural scaling for the average rewards. If the gap in the average reward of
each arm is $\gg1/\sqrt{n}$, adaptive experimentation is unnecessary as the
best arm can be found after only a single epoch. Conversely, if the gaps are
$\ll1/\sqrt{n}$, then we cannot reliably learn even after many experimentation
epochs; one cannot improve upon the uniform allocation (a.k.a. randomized
design or A/B testing). We thus focus on the admissible regime where the gap
between average rewards are $\Theta(1/\sqrt{n})$.

Using this scaling for average rewards, we observe i.i.d. rewards for each
unit allocated to a treatment arm $a$
\begin{equation}
  \label{eqn:rewards}
  R_a = \frac{h_a}{\sqrt{n}} + \epsilon_a,
\end{equation}
where $h_a$ is an unknown ``local parameter'' that determines the difference
in average rewards across arms. Without loss of generality, we set the
baseline reward to zero; since we assume the noise $\epsilon_a$ has mean zero,
henceforth we abuse notation and refer to $h$ as both the average rewards and
the gaps in average rewards.  We assume that the variance of the noise
$\text{Var}(\epsilon_{a})=s_a^{2}$ is known and constant. In particular,
$s_a^2$ does not scale with $n$, so it is crucial to sample arms many times to
discern differences between their means $h_a/\sqrt{n}$. Although reward
variances are typically unknown, they can be estimated from a small initial
batch in practice; empirically, the policies we consider are robust to
estimation error in $s_a^2$ and a rough estimate suffices (see Section~\ref{section:unknown-s2}).

\subsection{Gaussian sequential experiment as a limit adaptive experiment}
\label{section:formulation-gse}

Our goal is to use the information collected until the beginning of epoch $t$
to select an adaptive allocation (policy) $\pi_{t} \in\Delta_{\numarm}$, the
fraction of samples allocated to each of the $\numarm$ treatment arms. Let
$\{R_{a, j}^t\}_{a=1}^\numarm$ denote the \emph{potential rewards} for unit
$j \in [b_t n]$ at time $t$.  We use $\xi_{a, j}^t$ to denote an indicator for
whether arm $a$ was pulled for unit $j$ at time $t$.  Our starting point is
the observation that the (scaled) estimator for average rewards converges in
distribution from the central limit theorem
\begin{equation}
  \label{eqn:limit}
  \sqrt{n} \bar{R}_{t, a}^{n} \defeq
  \frac{1}{b_{t}\sqrt{n}} \sum_{j=1}^{b_{t}n}\xi_{a,j}^{t} R^{t}_{a,j}
  \cd N\left(\pi_{t,a}h_{a},\pi_{t,a}s_a^{2}\right).
\end{equation}
From the normal approximation~\eqref{eqn:limit},
$\sqrt{n} \bar{R}_{t, a}^n / \pi_{t, a}$ can be seen as an approximate draw
from the distribution $N(h_a, s_a^2/\pi_{t, a})$, giving a noisy observation
of the average reward $h_a$. The allocation $\pi_{t}$ controls the effective
sample size and the ability to distinguish signal from noise,
a.k.a. statistical power.

Using successive normal approximations~\eqref{eqn:limit} at each epoch, we
arrive at a \emph{Gaussian sequential experiment} that provides an asymptotic
approximation to the adaptive experimentation problem.
\begin{definition}
  \label{def:gse}
  A Gaussian sequential experiment is characterized by observations
  $G_0, \ldots, G_{T-1}$ with conditional distributions
  \begin{equation*}
    G_t | G_0, \ldots, G_{t-1} \sim
    N\left(\left\{\pi_{t,a}h_{a}\right\}_{a=1}^\numarm,
    \diag\left(\left\{\frac{\pi_{t, a}s_a^{2}}{b_t }\right\}_{a=1}^\numarm\right)\right).
  \end{equation*}
\end{definition}
\noindent In this asymptotic experiment, the experimenter chooses $\pi_t$ at
each epoch $t$ and observes an independent Gaussian measurement distributed as
$N\left(\pi_{t,a}h_a,\pi_{t, a} s_a^{2} \right)$ for each arm $a$.  
We use the asymptotic Gaussian sequential experiment as an approximation to the
original batched adaptive epochs and derive near-optimal adaptive
experimentation methods for this asymptotic problem.

Building on the observation~\eqref{eqn:limit}, our first theoretical result
shows that the Gaussian sequential experiment (Definition~\ref{def:gse})
provides an accurate approximation in the large batch limit $n \to \infty$.
Our limit sequential experiment extends classical local asymptotic normality
results in statistics~\cite{VanDerVaart98, LeCamYa00}. Our result relies on
the following basic conditions on the reward and policy $\pi_t$'s.
\begin{assumption}
  \label{assumption:reward}
\begin{enumerate}

\item  \label{item:ignorability}
  \emph{Ignorability}: rewards are independent from sampling decisions,
  conditional on past observations: \ifdefined\msom
$\xi^{t+1}_j \perp R^{t+1}_j \mid \bar{R}^n_0, \ldots, \bar{R}^n_t.$
\else
\begin{equation*}
  \xi^{t+1}_j \perp R^{t+1}_j \mid \bar{R}^n_0, \ldots, \bar{R}^n_t.
\end{equation*}
\fi

\item \label{item:moment} \emph{Moment restriction}: there exists $C>0$
  such that $\E\norm{\epsilon}_{2}^{4}\leq C$.
  \end{enumerate}
\end{assumption}
\noindent For the sequential Gaussian experiment to provide a valid
approximation of the objective~\eqref{eqn:bsr}, we further need regularity conditions
on the policy.
\begin{assumption}
  \label{assumption:policy}
  \begin{enumerate}
  \item \label{item:sufficiency} \emph{Sufficiency}: the policy depends on the
    data only through aggregate rewards: \ifdefined\msom
$\pi_{t}=\pi_{t}\left(\sqrt{n} \bar{R}^{n}_{0},\ldots,\sqrt{n}
\bar{R}^{n}_{t-1}\right)$ for all $t= 0, \cdots, T-1$.
\else
\begin{equation*}
  \pi_{t}=\pi_{t}\left(\sqrt{n} \bar{R}^{n}_{0},\ldots,\sqrt{n}
  \bar{R}^{n}_{t-1}\right)~~~\mbox{for all}~t= 0, \cdots, T-1
  \end{equation*}
\fi
  \item \label{item:continuity} \emph{Continuity}: discontinuity points of
    $\pi_{t}$ are measure zero with respect to $(G_{0},...,G_{t-1})$ for all
    $t$
  \end{enumerate}
\end{assumption}

\noindent Assumption~\ref{assumption:policy}\ref{item:sufficiency} is
unavoidable in our setup as we restrict attention to coarser information
represented by the aggregate rewards~\eqref{eqn:limit}.  Note however, that we
do not require sampling probabilities to be bounded away from zero, as is commonly
assumed for inferential procedures.

We are now ready to give our main asymptotic result, which we prove in
Section~\ref{section:proof-limit}. The key difficulty of this result is
to show convergence of the batch sample means to their Gaussian limit under
sampling probabilities that are themselves stochastic, 
as they are selected by the policy which is influenced by previous measurements.
The sampling probabilities are arbitrary and are allowed to even be zero, so we 
require showing uniform convergence across all possible sampling probabilities.
\begin{theorem}
  \label{theorem:limit}
  Let Assumptions~\ref{assumption:reward},~\ref{assumption:policy} hold. Then,
  the Gaussian sequential experiment in Definition~\ref{def:gse} provides a
  valid asymptotic approximation as $n \to \infty$
  \begin{equation}
    \label{eqn:weak-convergence}
    (\sqrt{n} \bar{R}^{n}_{0},\ldots,\sqrt{n} \bar{R}^{n}_{T-1}) \cd
    (G_{0},...,G_{T-1}).
  \end{equation}
\end{theorem}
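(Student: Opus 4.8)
The plan is to establish the joint weak convergence~\eqref{eqn:weak-convergence} by induction on the horizon, peeling off one epoch at a time through a conditioning argument on characteristic functions. Write $W^n_t \defeq \sqrt{n}\,\bar{R}^n_t \in \mathbb{R}^{\numarm}$ and let $\mathcal{F}^n_t = \sigma(W^n_0,\ldots,W^n_t)$; by the sufficiency assumption, $\pi_t$ is a measurable function of $(W^n_0,\ldots,W^n_{t-1})$ and hence $\mathcal{F}^n_{t-1}$-measurable. By L\'evy's continuity theorem it suffices to show that the joint characteristic function $\E\exp\big(i\sum_{t=0}^{T-1}\langle\lambda_t,W^n_t\rangle\big)$ converges to that of $(G_0,\ldots,G_{T-1})$ for every $(\lambda_0,\ldots,\lambda_{T-1})$. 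Conditioning on $\mathcal{F}^n_{T-2}$ and applying the tower property isolates the contribution of the final epoch, reducing the task to (i) a conditional central limit theorem for $W^n_{T-1}$ given the past, and (ii) propagating the resulting conditional characteristic function through the inductive hypothesis for the first $T-1$ epochs.

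The heart of the argument is a single-epoch conditional central limit theorem that holds \emph{uniformly} over the realized sampling probabilities. Condition on $\mathcal{F}^n_{t-1}$: the policy fixes the number of samples $m_a \defeq b_t n\,\pi_{t,a}$ routed to arm $a$, and by the ignorability assumption these carry $m_a$ i.i.d. copies of $\epsilon_a$, independent across arms. Decomposing $W^n_{t,a} = h_a m_a/(b_t n) + (b_t\sqrt{n})^{-1}\sum_{j=1}^{m_a}\epsilon_{a,j}$, the mean term equals $\pi_{t,a}h_a$, while the conditional characteristic function of the noise term is $\psi_a(\lambda_a/(b_t\sqrt{n}))^{m_a}$ with $\psi_a(u)\defeq\E[e^{iu\epsilon_a}]$. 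A second-order expansion gives $\psi_a(u) = 1 - \tfrac12 s_a^2 u^2 + r_a(u)$ with $|r_a(u)| \le c\,\E|\epsilon_a|^3\,|u|^3$, and $\E|\epsilon_a|^3 < \infty$ by the moment restriction. The central estimate is then
\[
  \sup_{0 \le m \le b_t n}\left|\psi_a\!\left(\frac{\lambda_a}{b_t\sqrt{n}}\right)^{m} - \exp\!\left(-\frac{\lambda_a^2 s_a^2\, m}{2 b_t^2 n}\right)\right| \longrightarrow 0,
\]
a convergence that is uniform over the number of samples $m$ an arm may receive.

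To obtain this uniform bound I would use the telescoping inequality $|z^m - w^m| \le m\max(|z|,|w|)^{m-1}|z-w|$ with $z = \psi_a(u)$, $w = 1 - \tfrac12 s_a^2 u^2$, and $u = \lambda_a/(b_t\sqrt{n})$: since $|z|,|w| \le 1$ for large $n$ and $|z-w| = |r_a(u)| = O(n^{-3/2})$, the bound is $O(m\,n^{-3/2}) \le O(b_t n\cdot n^{-3/2}) = O(n^{-1/2})$, uniformly in $m$; a second elementary estimate replaces $w^m$ by the exponential at cost $O\big(m\,(s_a^2 u^2)^2\big) = O(n^{-1})$. Taking products over arms yields, uniformly over $\pi_t \in \Delta_{\numarm}$, a conditional characteristic function $f_n(W^n_0,\ldots,W^n_{t-1})$ that converges uniformly to $f(W^n_0,\ldots,W^n_{t-1})$, where
\[
  f(w_0,\ldots,w_{t-1}) = \prod_{a=1}^{\numarm}\exp\!\left(i\lambda_{t,a}\pi_{t,a}h_a - \frac{\lambda_{t,a}^2\,\pi_{t,a}s_a^2}{2 b_t}\right)
\]
is exactly the conditional characteristic function of $G_t$ given $G_0,\ldots,G_{t-1}$.

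Finally, feeding this into the induction relies on the elementary fact that if $X_n \cd X$, $f_n \to f$ uniformly, and $f$ is bounded and continuous on a set of $X$-probability one, then $\E[h(X_n)f_n(X_n)] \to \E[h(X)f(X)]$ for every bounded continuous $h$: the discrepancy splits into a term bounded by $\|h\|_\infty\sup|f_n - f|$ and a remainder handled by the continuous mapping theorem. The continuity assumption ensures the discontinuities of $\pi_t$, hence of $f$, are null under the law of $(G_0,\ldots,G_{t-1})$, supplying the required almost-everywhere continuity; taking $h = \exp(i\sum_{s<t}\langle\lambda_s,\cdot\rangle)$ then advances the induction by one epoch, the base case $t=0$ being the ordinary unconditional CLT since $\pi_0$ is deterministic. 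The main obstacle is precisely the uniformity flagged before the statement: because $\pi_t$ is random, data-dependent, and may place vanishing (or exactly zero) mass on an arm, a fixed-$\pi$ conditional CLT does not suffice---when $\pi_{t,a}\to 0$ the effective sample size $m_a$ need not grow, so one cannot argue arm-by-arm along a single sequence. The moment restriction is what rescues uniformity, controlling the Taylor remainder so that the accumulated error $O(m_a n^{-3/2})$ vanishes uniformly over $m_a \in \{0,\ldots,b_t n\}$; the degenerate case $\pi_{t,a}=0$, where $W^n_{t,a}\equiv 0$, matches the degenerate Gaussian limit automatically.
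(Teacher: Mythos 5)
Your route is genuinely different from the paper's, and for the qualitative statement it is viable. The paper proves Theorem~\ref{theorem:limit} by induction over epochs using bounded-Lipschitz test functions, a multivariate Stein exchangeable-pair bound, a Gaussian-smoothing step, and a KL/Pinsker comparison of covariances; this machinery is what delivers the quantitative rates in Corollaries~\ref{cor:proof-limit-rate} and~\ref{cor:root-n-rate}. Your L\'evy-continuity induction discards any rate but attacks the same central difficulty---uniformity over data-dependent, possibly vanishing sampling probabilities---with the elementary observation that telescoping bounds on characteristic functions are automatically uniform over the sample count $m \in \{0,\ldots,b_t n\}$, degenerate arms included, and your final lemma (``$X_n \cd X$, $f_n \to f$ uniformly, $f$ a.s.\ continuous under the limit law $\Rightarrow$ $\E[h(X_n)f_n(X_n)] \to \E[h(X)f(X)]$'') is a correct and clean way to advance the induction.

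There is, however, one concrete gap: you assert that conditional on the past, ``the policy fixes the number of samples $m_a \defeq b_t n\,\pi_{t,a}$ routed to arm $a$.'' That is not the paper's sampling model. Each of the $b_t n$ units is assigned to an arm \emph{at random} according to $\pi_t$, so the per-arm counts $N_a = \sum_j \xi^t_{a,j}$ are multinomial: random, correlated across arms (they sum to $b_t n$), and $b_t n \pi_{t,a}$ need not even be an integer. This is precisely the feature the paper's proof works to control---the pre-limit conditional covariance~\eqref{eqn:cov-n} carries the correction terms $\frac{1}{b_{t+1}n}\mathrm{diag}(\pi_{t+1,a}h_a^2) - \frac{1}{b_{t+1}n}(\pi_{t+1}h)(\pi_{t+1}h)^{\top}$ arising from multinomial assignment, and Lemma~\ref{lemma:kl-cov} is devoted to showing they are asymptotically negligible. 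As written, your conditional decomposition
\begin{equation*}
  W^n_{t,a} = \frac{h_a m_a}{b_t n} + \frac{1}{b_t\sqrt{n}}\sum_{j=1}^{m_a}\epsilon_{a,j}
\end{equation*}
and the resulting product form of the conditional characteristic function describe a different (deterministic-count) experiment, so both the randomness of the mean term $h_a N_a/(b_t n)$ and the cross-arm count correlations go unaddressed. The repair stays inside your framework: condition additionally on the realized counts $N = (N_a)_a$---your uniform-in-$m$ estimate applies realization by realization since $N_a \le b_t n$ almost surely---then average over the multinomial law, using $\E\left|N_a/(b_t n) - \pi_{t,a}\right| \le \frac{1}{2}(b_t n)^{-1/2}$ uniformly in $\pi_t$, together with Lipschitz continuity of $x \mapsto \exp\left(i\lambda_a h_a x - s_a^2\lambda_a^2 x/(2b_t)\right)$ on $[0,1]$, to replace $N_a/(b_t n)$ by $\pi_{t,a}$ in both the mean and variance exponents. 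With that insertion your induction closes; without it, the proof does not cover the stochastic-assignment structure that the theorem's setting actually contains.
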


\ifdefined\msom
\else
\noindent Our proof also provides a bound for the rate of convergence in
Theorem~\ref{theorem:limit}.  To metrize weak convergence, consider the
bounded Lipschitz distance $d_{\text{BL}}$
\begin{equation*}
  d_{\text{BL}}(\mu, \nu) \defeq \sup\left\{
    |\mathbb{E}_{R \sim \mu}\bar{f}(R)-\mathbb{E}_{R \sim \nu}\bar{f}(R)|:
    \bar{f}\in\text{Lip}(\mathbb{R}^{\numarm}),
    \sup_{x,y\in\mathbb{R}^{\numarm}}|\bar{f}(x)-\bar{f}(y)|\leq1\right\}.
\end{equation*}
Since we measure convergence through bounded Lipschitz test functions, we require smoothness of
the policy with respect to observations
\begin{equation}
  \label{eqn:lip-policy}
  \bar{L} := \max_{0\leq t \leq T -1} \max_{a} \max \{ \norm{\pi_{t,a}}_{\textup{Lip}}, \norm{\sqrt{\pi_{t,a}}}_{\textup{Lip}} \}<\infty.
\end{equation}
\vspace{-.4cm}
\begin{corollary}
  \label{cor:proof-limit-rate}
  Let Assumptions~\ref{assumption:reward},~\ref{assumption:policy} and the bound~\eqref{eqn:lip-policy} hold.
  Let $M := \left(1 + \bar{L} ( s_*^2/b_* \numarm^{1/2} + \max_{a} |h_{a}|) \right)$ with
  $s_{*}^{2} := \max_{a} s_{a}^{2}$, $b_{*} := \min_{0\leq t\leq T-1} b_{t}$.
 Then,
 \begin{equation}
   \label{eqn:bl-rate}
   d_{\textup{BL}}((\sqrt{n} \bar{R}^{n}_{0},\ldots,\sqrt{n} \bar{R}^{n}_{T-1}), (G_{0},\ldots,G_{T-1})) 
   \leq C M^{T} n^{-1/6}
  \end{equation}
  where $C\in (0,\infty)$ is a constant that depends polynomially 
  on $K$, $M$, $h$, $s_{*}^{2}$, $\E \norm{\epsilon}_{2}^{3}$, $\E \norm{\epsilon}_{2}^{4}$
  and $b_{*}^{-1}$.
\end{corollary}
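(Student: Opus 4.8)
The plan is to prove the quantitative bound~\eqref{eqn:bl-rate} directly; the weak-convergence statement~\eqref{eqn:weak-convergence} of Theorem~\ref{theorem:limit} then follows because $d_{\textup{BL}}$ metrizes weak convergence (and, under only the a.e.-continuity of Assumption~\ref{assumption:policy}\ref{item:continuity}, by approximating such policies with Lipschitz ones and passing to the limit via portmanteau). I would fix a test function $\bar f$ on $\mathbb{R}^{K T}$ that is $1$-Lipschitz with oscillation at most $1$ and bound $|\E \bar f(\sqrt n \bar R^n_{0:T-1}) - \E \bar f(G_{0:T-1})|$ by a hybrid (Lindeberg) telescoping over epochs: for $k = 0,\dots,T$ let $\nu_k$ be the law of the trajectory in which epochs $0,\dots,k-1$ are drawn from the Gaussian sequential experiment and epochs $k,\dots,T-1$ from the true batched rewards, with each policy $\pi_t$ applied to whatever (hybrid) history precedes it. Since this generation is well defined for any past values, $\nu_0$ and $\nu_T$ are the two laws of interest, and the triangle inequality gives $d_{\textup{BL}} \le \sum_{k=0}^{T-1} d_{\textup{BL}}(\nu_k, \nu_{k+1})$, so it suffices to bound the cost of replacing a single epoch's batched observation by its Gaussian surrogate, conditional on a common past.

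For the single-epoch replacement I would condition on the history, which by Assumption~\ref{assumption:policy}\ref{item:sufficiency} fixes $\pi_k$, and then run a classical Lindeberg swap over the $b_k n$ units of epoch $k$, replacing the per-unit contributions $\xi^k_{a,j} R^k_{a,j}/(b_k\sqrt n)$ one at a time by Gaussian increments with matched first two conditional moments. Because $\xi^k_{a,j}$ is an indicator, $\E|\xi^k_{a,j}\epsilon^k_{a,j}|^3 = \pi_{k,a}\,\E|\epsilon_a|^3$, so the aggregate Lyapunov term for arm $a$ is $\rho_3 \asymp \pi_{k,a}\,\E|\epsilon_a|^3/(b_k^2\sqrt n) \le \E|\epsilon_a|^3/(b_k^2\sqrt n)$ \emph{uniformly over all} $\pi_{k,a}\in[0,1]$; this is precisely the point that renders vanishing sampling probabilities harmless, since one never divides by $\pi_{k,a}$. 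As $\bar f$ is only Lipschitz the swap cannot be applied directly, so I would first convolve $\bar f$ with a Gaussian kernel of bandwidth $\eta$, incurring a smoothing error $O(\eta)$ while gaining $\|D^3 \bar f_\eta\|_\infty = O(\eta^{-2})$. The swap then costs $O(\eta^{-2}\rho_3) = O(\eta^{-2} n^{-1/2})$, and optimizing $\eta \asymp n^{-1/6}$ balances the two contributions to yield the per-epoch rate $n^{-1/6}$.

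Finally I would track how these single-epoch errors propagate through the downstream policies. Replacing epoch $k$ perturbs the history fed to epochs $t>k$, and since each later policy enters only through its conditional mean $\pi_{t,a} h_a$ and standard deviation $\sqrt{\pi_{t,a}}\, s_a/\sqrt{b_t}$, the relevant sensitivity is governed by $\|\pi_{t,a}\|_{\textup{Lip}}$ and $\|\sqrt{\pi_{t,a}}\|_{\textup{Lip}}$. It is exactly the Lipschitzness of $\sqrt{\pi_{t,a}}$ in~\eqref{eqn:lip-policy} that keeps the standard-deviation mismatch \emph{linear} in the upstream perturbation, rather than the $\sqrt{\,\cdot\,}$ loss one would otherwise suffer when $\pi_{t,a}$ is small. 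Each downstream epoch therefore amplifies the error by a factor controlled by $M = 1 + \bar L(s_*^2/b_*\,K^{1/2} + \max_a|h_a|)$, and accumulating over at most $T$ epochs produces the geometric factor $M^{T}$, giving the claimed $C M^{T} n^{-1/6}$ with $C$ polynomial in the stated quantities.

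I expect the main obstacle to be the interaction between adaptivity and uniformity: the sampling probabilities are random, data-dependent, and may be arbitrarily small or exactly zero, so one cannot invoke an off-the-shelf normalized Berry--Esseen bound whose constant scales like $1/\sqrt{\pi_{k,a} n}$ and blows up. The crux is to phrase the entire argument in terms of the unnormalized Lyapunov term $\rho_3 \propto \pi_{k,a}/\sqrt n$, which is uniformly small, and to pair this with the $\sqrt{\pi}$-Lipschitz control so that the telescoping across the adaptively chosen epochs closes with only the geometric $M^{T}$ amplification.
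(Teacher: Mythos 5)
Your single-epoch ingredient is sound and essentially parallels the paper's: where you run a Lindeberg swap with an unnormalized Lyapunov term plus Gaussian smoothing of the test function, the paper uses Meckes' exchangeable-pair Stein method (Proposition~\ref{prop:clt-sample-mean}) plus the same smoothing trick (Lemma~\ref{lemma:convolution}); both give the per-epoch $n^{-1/6}$ after optimizing the bandwidth, and both exploit exactly the point you identify, namely that the unnormalized third-moment term scales like $\pi_{k,a}/\sqrt{n}$ so that vanishing sampling probabilities are harmless. One detail you elide: matching the first two \emph{conditional} moments of the per-unit contributions produces a Gaussian whose covariance is $\Sigma_{n,t+1}$, which contains $O(1/n)$ multinomial cross-arm terms and is not the diagonal covariance $\Sigma_{t+1}$ of the limit experiment; the paper closes this gap with a separate KL/Pinsker comparison (Lemma~\ref{lemma:kl-cov}), an $O(n^{-1/2})$ correction you would need as well. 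That part is minor and fixable.

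The genuine gap is the direction of your hybrid telescoping. In your step $\nu_k \to \nu_{k+1}$, all epochs downstream of the swapped one are still generated by the \emph{pre-limit} batched mechanism, so the function you must feed into the single-epoch CLT is $\psi(x_k) = \E\left[ f(\cdot)\,\middle|\, \text{real downstream given } x_k \right]$, and you need $\psi$ to be Lipschitz with constant $M^{T-k-1}$. Your justification---that each later epoch ``enters only through its conditional mean $\pi_{t,a}h_a$ and standard deviation $\sqrt{\pi_{t,a}}\,s_a/\sqrt{b_t}$''---is true of the \emph{Gaussian} kernel, whose conditional law is determined by those two quantities and can be coupled linearly through a shared $Z$ (this is exactly how \eqref{eqn:lip-policy} is used in the paper). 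It is false for the real batched kernel: its conditional law is not determined by its first two moments, and under any coupling of the multinomial assignments the re-assigned units contribute noise that does not cancel, giving a modulus of continuity of order $s_a\sqrt{\|\pi_t(r)-\pi_t(r')\|_{1}/b_t}$ in the history---a square-root loss precisely in the small-$\pi$ regime the result is designed to cover. Repairing this forces you to sandwich every downstream real kernel between its Gaussian surrogate, paying the uniform CLT error at each downstream epoch; once you do that, your argument has been restructured into the paper's, which telescopes in the \emph{opposite} direction: it replaces the last epoch first (via the functions $g_n$ and $g$ in the backward induction of Sections~\ref{section:proof-limit} and~\ref{section:proof-limit-rate}), so that perturbations only ever propagate through Gaussian continuations, where the $M$-Lipschitz bound from \eqref{eqn:lip-policy} is exactly available and the geometric factor $M^{T}$ in \eqref{eqn:bl-rate} follows by summing the resulting series.
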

Unlike other asymptotic normality results for adaptive sampling policies, we
do not assume anything beyond continuity of the sampling probabilities. In particular, 
we do not assume that sampling probabilities are lower bounded, as is typically done
for proving central limit theorems under adaptive sampling. As a
result, we obtain a slower rate of convergence than the standard $n^{-1/2}$
rate expected for the CLT (e.g., Berry-Esseen bound) as the batch size grows
large; a minor modification to our proof gives the usual $O(n^{-1/2})$ rate if
the sampling probabilities are bounded from below. 
We obtain an $O(n^{-1/6})$ rate through a multivariate Stein's method; since 
we do not assume a lower bound on sampling probabilities the corresponding Stein's operator
has weaker smoothing properties than in standard cases
and we require an additional Gaussian smoothing argument
to apply the result to Lipschitz test functions.

If sampling probabilities are bounded away from zero, we can obtain the usual
$n^{-1/2}$ rate of weak convergence.  We do not require that the sampling
probabilities are almost surely greater than some threshold, but a milder
condition on integrability of the inverse sampling probabilities.
\begin{assumption}
  \label{assumption:overlap}
  Overlap: there exists a constant $C_{o} >0$ such that
  \begin{align}
    \sup_{n,t} \E\left[ \max_{a} \frac{1}{\pi_{t, a}(\Pstate_{0},\ldots, \Pstate_{t-1})^{3}} \right] \leq C_{o}
  \end{align}
\end{assumption}
\noindent We prove the following convergence rate in
Section~\ref{section:proof-root-n-rate}.
\begin{corollary}
  \label{cor:root-n-rate}
  Let
  Assumptions~\ref{assumption:reward},~\ref{assumption:policy},~\ref{assumption:overlap}
  and the bound~\eqref{eqn:lip-policy} hold.  Let
  $M := \left(1 + \bar{L} ( s_*^2/b_* \numarm^{1/2} + \max_{a} |h_{a}|)
  \right)$ with $s_{*}^{2} := \max_{a} s_{a}^{2}$,
  $b_{*} := \min_{0\leq t\leq T-1} b_{t}$.  Then,
 \begin{equation}
   \label{eqn:root-n-rate}
   d_{\textup{BL}}((\sqrt{n} \bar{R}^{n}_{0},\ldots,\sqrt{n} \bar{R}^{n}_{T-1}), (G_{0},\ldots,G_{T-1})) 
   \leq \tilde{C} M^{T} n^{-1/2}
  \end{equation}
  where $\tilde{C}\in (0,\infty)$ depends on $K$, $C_{o}$, $M$, $h$,
  $s_{*}^{2}$, $\E \norm{\epsilon}_{2}^{3}$, $\E \norm{\epsilon}_{2}^{4}$, and
  $b_{*}^{-1}$ polynomially.
\end{corollary}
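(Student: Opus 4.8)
The plan is to retrace the argument behind Corollary~\ref{cor:proof-limit-rate}, keeping its epoch-by-epoch decomposition intact and replacing only the single step that forced the rate down to $n^{-1/6}$. I would first set up the telescoping bound over epochs: writing the joint laws of $(\sqrt{n}\bar R^n_0,\ldots,\sqrt{n}\bar R^n_{T-1})$ and of $(G_0,\ldots,G_{T-1})$ through their successive conditional distributions, I peel off one epoch at a time and bound $d_{\textup{BL}}$ of the joint laws by a sum of conditional single-epoch discrepancies. Here the Lipschitz bound~\eqref{eqn:lip-policy} on $\pi_{t,a}$ and $\sqrt{\pi_{t,a}}$ is what lets me pull a bounded-Lipschitz test function on epoch $t$'s observation back through the policy to a bounded-Lipschitz function of the earlier observations, so that errors compound multiplicatively in the factor $M$ and accumulate the $M^T$ prefactor. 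This part is identical in spirit to Corollary~\ref{cor:proof-limit-rate} and is untouched by the overlap assumption.

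For the single-epoch term I would condition on the past so that $\pi_t$ becomes a fixed measurable vector, and analyze the conditional CLT for $\sqrt{n}\bar R^n_{t,a} = \frac{h_a}{b_t n}\sum_j \xi^t_{a,j} + \frac{1}{b_t\sqrt{n}}\sum_j \xi^t_{a,j}\epsilon^t_{a,j}$, a sum of $b_t n$ conditionally i.i.d.\ terms with variance $\pi_{t,a}s_a^2$ and third absolute moment of order $\pi_{t,a}\,\E|\epsilon_a|^3$. The multivariate Stein bound against the target $N(\{\pi_{t,a}h_a\},\diag(\pi_{t,a}s_a^2/b_t))$ then yields a per-epoch discrepancy whose constant carries inverse powers of the target standard deviations, i.e.\ inverse powers of $\pi_{t,a}$: the third-moment Berry--Esseen term scales like $\pi_{t,a}^{-1/2}n^{-1/2}$, and the Stein factors entering the Lipschitz-to-smooth conversion contribute further negative powers of $\pi_{t,a}$ through $\Sigma^{-1}$.

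The crucial difference from the $n^{-1/6}$ argument is what I do with these inverse powers. Without a lower bound on $\pi$, the earlier proof had to smooth the Lipschitz test function by a Gaussian of width $\delta$, pay $O(\delta)$ for smoothing and roughly $O(\delta^{-\alpha}n^{-1/2})$ for the Stein term, and then optimize $\delta$, which degrades the rate. Under Assumption~\ref{assumption:overlap} I would instead keep the conditional single-epoch bound as a genuine function of the random $\pi_t$ and integrate it over the realized $\pi_t$. The negative moments of $\pi_{t,a}$ in the constant are then controlled by H\"older/Jensen against the cube-integrability hypothesis; for instance $\E[\pi_{t,a}^{-1/2}]\le (\E[\pi_{t,a}^{-3}])^{1/6}\le C_o^{1/6}$, and the remaining inverse powers arising from the Stein factors are dominated by $\E[\max_a \pi_{t,a}^{-3}]\le C_o$, which conveniently bounds all arms simultaneously. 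This removes the smoothing tradeoff entirely and preserves the clean $n^{-1/2}$ Berry--Esseen scaling in the integrated bound.

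The main obstacle I anticipate is the bookkeeping of exactly which negative powers of $\pi_{t,a}$ appear in the multivariate Stein constant, and verifying that the single cube-moment hypothesis dominates all of them simultaneously after taking expectations. In particular I expect the delicate points to be handling the randomness of the per-arm assignment counts (binomial in $\pi_{t,a}b_t n$) rather than treating them as deterministic $\pi_{t,a}b_t n$, and confirming that the conditional bound is jointly integrable so that the expectation over $\pi_t$ and the epoch-by-epoch telescoping can be interchanged cleanly. Once every negative-moment term is bounded by a power of $C_o$ through H\"older and the constants are collected, the recursion over the $T$ epochs produces the stated $\tilde C\,M^{T}\,n^{-1/2}$.
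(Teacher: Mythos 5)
Your proposal is correct and reproduces the architecture of the paper's own proof in Section~\ref{section:proof-root-n-rate}: the same epoch-by-epoch induction, the same use of the Lipschitz bound~\eqref{eqn:lip-policy} to propagate bounded-Lipschitz test functions through the policy and accumulate the $M^{T}$ factor, and the same mechanism of conditioning on the past, letting the single-epoch CLT constant carry inverse powers of $\pi_{t,a}$, and integrating those against Assumption~\ref{assumption:overlap}. The one substantive difference is the single-epoch lemma. The paper abandons Stein's method entirely at this step: it invokes Bhattacharya's classical multivariate Berry--Esseen bound (Lemma~\ref{lemma:clt-rate}), applied to the normalized sum $\Sigma_{n,t+1}^{-1/2}Q_{n,t+1}$, with the eigenvalue lower bound $\lambda_{\min}(\Sigma_{n,t+1}) \gtrsim \min_a \pi_{t+1,a}$ from Lemma~\ref{lemma:min-max-eig}; since the moment term there scales like $\max_a \pi_{t+1,a}^{-2}$ and enters at power $3/2$, the third inverse moment $\E[\max_a \pi_{t+1,a}^{-3}] \le C_o$ is exactly what is consumed. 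You instead stay inside the Stein framework and use a non-degenerate-target variant (the Lipschitz/Wasserstein form of Meckes' theorem, with constants carrying $\norm{\Sigma^{-1/2}}$), which removes the Gaussian-smoothing tradeoff that produced the $n^{-1/6}$ rate in Corollary~\ref{cor:proof-limit-rate}; this is equally valid, and in fact your accounting ($\E[\pi_{t,a}^{-1/2}] \le C_o^{1/6}$ by Jensen) suggests your route needs strictly milder inverse moments than the paper's, so the overlap hypothesis dominates with room to spare. One piece you flag as an obstacle but do not resolve is already handled in the paper and carries over verbatim: the discrepancy between the multinomial-sampling covariance $\Sigma_{n,t+1}$ and the diagonal limit covariance $\Sigma_{t+1}$ (your ``randomness of the per-arm assignment counts'') is controlled by the KL-divergence bound of Lemma~\ref{lemma:kl-cov}, which is already $O(n^{-1/2})$, contains no inverse powers of $\pi$, and requires no modification under the overlap assumption.
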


The bounds~\eqref{eqn:bl-rate} and~\eqref{eqn:root-n-rate} suffer an
exponential dependence on the time horizon $T$, which is perhaps unsurprising
as the result involves \emph{joint} weak convergence of correlated random
variables.  The smoothness of the policy also enters, which suggests that
convergence may be slower for policies that are very sensitive to changes in
the measurement.  Nevertheless, $T$ is very small compared to the batch size
for the settings we consider. Empirically, we observe below that the
asymptotic limit offer an accurate approximation even when the batch sizes are
small.

\begin{figure}[t]
  \vspace{-.4cm}
  \centering
  \vspace{-.4cm}
  \hspace{-.9cm}

  \ifdefined\msom
  \subfloat[\centering Simple regret for Gumbel rewards (Gamma prior)]{\label{fig:scaling_gumbel}{\includegraphics[height=5cm]{./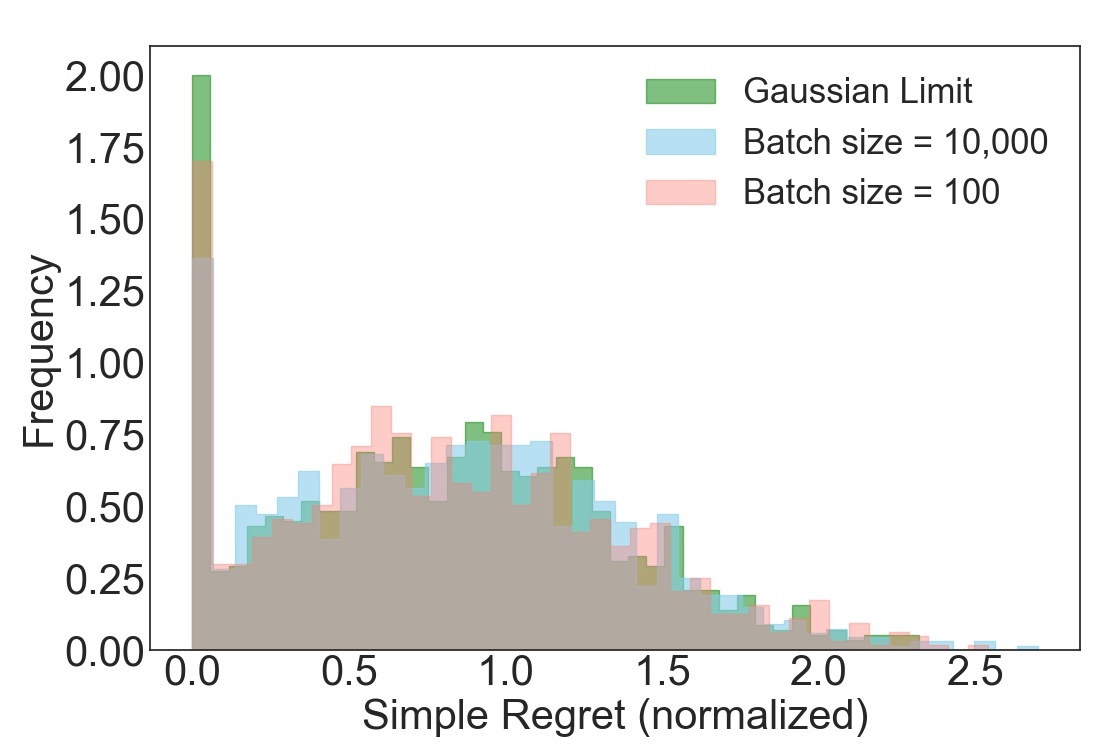}} }%
  \subfloat[\centering Simple regret for Bernoulli rewards (Beta
  prior)]{\label{fig:scaling_bern}{\includegraphics[height=5cm]{./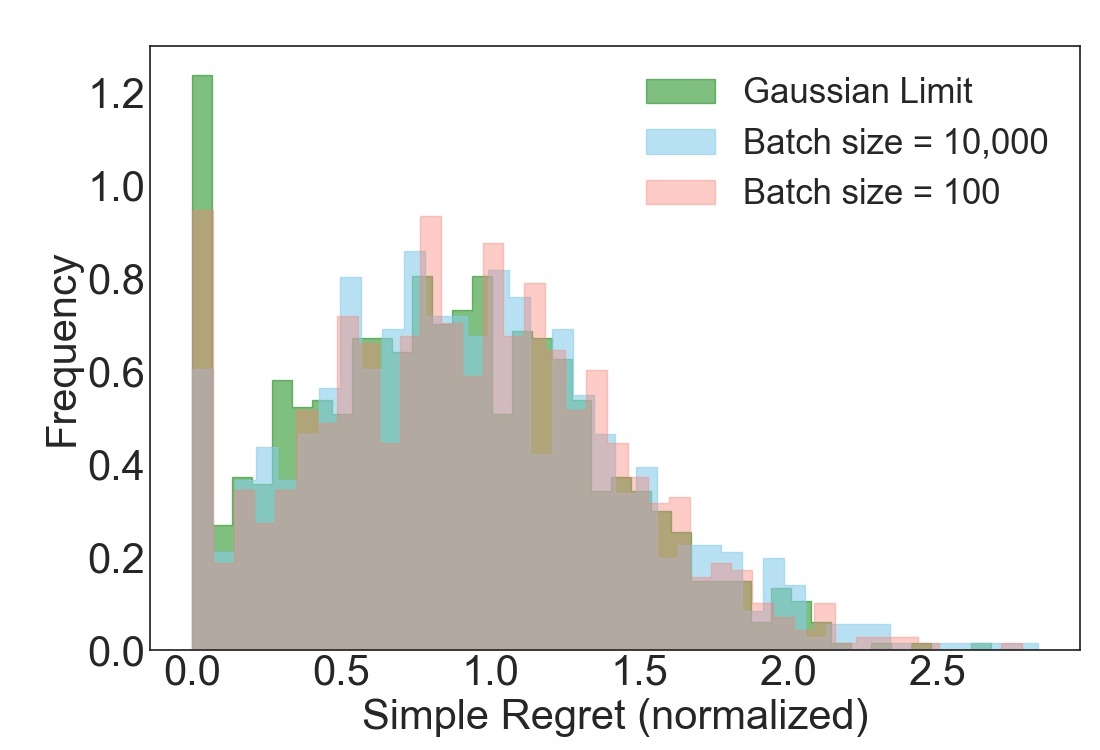}
    }}%
  \else
  \subfloat[\centering Simple regret for Gumbel rewards (Gamma prior)]{\label{fig:scaling_gumbel}{\includegraphics[height=5.7cm]{./fig/scaling_gumbel_ts.jpg}} }%
  \subfloat[\centering Simple regret for Bernoulli rewards (Beta
  prior)]{\label{fig:scaling_bern}{\includegraphics[height=5.7cm]{./fig/scaling_bern_ts.jpg}
    }}%
  \fi

  \vspace{.2cm}
  \caption{\label{fig:scaling-bsr} Histograms of simple regret after $T=10$
    epochs involving $\numarm = 100$ arms, across instances $h$ drawn from the
    prior $\nu$. Each histogram corresponds to a different batch size, with
    the green histogram corresponding to the Gaussian sequential experiment
    ($n=\infty$).  Even when the batch size is small ($b_t n = 100$), the
    performance of the policy in these non-Gaussian environments closely
    matches the performance of the policy in the Gaussian sequential
    experiment. 
      }
\end{figure}

\subsection{Validity of normal approximations even for small batch sizes}
\label{section:formulation-validity}

Empirically, we observe that the Gaussian sequential experiment is a good
approximation even when the batch sizes are exceedingly small. While
Theorem~\ref{theorem:limit} and Corollary~\ref{cor:bsr-limit} only guarantees
validity of our approximation in the large batch limit, this observation
impels us to apply policies derived from the problem~\eqref{eqn:gse-bsr} to
problems with small batches.  Overall, our observations are consistent with
the longstanding folklore in statistics that the central limit theorem often
provides a practical approximation even for small samples.

We illustrate that the sequential Gaussian experiment provides an accurate
prediction of performance in finite batch systems. Our primary metric of performance
is simple regret, the difference in the average reward between the best arm
and the arm selected by the policy at the end of experiment (according to a policy denoted as $\pi_{T}$). 
Since modern experimental platforms typically run many experiments, it is natural for the experimenter
to have a prior $\nu$ over the average rewards $h$ of the treatment arms, so we focus on simple regret among
instances drawn from this prior. For different batch
scalings $n$, we compare the \emph{distribution} of the simple regret
\begin{equation}
  \label{eqn:sr}
  \E_{\what{a}} [h_{a\opt} - h_{\what{a}}]~~~\mbox{where}~~
  \what{a} \sim \pi_T\left( \sqrt{n}\bar{R}_0,\ldots, \sqrt{n}\bar{R}_{T-1} \right)
\end{equation}
over the prior $h \sim \nu$, to its limiting object
$\E_{\what{a} \sim \pi_T(G_0, \ldots, G_{T-1})} [h_{a\opt} - h_{\what{a}}]$.

For simplicity, we consider a large batch limit version of the Thompson
sampling policy; we observe similar trends for different policies $\pi$.  As
we detail in Section~\ref{section:algorithms}, this Bayesian policy maintains
a Gaussian belief over the average rewards $h$ and updates posteriors using
observed aggregate rewards $\sqrt{n}\bar{R}^{n}$. Figure~\ref{fig:scaling-bsr}
displays histograms of the simple regret~\eqref{eqn:sr} incurred by this
policy across experiments with $T = 10$ reallocation epochs and
$\numarm = 100$ treatment arms.  Each histogram depicts the distribution of
the simple regret~\eqref{eqn:sr} over the prior $h \sim \nu$ for a particular
batch scaling $n$, including under the Gaussian sequential experiment
corresponding to $n=\infty$. Even for exceptionally small batch sizes relative
to the number of arms (batch size of $b_t n = 100$ across $K=100$ arms), the
Bayes simple regret closely matches that predicted by the Gaussian sequential
experiment.

\section{Bayesian Adaptive Experimentation}
\label{section:algorithms}

It is natural for the experimenter to have a prior distribution $h \sim \nu$
over the relative gap between average rewards. For example, modern online
platforms run thousands of experiments from which a rich reservoir of
previously collected data is available~\citep{KohaviLoSoHe09,
  KohaviDeFrLoWaXu12, KohaviDeFrWaXuPo13}. In this work, we focus on
minimizing the \emph{Bayes simple regret} at the end of the experiment
\begin{equation}
  \label{eqn:bsr}
  \bsr_{T}(\pi,\nu,\bar{R})
  \defeq \E_{h \sim \nu} \E[h_{a\opt} - h_{\what{a}}]
  ~~~\mbox{where}~~\what{a} \sim \pi_T~\mbox{and}~a\opt \in \argmax_a h_a,
\end{equation}
the scaled optimality gap between the final selection $\what{a}$ and the
optimal arm $a\opt$ averaged over the prior.  The notation
$\bsr_{T}(\pi,\nu,\bar{R})$ considers adaptive policies
$\pi = \{\pi_t\}_{t=0}^T$, prior $\nu$ over $h$, and observation process
$\bar{R} = (\bar{R}_0,\ldots, \bar{R}_{T-1} )$, which is the set of aggregate
rewards used by the policy to determine the sampling allocations.

Instead of optimizing the Bayes simple regret for each finite batch size, we
use Theorem~\ref{theorem:limit} to derive an asymptotic approximation under
the Gaussian sequential experiment.  
\begin{corollary}
  \label{cor:bsr-limit}
  Let $\nu$ be a prior over average rewards $h$ satisfying
  $\E_{h \sim \nu} \lone{h} < \infty$ and let
  Assumptions~\ref{assumption:reward},~\ref{assumption:policy} hold. Then, the
  Bayes simple regret under $\bar{R}^n$ can be approximated by that under the
  Gaussian sequential experiment $G$ in Definition~\ref{def:gse}:
  \begin{equation}
    \label{eqn:bsr-limit}
    \bsr_{T}(\pi,\nu, \sqrt{n} \bar{R}^{n}) \to \bsr_{T}(\pi,\nu, G).
  \end{equation}
\end{corollary}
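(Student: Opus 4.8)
The plan is to reduce the Bayes simple regret to the expectation of a bounded, almost-everywhere continuous test function of the observation vector, transfer the limit through Theorem~\ref{theorem:limit}, and then integrate over the prior. Fix a local parameter $h$ and write $O^n := (\sqrt{n}\bar{R}^n_0, \ldots, \sqrt{n}\bar{R}^n_{T-1})$. Since the final selection satisfies $\what{a} \sim \pi_T(O^n)$, conditioning on $O^n$ gives
\begin{equation*}
  \E[h_{a\opt} - h_{\what{a}}] = \E\left[ g_h(O^n) \right], \qquad
  g_h(o) := \sum_{a=1}^{\numarm} \pi_{T,a}(o)\, (h_{a\opt} - h_a),
\end{equation*}
so that $\bsr_T(\pi, \nu, \sqrt{n}\bar{R}^n) = \E_{h\sim\nu}\E[g_h(O^n)]$, and likewise for the Gaussian experiment with $O^n$ replaced by $G = (G_0, \ldots, G_{T-1})$. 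For each fixed $h$ the map $g_h$ is bounded, with $|g_h(o)| \le \max_a h_a - \min_a h_a \le 2\lone{h}$ uniformly in $o$ and $n$, since $\pi_T(o)$ is a probability vector.

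The next step is to invoke weak convergence. Theorem~\ref{theorem:limit} gives $O^n \cd G$ for each fixed $h$ (both depend on $h$, but the regularity Assumptions~\ref{assumption:reward} and~\ref{assumption:policy} hold uniformly over $h$). The test function $g_h$ is the composition of $\pi_T$ with a fixed linear functional (the reward gaps), so its set of discontinuities is contained in that of $\pi_T$; by Assumption~\ref{assumption:policy}\ref{item:continuity} this set has $G$-measure zero. I would therefore apply the portmanteau theorem in the form that, for a bounded measurable $f$ whose discontinuity set is null under the limit law, weak convergence implies $\E[f(O^n)] \to \E[f(G)]$ (equivalently, the extended continuous mapping theorem). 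This yields $\E[g_h(O^n)] \to \E[g_h(G)]$ for $\nu$-almost every $h$.

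It remains to pass the pointwise limit under the expectation over $h \sim \nu$. The bound $|\E[g_h(O^n)]| \le 2\lone{h}$ holds for every $n$, and $\E_{h\sim\nu}\lone{h} < \infty$ by hypothesis, so dominated convergence gives
\begin{equation*}
  \lim_{n\to\infty} \bsr_T(\pi, \nu, \sqrt{n}\bar{R}^n)
  = \E_{h\sim\nu} \lim_{n\to\infty} \E[g_h(O^n)]
  = \E_{h\sim\nu} \E[g_h(G)]
  = \bsr_T(\pi, \nu, G),
\end{equation*}
which is the claim.

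The main obstacle is that the selection functional $g_h$ need not be continuous, so the ordinary continuous mapping theorem does not apply: a policy may switch the selected arm abruptly across a decision boundary in observation space. The resolution is precisely Assumption~\ref{assumption:policy}\ref{item:continuity}, which forces these decision boundaries to be $G$-null and licenses the a.e.-continuity version of the mapping/portmanteau theorem. A secondary point requiring care is that $g_h$ is bounded but not Lipschitz, so the bounded-Lipschitz rate of Corollary~\ref{cor:proof-limit-rate} cannot be invoked directly; I rely only on the qualitative weak convergence of Theorem~\ref{theorem:limit}, which suffices for a pure limit statement. Finally, the null-set hypothesis is genuinely a statement about the limit law: Assumption~\ref{assumption:policy}\ref{item:continuity} is imposed directly with respect to the distribution of $(G_0,\ldots,G_{T-1})$, which is exactly what the portmanteau form requires, so no separate absolute-continuity argument for $G$ (whose coordinates may degenerate when $\pi_{t,a}=0$) is needed.
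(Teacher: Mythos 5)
Your proof is correct and is essentially the paper's own argument: for each fixed $h$, Theorem~\ref{theorem:limit} together with Assumption~\ref{assumption:policy}\ref{item:continuity} and the Portmanteau theorem yields convergence of the conditional expected regret, and the limit is then passed through the prior by dominated convergence using the bound $|\E[g_h(O^n)]| \le 2\lone{h}$ and the hypothesis $\E_{h\sim\nu}\lone{h} < \infty$. The only cosmetic difference is that the paper writes the test function as $h^{\top}\pi_{T}(\cdot)$ rather than your $g_h$, which is the same quantity up to the constant shift by $h_{a\opt}$.
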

\noindent See Section~\ref{section:proof-bsr-limit}
for a proof of the corollary.

Using the approximation of Bayes simple regret, we optimize the asymptotic
Bayesian objective
\begin{equation}
  \label{eqn:gse-bsr}
  \minimize_{\pi}~~\left\{
    \bsr_{T}(\pi,\nu, G) = \E_{h \sim \nu} \E[h_{a\opt} - h_{\what{a}}]
    \right\}
\end{equation}
over policies $\pi = \{\pi_t(G_0, \ldots, G_{t-1})\}_{t=0}^T$ adapted to the
sequential observations $G_0, \ldots, G_{T-1}$ of the Gaussian sequential
experiment.  Policies derived from the optimization
problem~\eqref{eqn:gse-bsr} has marked modeling flexibility and computational
advantages compared to typical Bayesian sequential sampling algorithms (e.g.,
variants of Thompson sampling~\citep{Russo20}). Although reward distributions
are unknown in general, Bayesian sequential sampling approaches require a
distributional model of individual rewards comprising of prior and likelihood
function~\citep{RussoVaKaOsWe18}.  In contrast, our framework does not assume
restrictive distributional assumptions on rewards and allows naturally
incorporating prior information over average rewards $h_a$. 

Computationally, policies derived from the optimization
problem~\eqref{eqn:gse-bsr} can be efficiently updated \emph{offline}. As
these offline updates give a fixed sampling probability over the $\numarm$
arms at each epoch, the policies we propose in subsequent sections can be
deployed to millions of units at ease. This is in stark contrast to Bayesian
sequential sampling algorithms designed for unit-level feedback. While their
policy updates are also often performed offline in batches in
practice~\citep{OfferWestortCoGr20, EspositoSa22}, when deployed these policies
require \emph{real-time} posterior inference and action optimization to
generate treatment assignments for each unit. Implementing such methods at
scale is highly challenging even for the largest online platforms with mature
engineering infrastructure~\citep{AgarwalBiCoHoLaLeLiMeOsRi16,
  NamkoongDaBa20}. Methods derived from our formulation provides a scalable
alternative as deploying the resulting adaptive policies only involves
sampling from a fixed sampling probability $\pi_t$ for every unit in a batch,
\emph{regardless} of the batch size.

\subsection{Markov decision process}
\label{section:algorithms-mdp}

Using the Gaussian sequential experiment as an asymptotic approximation, we
derive a dynamic program that solves for the adaptive allocation minimizing
the Bayes simple regret~\eqref{eqn:gse-bsr}. Policies derived from this
dynamic program---which we call the (limiting) Bayesian adaptive
experiment---are tailored to the signal-to-noise ratio in each problem
instance and the number of reallocation opportunities $T$. In the Bayesian
adaptive experiment, we observe a Gaussian draw $G_t$ at each epoch and use it
to perform posterior updates over the experimenter's belief on the average
rewards $h$. We formulate the sequential updates using a Markov decision
process where state transitions are governed by changes in the posterior mean
and variance.
In Section~\ref{section:algorithms-adp}, we introduce a range of adaptive
sampling algorithms that (approximately) solve the Bayesian dynamic program,
and benchmark them empirically. Our empirical analysis highlights a
particularly effective policy, $\algofull$, which solves an open-loop problem
over future allocations that only use currently available information.

The normal approximations in the previous section gives Gaussian likelihood
functions in our Bayesian adaptive experiment
\begin{equation*}
  \mbox{Likelihood function:}~~~~
  G \mid h \sim N\left(\pi h, \diag\left( \frac{\pi s^2}{b} \right)\right),
\end{equation*}
where we abuse notation to describe elementwise algebraic operations over
$\numarm$-dimensional vectors. (Recall that $s^2 \in \R^{\numarm}$ is the
measurement variance on the raw rewards~\eqref{eqn:rewards}.) To achieve
conjugacy, we assume that the experimenter has a Gaussian prior over the
average reward
\begin{equation}
  \label{eqn:prior}
 \mbox{Prior:}~~~~ h \sim N\left(\mu_0, \diag(\sigma_0^2)\right) \eqdef \nu,
\end{equation}
where $\mu_0, \sigma_0^2 \in \R^{\numarm}$ are the prior mean and
variance. Standard posterior updates for Gaussian conjugate priors give the
following recursive formula for the posterior mean and variance
\begin{subequations}
  \label{eqn:posterior}
  \begin{align}
    \mbox{Posterior variance:}~~~~ \sigma_{t+1, a}^{-2}
    & \defeq \sigma_{t, a}^{-2} + s_a^{-2} b_t \pi_{t, a}
      \label{eqn:posterior-var} \\
    \mbox{Posterior mean:}~~~~\mu_{t+1, a}
    & \defeq   \sigma_{t+1, a}^{2}
      \left( \sigma_{t, a}^{-2} \mu_{t, a} + s_a^{-2}b_t G_{t, a} \right).
      \label{eqn:posterior-mean}
  \end{align}
\end{subequations}
The posterior variance decreases as a deterministic function of the sampling
allocation $\pi_t$, and in particular does not depend on the observation
$G_t$.

Under the posterior updates~\eqref{eqn:posterior}, our goal is to optimize the
Bayes simple regret~\eqref{eqn:gse-bsr} at the end of the experiment (time
$t = T$). We consider the final allocation that simply selects the arm with
the highest posterior mean; formally, if the $\argmax_a \mu_{T, a}$ is unique,
this is equivalent to 
\begin{equation*}
  \pi_{T, a} = \begin{cases}
    1 ~~&~\mbox{if}~a = \argmax_{a} \mu_{T, a} \\
    0 ~~&~\mbox{if}~a \neq\argmax_{a} \mu_{T, a}. 
  \end{cases}
\end{equation*}
(More generally, we choose an arm randomly from $\argmax_a \mu_{T, a}$; this
does not change any of the results below.)  Then, minimizing the Bayes simple
regret under the asymptotic Gaussian sequential experiment~\eqref{eqn:gse-bsr}
is equivalent to the following reward maximization problem
\begin{equation}
  \label{eqn:bayesian-dp}
  \maximize_{\pi_0, \ldots, \pi_{T-1}}~~
  \left\{V_{0}^{\pi}(\mu_{0},\sigma_{0}) = \E^{\pi} \left[ \max_a \mu_{T, a} \right]\right\}.
\end{equation}
Here, we write $\E^\pi$ to denote the expectation under the stochastic
transitions~\eqref{eqn:posterior} induced by the policy
$\pi_0, \ldots, \pi_{T-1}$ adapted to the observation sequence
$G_0, \ldots, G_{T-1}$. Although we focus on the Bayes simple regret in this
work, our MDP formulation can accomodate any objective function or constraint
that depends on the posterior states, such as the cumulative regret,
probability of correct selection, simple regret among the top-$k$ arms, and
constraints on the sampling allocations.


To simplify things further, we use a change of variable to reparameterize
state transitions~\eqref{eqn:posterior} as a random walk.
\begin{lemma}
  \label{lemma:mdp}
  Let $h \sim N(\mu_0, \sigma_0^2)$ and
  $Z_0, \ldots, Z_{T-1} \simiid N(0, I_{\numarm})$ be standard normal
  variables. The system governed by the posterior
  updates~\eqref{eqn:posterior} has the same joint distribution as that with
  the following state transitions
  \begin{subequations}
    \label{eqn:dynamics}
    \begin{align}
      \sigma_{t+1, a}^{-2}
      & \defeq \sigma_{t, a}^{-2} + s_a^{-2} b_t \pi_{t, a}(\mu_t, \sigma_t)
        \label{eqn:dynamics-var} \\
      \mu_{t+1, a}
      &:= \mu_{t, a} + \sigma_{t, a}\sqrt{\frac{b_t \pi_{t, a}(\mu_t, \sigma_t)
        \sigma_{t, a}^{2}}{s_a^{2}+b_t\pi_{t, a}(\mu_t, \sigma_t)\sigma_{t, a}^{2}}} Z_{t, a}.
        \label{eqn:dynamics-mean}
    \end{align}
  \end{subequations}
\end{lemma}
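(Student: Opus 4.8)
The plan is to verify that both systems are time-inhomogeneous Markov chains in the state $(\mu_t,\sigma_t)$, started from the same initial law and driven by identical one-step transition kernels; equality of the full joint distributions then follows by induction on $t$. Because the posterior is conjugate Gaussian with diagonal covariances, $(\mu_t,\sigma_t)$ is a sufficient statistic for $G_0,\ldots,G_{t-1}$, so in both systems the allocation is a (possibly discontinuous) function $\pi_t=\pi_t(\mu_t,\sigma_t)$ and may be treated as a constant once we condition on the current state. The variance recursion~\eqref{eqn:dynamics-var} is literally identical to~\eqref{eqn:posterior-var} and is a deterministic function of the state, so the only content is to show that the conditional law of $\mu_{t+1}$ given $(\mu_t,\sigma_t)$ agrees in the two systems.

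First I would put the original posterior-mean update~\eqref{eqn:posterior-mean} in innovation form. Using $\sigma_{t+1,a}^2\sigma_{t,a}^{-2}=1-\sigma_{t+1,a}^2 s_a^{-2}b_t\pi_{t,a}$, the update collapses to
\[
\mu_{t+1,a}-\mu_{t,a}=\sigma_{t+1,a}^2\, s_a^{-2}\, b_t\,\bigl(G_{t,a}-\pi_{t,a}\mu_{t,a}\bigr).
\]
Under the Bayesian predictive law given the state, the posterior on $h_a$ is $N(\mu_{t,a},\sigma_{t,a}^2)$ and $G_{t,a}=\pi_{t,a}h_a+\eta_{t,a}$ with $\eta_{t,a}\sim N(0,\pi_{t,a}s_a^2/b_t)$ independent across arms (prior and likelihood covariances are both diagonal). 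Hence $G_{t,a}-\pi_{t,a}\mu_{t,a}$ is mean-zero Gaussian with variance $\pi_{t,a}^2\sigma_{t,a}^2+\pi_{t,a}s_a^2/b_t$, independent across $a$; this transfers both the conditional mean-zero (martingale) property and the cross-arm independence to the increment of $\mu_{t+1}$.

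Next I would compute the conditional variance. Substituting $\sigma_{t+1,a}^2=\sigma_{t,a}^2 s_a^2/(s_a^2+b_t\pi_{t,a}\sigma_{t,a}^2)$ and the predictive variance of $G_{t,a}$ into the innovation form, the algebra reduces to
\[
\mathrm{Var}\bigl(\mu_{t+1,a}-\mu_{t,a}\mid \mu_t,\sigma_t\bigr)
=\frac{b_t\pi_{t,a}\sigma_{t,a}^4}{s_a^2+b_t\pi_{t,a}\sigma_{t,a}^2}
=\sigma_{t,a}^2\cdot\frac{b_t\pi_{t,a}\sigma_{t,a}^2}{s_a^2+b_t\pi_{t,a}\sigma_{t,a}^2},
\]
which is exactly the variance of the increment $\sigma_{t,a}\sqrt{b_t\pi_{t,a}\sigma_{t,a}^2/(s_a^2+b_t\pi_{t,a}\sigma_{t,a}^2)}\,Z_{t,a}$ in~\eqref{eqn:dynamics-mean} with $Z_{t,a}\simiid N(0,1)$. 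Since in both systems the increment of $\mu_{t+1}$, conditioned on $(\mu_t,\sigma_t)$, is mean-zero Gaussian with the same diagonal covariance and is conditionally independent of the past, the two transition kernels coincide; combined with the identical variance recursion, this closes the induction and gives the claimed equality of joint laws.

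The computation is routine, and the only point requiring care is the bookkeeping around the state-dependence of $\pi_t$. Because $\pi_{t,a}$ enters only through $(\mu_t,\sigma_t)$ in both systems, it is measurable with respect to the conditioning and can be frozen as a constant in the one-step calculation; consequently its potential discontinuity (Assumption~\ref{assumption:policy}) is irrelevant to this purely distributional identity, and no smoothness of the policy is invoked.
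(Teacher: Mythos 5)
Your proposal is correct and takes essentially the same route as the paper's proof: both verify that, conditional on $(\mu_t,\sigma_t)$, the update $\mu_{t+1,a}$ is Gaussian with mean $\mu_{t,a}$ and variance $\sigma_{t,a}^4 b_t\pi_{t,a}/(s_a^2+\sigma_{t,a}^2 b_t\pi_{t,a}) = \sigma_{t,a}^2-\sigma_{t+1,a}^2$, using the Gaussian predictive law for $G_{t,a}$. Your innovation-form decomposition $G_{t,a}=\pi_{t,a}h_a+\eta_{t,a}$ is just a structural restatement of the paper's law-of-total-variance computation, and your explicit induction/Markov-kernel wrapper makes precise what the paper leaves implicit.
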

\noindent We defer derivation details to Section~\ref{section:proof-mdp}.

For the Markov decision process defined by the reparameterized state
transitions~\eqref{eqn:dynamics}, the value function for the Bayes simple regret
$\E^{\pi} \left[ \max_a \mu_{T, a} \right]$ has the following succinct
representation
\begin{align}
  V_{t}^{\pi}(\mu_{t},\sigma_{t})
  & = \E^{\pi} \left[\max_a \mu_{T,a} \mid \mu_t, \sigma_t \right] \nonumber \\
  & = \E^{\pi} \left[ \max_{a} \left\{ \mu_{t,a} + \sum_{v=t}^{T-1} \sigma_{v,a}\sqrt{\frac{b_v \pi_{v, a}(\mu_{v},\sigma_{v})\sigma_{v, a}^{2}}{s_a^{2}+b_v\pi_{v, a}(\mu_{v},\sigma_{v})\sigma_{v, a}^{2}}}Z_{v, a}
    \right\} ~\Bigg|~ \mu_t, \sigma_t \right].   \label{eqn:q_fn}
\end{align}
In what follows, we use the notational shorthand
$\E_t^{\pi}[\cdot] = \E^{\pi}[\cdot \mid \mu_t, \sigma_t^2]$ so that
$V_{t}^{\pi}(\mu_{t},\sigma_{t}) = \E^{\pi}_t \left[\max_a \mu_{T,a} \right]$.

\subsection{Asymptotic validity}

The state transitions~\eqref{eqn:posterior} are derived under the idealized
the Gaussian sequential experiment. In practice, any policy $\pi$ derived in
this asymptotic regime will be applied to a finite batch problem.  Recalling
the asymptotic approximation~\eqref{eqn:weak-convergence}, the finite batch
problem will involve states $(\mu_{n,t}, \sigma_{n,t})$ updated according to
the same dynamics~\eqref{eqn:posterior}, but using the sample mean estimator
$\sqrt{n}\bar{R}^{n}_{t}$ instead of the Gaussian approximation $G_t$
\begin{subequations}
  \label{eqn:pre-limit-posterior}
  \begin{align}
    \mbox{Pre-limit posterior variance:}~~~~ \sigma_{n,t+1, a}^{-2}
    & \defeq \sigma_{n,t, a}^{-2} + s_a^{-2} b_t \pi_{t, a}(\mu_{n,t}, \sigma_{n,t})
      \label{eqn:pre-limit-posterior-var} \\
    \mbox{Pre-limit posterior mean:}~~~~\mu_{n,t+1, a}
    & \defeq   \sigma_{n,t+1, a}^{2}
      \left( \sigma_{n,t, a}^{-2} \mu_{n,t, a} + s_a^{-2}b_t \sqrt{n}\bar{R}_{t,a}^{n} \right).
      \label{eqn:pre-limit-posterior-mean}
  \end{align}
\end{subequations}
Given any policy $\pi_t(\mu_t, \sigma_t)$ derived from the Bayesian dynamic
program~\eqref{eqn:bayesian-dp} and average rewards $h$, the pre-limit
posterior state $(\mu_{n,t}, \sigma_{n,t})$ evolves as a Markov chain and each
adaptive allocation to be used in the original finite batch problem is given
by $\pi_{t}(\mu_{n,t}, \sigma_{n,t})$.

For policies $\pi$ that is continuous in the states---satisfying conditions in
Assumption~\ref{assumption:policy}---Corollary~\ref{cor:bsr-limit} implies
that the Bayesian dynamic program~\eqref{eqn:bayesian-dp} is an accurate
approximation for measuring and optimizing the performance of the policy as
$n$ grows large. Moreover, we can show that the trajectories of pre-limit
posterior beliefs $(\mu_{n,t}, \sigma_{n,t})$~\eqref{eqn:pre-limit-posterior}
can be approximated by trajectories of the Markov decision process derived
using the Gaussian sequential experiment~\eqref{eqn:dynamics}.
\begin{corollary}
  \label{cor:bayes-limit}
  Let $\pi_t(\mu_t, \sigma_t)$ be a policy derived under the limit Bayesian
  dynamic program~\eqref{eqn:bayesian-dp} that is continuous as a function of
  $(\mu_t, \sigma_t)$.  Let Assumption~\ref{assumption:reward} hold and
  consider any fixed average reward $h$ and prior $(\mu_{0}, \sigma_{0})$. The
  posterior states~\eqref{eqn:pre-limit-posterior} converge to the
  states~\eqref{eqn:posterior} as $n\to\infty$
  \begin{equation*}
    (\mu_{n,t}, \sigma_{n,t},\ldots,\mu_{n,T-1}, \sigma_{n,T-1})
    \cd (\mu_{t}, \sigma_{t},\ldots,\mu_{T-1}, \sigma_{T-1})
  \end{equation*}
\end{corollary}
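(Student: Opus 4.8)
The plan is to read the whole corollary as an application of the continuous mapping theorem: the entire posterior state trajectory is a deterministic, \emph{continuous} functional of the observation sequence that drives it, so the weak convergence of observations already established in Theorem~\ref{theorem:limit} can be transported through this functional. The genuinely hard part—that the sampling probabilities are random and coupled to past measurements, with no lower bound—is precisely what Theorem~\ref{theorem:limit} absorbs. Given that result, the remaining work is only to exhibit the functional, verify its continuity, and check that the policy in the statement falls within the scope of the theorem.

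Concretely, I would first define, for each epoch $t$, the one-step update map $F_t$ sending a state--observation triple $(\mu, \sigma, g)$ to the next state $(\mu', \sigma')$ via the posterior recursion~\eqref{eqn:posterior} with $\pi_t = \pi_t(\mu, \sigma)$. Since $\pi_t$ is continuous in $(\mu, \sigma)$ by hypothesis and the updated precision obeys $\sigma_{t+1,a}^{-2} = \sigma_{t,a}^{-2} + s_a^{-2} b_t \pi_{t,a} \geq \sigma_{0,a}^{-2} > 0$ along any trajectory (only nonnegative increments are added), the reciprocal and square root in~\eqref{eqn:posterior-var}--\eqref{eqn:posterior-mean} are continuous, and hence so is $F_t$. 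Composing $F_0, \ldots, F_{T-1}$ from the fixed initialization $(\mu_0, \sigma_0)$ yields a globally continuous map $\Phi$ sending an observation sequence $(g_0, \ldots, g_{T-1}) \in (\R^{\numarm})^T$ to the full state trajectory. Crucially, $\Phi$ is the \emph{same} deterministic map for both systems: the pre-limit states~\eqref{eqn:pre-limit-posterior} are $\Phi$ evaluated at $(\sqrt{n}\bar{R}^n_0, \ldots, \sqrt{n}\bar{R}^n_{T-1})$, while the limit states~\eqref{eqn:posterior} are $\Phi$ evaluated at $(G_0, \ldots, G_{T-1})$.

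Next I would verify the hypotheses of Theorem~\ref{theorem:limit} for this policy. Because the epoch-$t$ state is itself the deterministic continuous function $\Phi_t(\sqrt{n}\bar{R}^n_0, \ldots, \sqrt{n}\bar{R}^n_{t-1})$ of past aggregate rewards, the composed allocation $\pi_t \circ \Phi_t$ depends on the data only through the aggregate rewards (giving sufficiency, Assumption~\ref{assumption:policy}\ref{item:sufficiency}) and is continuous in them (giving continuity, Assumption~\ref{assumption:policy}\ref{item:continuity}, with empty discontinuity set). Together with Assumption~\ref{assumption:reward}, Theorem~\ref{theorem:limit} then yields $(\sqrt{n}\bar{R}^n_0, \ldots, \sqrt{n}\bar{R}^n_{T-1}) \cd (G_0, \ldots, G_{T-1})$, where the limit $G$ is exactly the process driving~\eqref{eqn:posterior} (its conditional means $\pi_{t,a}(\mu_t,\sigma_t)h_a$ use the $G$-driven states, by construction of Definition~\ref{def:gse}). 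Applying the continuous mapping theorem to $\Phi$ immediately gives $(\mu_{n,t}, \sigma_{n,t}, \ldots, \mu_{n,T-1}, \sigma_{n,T-1}) \cd (\mu_t, \sigma_t, \ldots, \mu_{T-1}, \sigma_{T-1})$.

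The point I would be most careful about is the \emph{global} continuity of $\Phi$, which rests on the posterior precision never degenerating: $\sigma_{t,a}^{-2}$ is nondecreasing in $t$ and starts at the strictly positive prior precision $\sigma_{0,a}^{-2}$, so the reciprocal and square-root operations stay continuous everywhere and $\Phi$ is continuous on all of $(\R^{\numarm})^T$ rather than merely almost everywhere. This sidesteps any need to check that the continuity set of $\Phi$ carries full mass under the limit law. I would also emphasize that this argument requires no lower bound on the sampling probabilities—a degenerate allocation simply leaves that coordinate's precision unchanged—so no overlap-type condition (Assumption~\ref{assumption:overlap}) enters this distributional statement.
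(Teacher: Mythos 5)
Your proposal is correct and follows essentially the same route as the paper's proof: express the posterior-state trajectory as a continuous deterministic functional of the aggregate-reward sequence, invoke Theorem~\ref{theorem:limit} for weak convergence of the observations, and push it through with the continuous mapping theorem. You additionally spell out two points the paper leaves implicit---that the composed allocation $\pi_t \circ \Phi_t$ satisfies Assumption~\ref{assumption:policy} so Theorem~\ref{theorem:limit} indeed applies, and that the posterior precision never degenerates so the map is globally continuous---which is a welcome tightening rather than a departure.
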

\noindent See Section~\ref{section:proof-bayes-limit} for the proof.



\section{Algorithm design through empirical benchmarking}
\label{section:algorithms-adp}

We now derive algorithms for solving the limit Bayesian dynamic
program~\eqref{eqn:bayesian-dp}, which arise naturally from reinforcement
learning (RL) and approximate dynamic programming (ADP).  Breaking from the
typical theory-driven paradigm in bandit algorithms that compare regret
bounds, we empirically benchmark adaptive experimentation methods to
assess their performance. To ensure \emph{empirical rigor}, we present a
comprehensive set of experiments in the following Streamlit app:
\begin{equation}
  \label{eqn:streamlit}
    \mbox{\url{https://aes-batch.streamlit.app/}}.
\end{equation}
Our empirical approach allows us to study performance among a large class of
algorithms, and analyze factors that impact performance (e.g.  measurement
noise $s_a^2$, horizons length $T$). Our focus on empirical analysis reveals
practical aspects of implementation that are critical for performance, but are
difficult to observe from a purely theoretical approach.

\subsection{Adaptive experimentation as approximate dynamic programming}
\label{section:adp}

 Since the states
$(\mu_t, \sigma_t)$ and actions $\pi_t$ are both continuous, using dynamic
programming to directly solve the policy optimization
problem~\eqref{eqn:bayesian-dp} is computationally intractable even for a
moderate number of arms and reallocation epochs. However, a key property of
this MDP is that the state transitions are differentiable with respect to the
sampling allocations along any sample path $Z_0,...,Z_{T-1}$, which is enabled
by the reparameterization trick for Gaussian random
variables~\citep{KingmaBa15}. As a result, standard Monte Carlo approximations
of the value function are differentiable with respect to the sampling
allocations, allowing the use of gradient-based methods for planning and
(approximately) solving the DP. In this section, we explore algorithms that
utilize this auto-differentiability~\citep{TensorFlow15, PyTorch19}.

\paragraph{Residual Horizon Optimization (RHO)} We propose a simple yet
effective method that solves a computationally cheaper approximation of the
dynamic program based on model predictive control (MPC). $\algo$ iteratively solves an open-loop planning
problem, optimizing over a sequence of future sampling allocations based on currently
available information $(\mu_t, \sigma_t)$ at time $t$. At each epoch $t$, the
allocation $\rho_t(\mu_t, \sigma_t)$ is derived by assuming that a fixed
sequence of allocations will be used for the remaining periods horizon
regardless of new information obtained in the future
\begin{align}
  \label{eqn:rho-planning}
  \rho_{t}(\mu_t,\sigma_t)
  \in  \argmax_{\bar{\rho}_{t},\ldots,\bar{\rho}_{T-1} \in \Delta_\numarm}
    ~\E_{t} \left[ \max_{a} \left\{ \mu_{t,a}
    + \sum_{v=t}^{T-1} \sigma_{v,a}\sqrt{\frac{b_v \bar{\rho}_{v,a}\sigma_{v, a}^{2}}
    {s_a^{2}+b_v \bar{\rho}_{v,a}\sigma_{v, a}^{2}}}Z_{v, a}
    \right\}~~\Bigg|~~ \mu_{t},\sigma_{t} \right],
\end{align}
where $\rho_{t}(\mu_{t}, \sigma_{t}) = \rho_{t}^{*}$ for the sequence $\rho_{t}^{*},\ldots,\rho_{T-1}^{*}$
that maximizes the planning objective.

\begin{algorithm}[t]
  \caption{\label{alg:rho} $\algofull$}
  \begin{algorithmic}[1]
    \State \textsc{Input: prior mean and variance on average rewards $(\mu_0, \sigma_0)$}
    \State Initialize pre-limit states $(\mu_{n, 0}, \sigma_{n, 0}) = (\mu_0, \sigma_0)$
    \For{each epoch $t \in 0, \ldots, T-1$}
    \State Letting $\bar{b}_t \defeq \sum_{v=t}^{T-1} b_v$, solve the following (e.g., using stochastic gradient methods)
    \begin{equation}
      \label{eqn:rho}
      \rho_{t}(\mu_t,\sigma_t) \in
      \argmax_{\bar{\rho} \in \Delta_\numarm}
      \left\{ V^{\bar{\rho}}_{t}(\mu_{t},\sigma_{t})
        \defeq \E_{t}  \left[ \max_{a} \left\{ \mu_{t,a}
            + \sqrt{\frac{\sigma_{t, a}^4 \bar{\rho}_{a} \bar{b}_{t}}
              {s_a^2 + \sigma_{t, a}^2 \bar{\rho}_{a} \bar{b}_{t}}} Z_{t, a}
          \right\} \right] \right\}
    \end{equation}
    \State For each unit $j =1, \ldots, b_t n$,
    sample arms according to $\rho_{t}(\mu_t,\sigma_t)$ 
    and observe reward $R^t_{a, j}$ if  arm $a$ was sampled ($\xi_{a, j}^t = 1$)
    \State Use aggregate rewards 
    $\sqrt{n} \bar{R}_{t, a}^{n} = \frac{1}{b_{t}\sqrt{n}}
    \sum_{j=1}^{b_{t}n}\xi_{a,j}^{t} R^{t}_{a,j}$ and  the
    formula~\eqref{eqn:pre-limit-posterior} to compute the next
    state transition $(\mu_{n, t+1}, \sigma_{n, t+1})$ 
    \EndFor 
    \State \Return $\argmax_{a \in [\numarm]} \mu_{n, T, a}$
  \end{algorithmic}
\end{algorithm}

The planning problem~\eqref{eqn:rho-planning} can be simplified to consider a
constant allocation $\bar{\rho}\in \Delta_{K}$ to be deployed for every
remaining period.  
Intuitively,
since the allocation does not change as the experiment progresses, we can
accumulate updates to the posterior beliefs at each epoch and think of it as a
single update to the current belief, resulting in the terminal posterior
belief.  In this sense, the policy can be seen as a dynamic extension
of the single-batch problem studied in~\cite{FrazierPo10}. We summarize how our procedure will be applied to finite batch
problems in Algorithm~\ref{alg:rho}, and formalize this insight in the below
result. See Section~\ref{section:proof-rho-reduction} for its proof. 
\begin{lemma}
  \label{lemma:rho-reduction}
  Let $\bar{b}_t \defeq \sum_{v=t}^{T-1} b_v$. For any sequence of future
  allocations $\bar{\rho}_{t},\cdots,\bar{\rho}_{T-1} \in \Delta_\numarm$ that only
  depends on $(\mu_t,\sigma_t)$, there is a constant allocation
  $\bar{\rho}(\mu_t, \sigma_t)$ achieving the same Bayes simple regret 
  \begin{equation*}
    V^{\bar{\rho}_{t:T-1}}_{t}(\mu_{t},\sigma_{t}) = V^{\bar{\rho}}_{t}(\mu_{t},\sigma_{t})
    ~~\mbox{where}~~V^{\bar{\rho}}_{t}(\mu_{t},\sigma_{t})~\mbox{is defined in Eq.~}\eqref{eqn:rho}.
  \end{equation*}
  Thus, it is sufficient to solve the constant allocation planning
  problem~\eqref{eqn:rho} to achieve the same optimal value as the original
  problem~\eqref{eqn:rho-planning}.
\end{lemma}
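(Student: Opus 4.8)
The plan is to exploit the open-loop (MPC) structure of the planning objective~\eqref{eqn:rho-planning}. Because the sequence $\bar{\rho}_t, \ldots, \bar{\rho}_{T-1}$ is committed to at time $t$ and does not react to future observations, the variance recursion~\eqref{eqn:dynamics-var} becomes a purely deterministic recursion, so each coefficient $\sigma_{v,a}$ entering the mean update~\eqref{eqn:dynamics-mean} is a deterministic number (given $(\mu_t,\sigma_t)$). Consequently, for each arm $a$ the terminal posterior mean $\mu_{T,a} = \mu_{t,a} + \sum_{v=t}^{T-1} \sigma_{v,a}\sqrt{b_v \bar{\rho}_{v,a}\sigma_{v,a}^2/(s_a^2 + b_v \bar{\rho}_{v,a}\sigma_{v,a}^2)}\, Z_{v,a}$ is an affine function of the independent standard normals $Z_{t,a},\ldots,Z_{T-1,a}$, hence Gaussian. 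First I would collapse this sum of independent Gaussian increments into a single Gaussian increment, arm by arm.

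The crux is a telescoping identity for the increment variances. Each increment $\mu_{v+1,a}-\mu_{v,a}$ has variance $\sigma_{v,a}^2\cdot \frac{b_v \bar{\rho}_{v,a}\sigma_{v,a}^2}{s_a^2 + b_v \bar{\rho}_{v,a}\sigma_{v,a}^2}$, and a short computation using~\eqref{eqn:dynamics-var} (rewriting $\sigma_{v+1,a}^2 = \sigma_{v,a}^2 s_a^2/(s_a^2 + b_v\bar{\rho}_{v,a}\sigma_{v,a}^2)$) shows this equals \emph{exactly} $\sigma_{v,a}^2 - \sigma_{v+1,a}^2$. Summing over $v = t,\ldots,T-1$ then telescopes,
\[
\textstyle \sum_{v=t}^{T-1} \mathrm{Var}(\mu_{v+1,a}-\mu_{v,a}) = \sum_{v=t}^{T-1}\left(\sigma_{v,a}^2 - \sigma_{v+1,a}^2\right) = \sigma_{t,a}^2 - \sigma_{T,a}^2,
\]
so that $(\mu_{T,a})_{a=1}^\numarm$ has the same joint law as $(\mu_{t,a} + \sqrt{\sigma_{t,a}^2 - \sigma_{T,a}^2}\, Z_{t,a})_{a=1}^\numarm$ for a single $Z_t \sim N(0, I_\numarm)$; the cross-arm independence is preserved because the $Z_{v,a}$ are independent across both $v$ and $a$. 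This makes precise the intuition that accumulated posterior updates act like one update to the terminal belief, and it shows that $V^{\bar{\rho}_{t:T-1}}_t(\mu_t,\sigma_t) = \E_t[\max_a\{\mu_{t,a} + \sqrt{\sigma_{t,a}^2 - \sigma_{T,a}^2}\,Z_{t,a}\}]$ depends on the entire sequence only through the terminal variances $\sigma_{T,a}^2$.

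It remains to match these terminal variances with a single constant allocation. Iterating~\eqref{eqn:dynamics-var} gives terminal precision $\sigma_{T,a}^{-2} = \sigma_{t,a}^{-2} + s_a^{-2}\sum_{v=t}^{T-1} b_v \bar{\rho}_{v,a}$, so $\sigma_{T,a}^2$ is a function of the sequence only through the budget-weighted efforts $c_a := \sum_{v=t}^{T-1} b_v \bar{\rho}_{v,a}$. Since each $\bar{\rho}_v \in \Delta_\numarm$, we have $c_a \geq 0$ and $\sum_a c_a = \sum_{v=t}^{T-1} b_v = \bar{b}_t$, whence $\bar{\rho} := c/\bar{b}_t \in \Delta_\numarm$. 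For this constant allocation the one-step recursion yields precision $\sigma_{t,a}^{-2} + s_a^{-2}\bar{b}_t \bar{\rho}_a = \sigma_{t,a}^{-2} + s_a^{-2} c_a$, matching the sequence's terminal precision, and by the same identity as above the resulting increment variance $\sigma_{t,a}^2 - \sigma_{T,a}^2$ is precisely the squared coefficient $\sigma_{t,a}^4 \bar{\rho}_a \bar{b}_t/(s_a^2 + \sigma_{t,a}^2 \bar{\rho}_a \bar{b}_t)$ appearing in~\eqref{eqn:rho}. Therefore $V^{\bar{\rho}_{t:T-1}}_t = V^{\bar{\rho}}_t$, and since a constant allocation is itself a special case of such a sequence, the two planning problems share the same optimal value.

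I expect the only genuinely substantive step to be recognizing and verifying the telescoping identity $\mathrm{Var}(\mu_{v+1,a}-\mu_{v,a}) = \sigma_{v,a}^2 - \sigma_{v+1,a}^2$; the remaining arguments are bookkeeping about the simplex. I would also emphasize that the open-loop assumption is essential: the lemma hypothesizes a sequence depending only on $(\mu_t,\sigma_t)$, which is exactly what renders the coefficients $\sigma_{v,a}$ deterministic and the increments independent Gaussians. Were the allocations allowed to adapt to the intermediate draws $Z_{t},\ldots,Z_{T-2}$, the $\sigma_{v,a}$ would be random and the increments correlated, so the collapse to a single Gaussian—and hence the reduction—would fail.
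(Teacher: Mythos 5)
Your proof is correct and follows essentially the same route as the paper's: both rest on the telescoping identity $\mathrm{Var}(\mu_{v+1,a}-\mu_{v,a}) = \sigma_{v,a}^2 - \sigma_{v+1,a}^2$, the collapse of the independent Gaussian increments into a single draw with variance $\sigma_{t,a}^2 - \sigma_{T,a}^2$, and the change of variables $\bar{\rho}_a = \sum_{v=t}^{T-1} b_v \bar{\rho}_{v,a}/\bar{b}_t$. Your additional checks (that $\bar{\rho}$ lies in $\Delta_\numarm$ and that cross-arm independence is preserved) are details the paper leaves implicit, but the substance is identical.
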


In the empirical benchmarks to come, we find that among Bayesian policies that
utilize batch Gaussian approximations, $\algofull$ achieves the largest
performance gain across a wide variety of settings. When the number of
reallocation epochs is small, $\algo$ calibrates the level of exploration to
the time horizon by iteratively planning with the Gaussian sequential
experiment~\eqref{eqn:rho}.

\paragraph{Pathwise Policy Iteration} We also consider reinforcement
learning/approximate DP methods for solving the full horizon
problem~\eqref{eqn:bayesian-dp}. 
Our framework enables the use of policy iteration~\citep{SuttonBa18}
to improve upon a standard sampling policy, such as Thompson Sampling or Top-Two Thompson Sampling.
Policy iteration (also referred to as `Rollout') takes a base policy $\pi$
and returns an improved policy $\pi'$ that selects the sampling allocation at each epoch $t$ that maximizes
the $Q$-function of $\pi$, which
gives the expected reward of selecting sampling allocation $\bar{\rho}$ at state $(\mu_{t}, \sigma_{t})$
and epoch $t$ given that all future sampling allocations are determined by the base policy,
\[
  Q_{t}^{\pi}(\bar{\rho}; \mu_{t}, \sigma_{t})
  = \E_{t} \left[ V_{t+1}^{\pi}\left(\mu_{t} + \sigma_{t}\sqrt{\frac{b_t \bar{\rho}
  \sigma_{t}^{2}}{s^{2}+b_t\bar{\rho}\sigma_{t}^{2}}} Z_{t}, 
  (\sigma_{t}^{-2} + s^{-2} b_t \bar{\rho})^{-1} \right) \right].
\]
Optimizing $Q_{t}^{\pi}$ would typically require discretizing the state and
action spaces or using function approximation for the Q-function. However,
since the dynamics of the MDP are differentiable, one can simply draw a Monte
Carlo sample of the $Q$-function and directly compute the gradient with
respect to $\bar{\rho}$ through auto-differentiation~\citep{TensorFlow15, PyTorch19}, as long
as the base policy $\pi$ is differentiable with respect to the state
$(\mu, \sigma)$.  One can then use stochastic gradient ascent to optimize the $Q$-function. 
Although differentiability is not guaranteed
for all policies, we observe there is often reliable differentiable
surrogates. For example, for Thompson Sampling we have the following
approximation
\[
  \begin{aligned}
\pi^{\text{TS}}(\mu, \sigma) 
= \E\left[\textsf{onehot}(\argmax_{a} \mu_{a} + \sigma_{a}Z_{a})\right]
\approx \E\left[\textsf{softmax}(\mu_{a} + \sigma_{a}Z_{a})\right] =: \hat{\pi}^{\text{TS}}(\mu, \sigma)
\end{aligned}
\]
where $Z_{a} \simiid N(0, 1)$, $\textsf{onehot}$ maps an index to its
corresponding one-hot vector, and
$\textsf{softmax}(v) \defeq \exp(v_{a})/\sum_{a'} \exp(v_{a'})$. We define the
policy TS$+$ to be policy iteration on the approximate Thompson sampling
policy, in which at every state $(\mu_{t}, \sigma_{t})$ the allocation is
determined by solving
\begin{equation}
  \label{eqn:TS-plus}
  \pi^{\text{TS}+}_{t}(\mu_{t}, \sigma_{t}) \in \argmax_{\bar{\rho}\in \Delta_{K}} Q_{t}^{\hat{\pi}^{\text{TS}}}(\bar{\rho}; \mu_{t}, \sigma_{t})
\end{equation}
with stochastic gradient ascent.

\paragraph{Pathwise Policy Gradient} We apply policy gradient (PG) computed through
black-box ML models, as well as their limited lookahead
variants~\citep{SuttonBa18}. We consider an differentiable parameterized
policy $\pi_\theta = \{\pi_{t}^{\theta} \}_{t=0}^{T-1}$ (e.g., neural
networks) and aim to directly optimize the value function~\eqref{eqn:q_fn} using
stochastic approximation methods. We use stochastic gradient ascent over
sample paths $Z_0, \ldots, Z_{T-1}$ to update policy parameters $\theta$:
\begin{equation}
  \label{eqn:policy-gradient}
  \theta \leftarrow \theta + \alpha \nabla_{\theta} V_{0}^{\pi_{\theta}}(\mu_{0},\sigma_{0}),
\end{equation}
for step-size $\alpha$ and prior $(\mu_{0}, \sigma_{0})$.
As for policy iteration, that the gradients of the value function can be computed \emph{exactly}
through auto-differentiation along every sample path.
This is in contrast with standard RL methods 
for continuous state and action spaces,
such as Deep Deterministic Policy Gradient (DDPG) \citep{LillicrapEtAl16},
that require function approximation for the Q-functions, a step
which can introduce significant approximation errors.

For long horizons, training a policy with gradient descent is more
difficult due to vanishing
gradients. Policies are evaluated solely on the simple regret incurred at the
end of the experiment, which makes it difficult to do credit assignment for
sampling allocations near the beginning of the experiment.  For policy gradient, we consider
$m$-lookahead policies (denoted PG-$m$) trained to optimize the shorter time
horizon objective $V_{T-m}^{\pi_\theta}(\mu_{0},\sigma_{0})$. 


\subsection{Empirical comparison of Gaussian batch policies}
\label{section:benchmarking}

There are many policies that use the Gaussian sequential experiment
(Definition~\ref{def:gse}) as an approximation for batch rewards.  To identify
performant policies among them, we turn to empirical benchmarking on realistic
examples.  In total, we consider 640 problem instances across different number
of reallocation opportunities, number of arms, reward distributions, and
priors. 

\paragraph{Setup}
Since we are interested in the performance of these policies in the pre-limit
problem, we simulate state transitions~\eqref{eqn:pre-limit-posterior} based
on observed aggregate rewards $\sqrt{n} \bar{R}_{t, a}^{n}$.  As a concrete
illustration of our main findings, consider a setting with $K = \{10, 100 \}$
arms with up to $T=10$ batches. All batches have the same number of
observations in each epoch and we consider two batch sizes: $b_{t}n = 100$
samples and $b_{t}n = 10,000$ observations per epoch. We evaluate each policy
using the Bayes simple regret~\eqref{eqn:bsr} under the true prior. For
policies that use the Gaussian sequential experiment, we train them by
approximating the true prior with a Gaussian prior with the same mean and
standard deviation in order to preserve conjugacy and utilize the MDP
formulation~\eqref{eqn:dynamics}. All policies are evaluated according to the
pre-limit finite batch problem under the true prior.

We consider two data-generating distributions. We use the
$\textsf{Beta-Bernoulli experiment}$ as our primary vehicle, where rewards for
each arm are drawn from independent $\textsf{Bernoulli}(\theta_{a})$
distributions. Here, the parameters $\theta_{a}$ are drawn independently from
a known $\textsf{Beta}(\alpha,\beta)$ prior.  For each batch size
$b_{t}n = 100$ or $10,000$, we scale the prior parameters
$\alpha = \beta = b_{t}n$ to preserve the difficulty. Concretely, when
$b_{t}n = 100$, the prior mean for each parameter $\theta_{a}$ is 0.5 and the
prior standard deviation is $\approx 0.03$, so typically each treatment arm
has an average reward of $0.5 \pm 0.03$. We focus on small differences in
average rewards as this is often the case in real-world
experiments~\citep{KohaviLoSoHe09, KohaviDeFrLoWaXu12, KohaviDeFrWaXuPo13}.

Since we cannot vary the measurement noise
$s^{2}_a = \var(\epsilon_a)$~\eqref{eqn:rewards} under the
$\textsf{Beta-Bernoulli experiment}$, we also consider an alternative setting
which we call the $\textsf{Gamma-Gumbel experiment}$.  Here, rewards for each
arm are drawn from independent $\textsf{Gumbel}(\mu_{a}, \beta)$
distributions, with a known, fixed scale parameter $\beta$ that determines the
measurement variance $s^2_a = \frac{\pi^2}{6}\beta$.  The location parameters
$\mu_{a}$ are drawn independently from a known $\textsf{Gamma}(\alpha,\beta)$
prior, which determines the differences in the average reward between the
arms.  For each batch size $b_{t}n$, we set $\alpha = b_{t}n$ and
$\beta = 1/b_{t}n$ to preserve the difficulty of the problem as measured by
the gap between average rewards.

\begin{table}[t]
  \hspace{-1cm}
  \begin{tabular}{|c|c|c|c|c|c|c|c|c|}
  \hline 
  \multirow{3}{*}{Policy} & \multirow{3}{*}{Baseline} & \multicolumn{1}{c|}{Reward } & \multicolumn{1}{c|}{Number } & \multirow{2}{*}{Batch size} & \multicolumn{2}{c|}{Measurement } & \multicolumn{2}{c|}{Prior}\tabularnewline
   &  & Distribution & of Arms &  & \multicolumn{2}{c|}{Noise} & \multicolumn{1}{c}{} & \tabularnewline
  \cline{3-9} \cline{4-9} \cline{5-9} \cline{6-9} \cline{7-9} \cline{8-9} \cline{9-9} 
   &  & Bernoulli & 100 & 10,000 & 0.2 & 5 & Top-One & Descending\tabularnewline
  \hline 
  \hline 
  TS & 70.1 & 80.9 & 73.4 & 69.1 & 51.6 & 89.1 & 60.0 & 62.8\tabularnewline
  \hline 
  Top-Two TS & 68.7 & 82.1 & 74.1 & 71.8 & 46.8 & 88.9 & 59.3 & 65.0\tabularnewline
  \hline 
  TS$+$ & 70.9 & 82.4 & 76.6 & 81.4 & 63.3 & 83.3 & 63.0 & 67.9\tabularnewline
  \hline 
  Myopic & 66.5 & 76.0 & 64.3 & 69.1 & 49.7 & 80.2 & 60.0 & 63.6\tabularnewline
  \hline 
  Policy Gradient & 63.9 & 79.1 & 61.4 & \textbf{63.6} & 49.3 & 79.8 & 55.3 & 59.6\tabularnewline
  \hline 
  RHO & \textbf{58.8} & \textbf{73.1} & \textbf{55.8} & 64.2 & \textbf{41.5} & \textbf{78.6} & \textbf{52.6} & \textbf{56.6}\tabularnewline
  \hline 
  \end{tabular}

  \vspace{1em}

  \caption{\label{table:gse_compare} Comparison of empirical performance of
    Gaussian batch policies.  Simple regret of various policies as a
    percentage of the simple regret of the uniform allocation policy (lower is
    better).  Baseline column displays results for the
    $\textsf{Gamma-Gumbel experiment}$ with $K = 10$ arms, batch size
    $b_{t}n = 100$, $s^{2} = 1$ with a flat prior. The other columns display
    results for alternative settings, where one aspect of the baseline setting
    is changed (e.g. $K = 100$ arms instead of $K = 10$ arms) with all else
    remaining the same. Out of 640 problem instances we study, $\algo$
    achieves the best performance in 493 (77.0\%) settings and policy gradient
    in 76 (11.8\%) settings, among a selection of 10 policies including Gaussian policies in
    Section~\ref{section:algorithms-adp} and standard batch experimentation
    policies in Section~\ref{section:experiments}.}
  \end{table}

\paragraph{Algorithms} In addition to the policies mentioned in
Section~\ref{section:algorithms-mdp} that use the dynamic programming
structure of the model, we also implement policies such as Gaussian Thompson
Sampling which are natural adaptations of standard bandit policies to the
Gaussian sequential experiment. We summarize the the list of methods we
compare below.
\begin{itemize}[itemsep=0pt]
\item $\algofull$: Solves the planning problem \eqref{eqn:rho} as in
  Algorithm \ref{alg:rho}
\item Gaussian Limit Thompson Sampling: TS policy for the Gaussian sequential
  experiment
\item Gaussian Limit Top-Two Thompson Sampling: Top-Two TS \citep{Russo20} for the Gaussian
  sequential experiment.
\item Myopic: Maximizes one-step lookahead value function for the
  problem~\eqref{eqn:bayesian-dp}; a randomized version of the Knowledge
  Gradient method
\item TS$+$: Policy iteration on the approximate TS
  policy~\eqref{eqn:TS-plus}
\item Policy Gradient: Heuristically solves the dynamic
  program~\eqref{eqn:bayesian-dp} using a policy parameterized by a
  feed-forward neural network; trained through policy
  gradient~\eqref{eqn:policy-gradient} with episode length 5
\end{itemize}

\begin{figure}[t]
  \vspace{-1.6cm}
  \centering
  \hspace{-1.6cm}
  \subfloat[\centering Measurement variance $s_{a}^{2} = 1$.]{\label{fig:gse_bsr}{\includegraphics[height=5.2cm]{./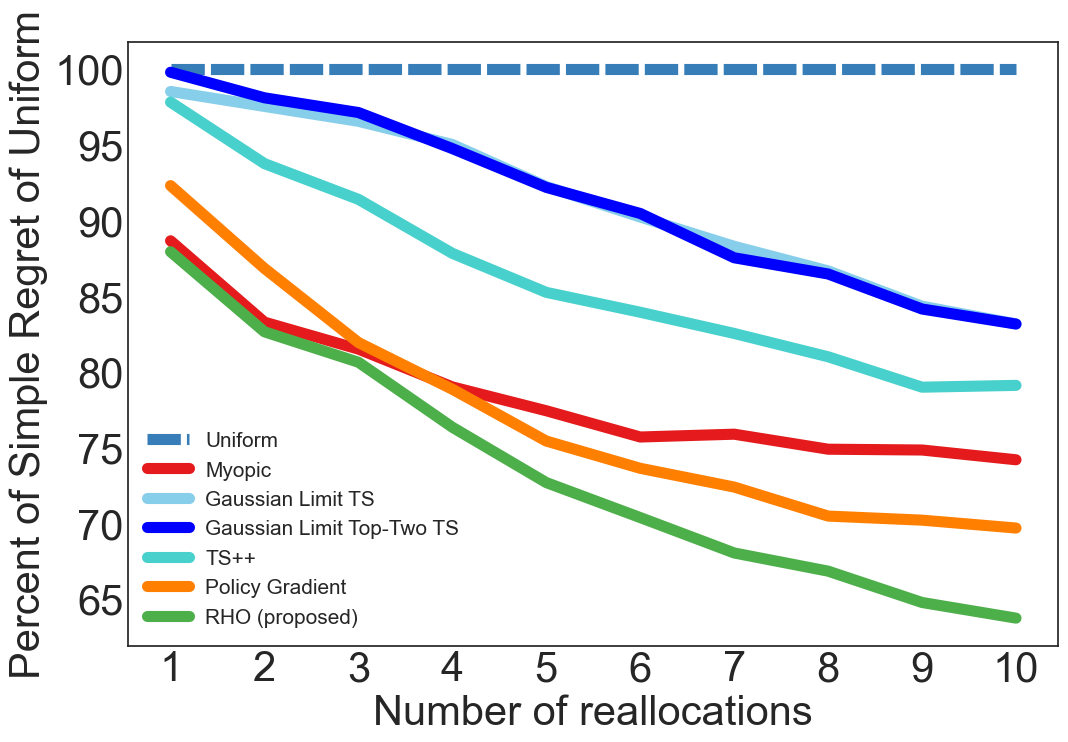} }}%
  \subfloat[\centering Measurement variance $s_{a}^{2} \in \{0.2, 1, 5\}$ for $T = 10$]{\label{fig:gse_var}{\includegraphics[height=5.2cm]{./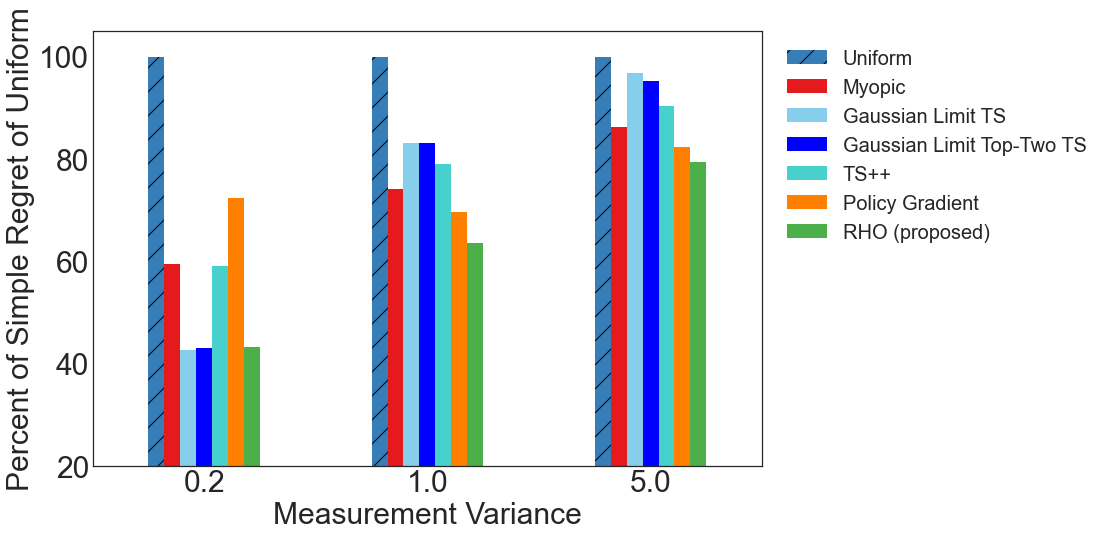}} }%
  \vspace{.4cm}
  \caption{\label{fig:gse_compare} Comparison of Gaussian batch policies.
    Relative gains over the uniform allocation as measured by the Bayes simple
    regret for the finite batch problem with $K = 100$ treatment arms and
    batches of size $b_{t} n = 10,000$.  $\textsf{Gamma-Gumbel experiment}$
    where individual rewards are Gumbel with a $\mbox{Gamma}$ prior. $\algo$
    maintains strong performance for small and long horizon experiments, as
    well as for low and high noise levels.  Myopic performs well in short
    horizon experiments, but worsens in longer horizon experiments due to
    insufficient exploration. The Thompson sampling policies are effective for
    low measurement noise but their performance degrades in underpowered
    settings. 
  }
\end{figure}

\paragraph{Overview of empirical results} We refer the reader to the
interactive plot~\eqref{eqn:streamlit} for a comprehensive presentation on our
benchmarking results.  In Figure~\ref{fig:gse_compare}, we provide
representative results where we consider $\numarm = 100$ arms and batches of
size $b_{t} n = 10,000$.  Although all of these policies use Gaussian batch
approximations, we evaluate them in a pre-limit problem with Gumbel
distributed rewards and a Gamma prior on the mean rewards, which we refer to
as the $\textsf{Gamma-Gumbel experiment}$.  Figure~\ref{fig:gse_bsr} focuses
on a fixed measurement variance $(s_{a}^{2} = 1)$ across reallocation epochs,
whereas Figure~\ref{fig:gse_var} considers different measurement noises for a
fixed number of reallocations $T = 10$. Finally, Table~\ref{table:gse_compare}
presents ablation studies over different reward distributions, number of arms,
batch size, measurement noise, and prior specifications. 

$\algo$ exhibits consistently strong performance across a wide array of
settings; other policies are effective in some instances and less effective in
others.  For example, the myopic policy performs well for short horizon
experiments as expected, but worsens in longer horizon settings due to
insufficient exploration.  Thompson sampling, which tends to explore more than
other policies, is an effective heuristic when the noise level is low, but
suffers in more underpowered experiments.
The policy gradient method achieves equivalent performance to $\algo$ when
there are a small number of alternatives ($\numarm = 10$) and measurement
noise is high.  TS$+$ improves upon Thompson Sampling under similar
conditions, but in general suffers from instabilities related to
optimization/training.  Compared to $\algo$, policy gradient/iteration methods
(heuristically) solve more sophisticated planning problems accounting for
future adaptivity. However, we observe empirically that they perform worse in
settings with more arms, lower measurement noise, and longer horizon
experiments.

\subsection{Discussion of $\algofull$}
\label{section:rho-analysis}

\begin{figure}[t]
  \vspace{-.4cm}
  \centering
  \includegraphics[height=3.8cm]{./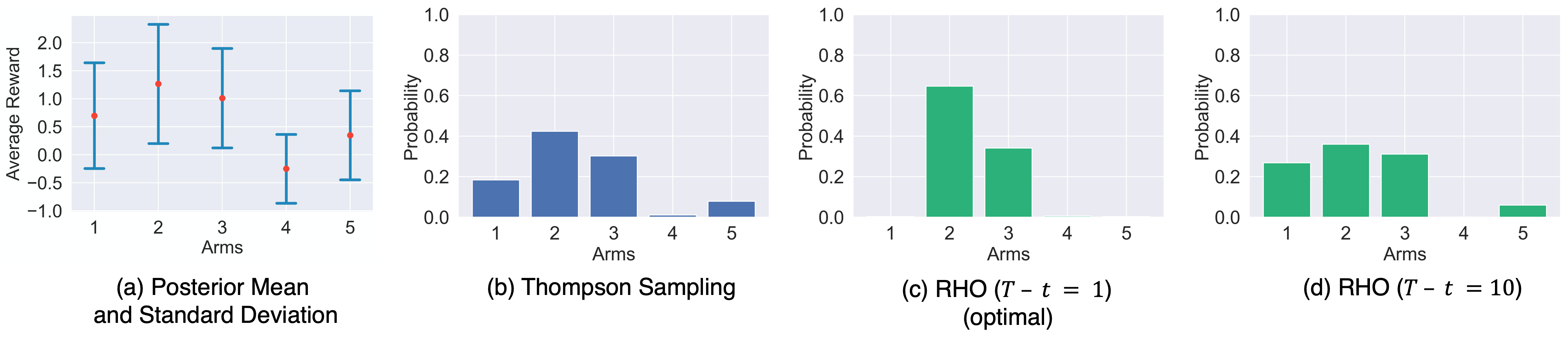}

  \vspace{.2cm}
  \caption{\label{fig:rho_explore} Comparison of sampling allocations.  
  (a) displays posterior means $\mu$ and standard deviations
    $\sigma$ (the length of each whisker) for $K = 5$ arms. The other graphs
    illustrate the sampling allocation computed by different policies given
    posterior belief $(\mu, \sigma)$. (b) shows the sampling
    allocation produced by the Gaussian Thompson sampling policy, (c)
    is $\algo$ when there is only 1 epoch left, and (d) is $\algo$ when
    the residual horizon is $T - t = 10$.  $\algo$ calibrates exploration to
    the length of the remaining horizon; when there is only 1 epoch left, the
    policy gives the Bayes optimal allocation focusing solely on the top two
    arms. When there are many reallocation epochs left, the policy explores
    other arms more and resembles the Thompson sampling allocation.  }
\end{figure}

The strong empirical performance of $\algofull$ impels us to carefully study
the advantages of solving the planning problem~\eqref{eqn:rho}.  In this
subsection, we show that $\algo$ enjoys several desirable properties. First
and foremost, implementing this policy only requires solving the above
optimization problem~\eqref{eqn:rho} at the currently observed state, allowing
it to remain adaptive while being computationally efficient; we use stochastic
gradient methods for this purpose. Second, the objective~\eqref{eqn:rho}
encourages the policy to explore more when the remaining horizon is long,
which leads it to explore aggressively early on in the experiment while
focusing on strong alternatives as the experiment winds down (see
Figure~\ref{fig:rho_explore}).

At each epoch $\algo$ iteratively computes the optimal allocation among
those that assume future allocations will only use currently available
information. It is thus guaranteed to outperform any open-loop allocation
including static designs; in particular, $\algo$ achieves smaller Bayes
simple regret than the uniform allocation. This gives a practical
performance guarantee even for small $T$, the regime which is relevant for
most real-world experimental settings, for which the uniform allocation is a
highly competitive policy.
\begin{proposition}
  \label{prop:rho-vs-static}
  Let $\rho_t$ be given by Algorithm~\ref{alg:rho}.  For any policy
  $\bar{\pi}_{t:T} = (\bar{\pi}_t(\mu_t, \sigma_t), \ldots,
  \bar{\pi}_{T-1}(\mu_t, \sigma_t))$ that only depends on currently available
  information $(\mu_t,\sigma_t)$, we have
  $V_{t}^{\rho}(\mu_t,\sigma_t) \geq V_{t}^{\bar{\pi}}(\mu_t,\sigma_t)$.
\end{proposition}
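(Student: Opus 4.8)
The plan is to prove the inequality in two stages: first reduce any open-loop competitor to a single constant allocation via Lemma~\ref{lemma:rho-reduction}, and then show that the iterative re-planning performed by $\algo$ can only improve on the best such constant allocation. Throughout, I would write $W_t(\mu_t,\sigma_t) \defeq \max_{\bar\rho \in \Delta_\numarm} V_t^{\bar\rho}(\mu_t,\sigma_t)$ for the best open-loop (constant-allocation) value from state $(\mu_t,\sigma_t)$; by construction $\rho_t(\mu_t,\sigma_t) \in \argmax_{\bar\rho} V_t^{\bar\rho}(\mu_t,\sigma_t)$ attains this maximum. The goal then reduces to showing $V_t^{\rho}(\mu_t,\sigma_t) \ge W_t(\mu_t,\sigma_t) \ge V_t^{\bar\pi}(\mu_t,\sigma_t)$.

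\emph{Stage 1 (reduction).} Given any policy $\bar\pi_{t:T}$ whose allocations depend only on the current state $(\mu_t,\sigma_t)$, Lemma~\ref{lemma:rho-reduction} applies directly: such a policy is exactly a fixed sequence $(\bar\pi_t(\mu_t,\sigma_t),\ldots,\bar\pi_{T-1}(\mu_t,\sigma_t))$ depending only on $(\mu_t,\sigma_t)$, so there exists a constant allocation $\bar\rho' \in \Delta_\numarm$ with $V_t^{\bar\pi}(\mu_t,\sigma_t) = V_t^{\bar\rho'}(\mu_t,\sigma_t)$. Since $\bar\rho'$ is feasible in the maximization defining $W_t$, this immediately gives $V_t^{\bar\pi}(\mu_t,\sigma_t) \le W_t(\mu_t,\sigma_t)$, and it remains only to establish $V_t^{\rho} \ge W_t$.

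\emph{Stage 2 (re-planning improvement).} I would prove $V_t^{\rho} \ge W_t$ by backward induction on $t$. At the final epoch $t = T-1$ a single allocation remains and there is no further re-planning, so $\algo$ directly attains $V_{T-1}^{\rho} = \max_{\bar\rho} V_{T-1}^{\bar\rho} = W_{T-1}$. For the inductive step, fix $(\mu_t,\sigma_t)$ and let $\rho_t \defeq \rho_t(\mu_t,\sigma_t)$ be the constant allocation $\algo$ commits to for epoch $t$ before re-solving. Decomposing the value of $\algo$ over the first transition and applying the inductive hypothesis $V_{t+1}^{\rho} \ge W_{t+1}$ gives
\[
V_t^{\rho}(\mu_t,\sigma_t) = \E_t^{\rho_t}\!\left[ V_{t+1}^{\rho}(\mu_{t+1},\sigma_{t+1}) \right] \ge \E_t^{\rho_t}\!\left[ W_{t+1}(\mu_{t+1},\sigma_{t+1}) \right].
\]
The key observation is that freezing $\rho_t$ is always a feasible continuation at epoch $t+1$, so $W_{t+1}(\mu_{t+1},\sigma_{t+1}) \ge V_{t+1}^{\rho_t}(\mu_{t+1},\sigma_{t+1})$; substituting and collapsing the one-step Bellman decomposition of the fixed constant-allocation policy yields $\E_t^{\rho_t}[V_{t+1}^{\rho_t}(\mu_{t+1},\sigma_{t+1})] = V_t^{\rho_t}(\mu_t,\sigma_t) = W_t(\mu_t,\sigma_t)$, where the last equality is the definition of $\rho_t$. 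Chaining with Stage 1 closes the induction and gives the claim.

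The main obstacle is the Bellman decomposition invoked in Stage 2: one must verify that the constant-allocation value $V_t^{\bar\rho}$, which is defined through the \emph{accumulated-budget} reformulation in~\eqref{eqn:rho}, genuinely decomposes over a single epoch. I would handle this by working with the underlying fixed policy ``deploy $\bar\rho$ at every remaining epoch,'' whose value satisfies the one-step decomposition automatically by the tower property applied to the MDP transitions~\eqref{eqn:dynamics}, and invoking Lemma~\ref{lemma:rho-reduction} only to identify this value with the planning objective~\eqref{eqn:rho} that $\algo$ actually optimizes. The remaining ingredients---the reduction in Stage 1 and the feasibility comparison $W_{t+1} \ge V_{t+1}^{\rho_t}$---are immediate. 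Specializing to $\bar\pi$ equal to the uniform allocation then yields the stated improvement over A/B testing.
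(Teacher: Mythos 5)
Your proposal is correct and takes essentially the same route as the paper's proof: reduction of the competitor to a constant allocation via Lemma~\ref{lemma:rho-reduction}, followed by backward induction with base case $t=T-1$, where the inductive step hinges on the identity $\E_t\left[V_{t+1}^{\rho_t}(\mu_{t+1},\sigma_{t+1})\right] = V_t^{\rho_t}(\mu_t,\sigma_t)$. The only organizational difference is that the paper establishes this identity by an explicit Gaussian variance-accumulation computation for the composite policy (play $\rho_t$ at epoch $t$, then an arbitrary constant $\bar{\rho}$ afterwards) and then specializes $\bar{\rho}=\rho_t$, whereas you specialize to $\rho_t$ first and justify the collapse by the tower property plus Lemma~\ref{lemma:rho-reduction}---the same telescoping fact that the lemma's proof contains.
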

\noindent See Section~\ref{section:proof-rho-vs-static} for the proof.
In cases where it is computationally expensive to compute the allocation 
$\rho_t(\mu_t, \sigma_t)$ as the experiment progresses, one can learn the policy
offline via standard ML methodologies (see Section~\ref{section:distilled}).


Although we are primarily interested in short horizons $T$ in practice, we can
theoretically characterize the behavior of $\algo$ in the infinite horizon
limit $T - t\to \infty$. We show that the optimization problem~\eqref{eqn:rho}
becomes strongly concave for a large enough residual horizon $T-t$, which was
observed for the single-batch problem in~\cite{FrazierPo10}.  Then, by using
the KKT conditions of the optimization problem~\eqref{eqn:rho}, we also
characterize the asymptotic sampling policy $\rho_t$ converges to as $T-t$
grows large. We find that it converges to a novel posterior sampling policy we
denote as Density Thompson Sampling (DTS), as it samples arms proportionally
to the partial derivatives of the Gaussian Thompson Sampling policy.
\begin{theorem}
  \label{theorem:asymptotic-rho}
  Consider any fixed epoch $t$ and state $(\mu, \sigma)$.
  Let $\Delta_K^\epsilon = \Delta_K \cap \{p: p \ge \epsilon\}$ be the truncated simplex. 
  Suppose that as $T-t$ grows large, 
  the residual sample budget $\bar{b}_{t} = \sum_{s=t}^{T-1} b_{s}\to \infty$.
  \begin{enumerate}
  \item As $\bar{b}_{t}\to \infty$, the gradient scales as $\nabla_{\bar{\rho}} V^{\bar{\rho}}_{t}(\mu,\sigma)
     \sim c\frac{s_{a}^{2}}{\bar{b}_{t}\bar{\rho}_{a}^{2}}$, where $c$ depends only on $(\mu, \sigma)$.
  \item There exists $T_0$ such that $\forall T - t > T_0$,
  $\bar{\rho} \mapsto \bar{b}_{t}V^{\bar{\rho}}_{t}(\mu,\sigma)$ is strongly
  concave on $\Delta_K^\epsilon$.
  \item Suppose there exists
  $T_1$ such that for $\forall T - t > T_1$, the $\algo$ allocation satisfies
  $\rho_{t,a}(\mu,\sigma) > \epsilon$. Then as $T-t\to \infty$
  \begin{equation*}
    \rho_{t,a}(\mu,\sigma) \to \pi_{a}^{\textup{DTS}}(\mu, \sigma),~\mbox{where}~
    \pi^{\rm DTS}_a(\mu, \sigma) \propto s_a \left[ \frac{\partial}{\partial
        \mu_a}\pi_{a}^{\textup{TS}}(\mu,\sigma) \right]^{1/2}
  \end{equation*}
  and $\pi_{a}^{\text{TS}}(\mu, \sigma):= \P(a^{*} = a|\mu, \sigma)$ is the Thompson Sampling probability.
\end{enumerate}
\end{theorem}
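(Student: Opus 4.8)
The plan is to reparametrize the planning objective \eqref{eqn:rho} through the \emph{effective signal} $\tau_a(\bar\rho_a) \defeq \left(\sigma_{t,a}^4\bar\rho_a\bar{b}_t/(s_a^2+\sigma_{t,a}^2\bar\rho_a\bar{b}_t)\right)^{1/2}$, so that $V_t^{\bar\rho}(\mu,\sigma)=g(\tau(\bar\rho))$ with $g(\tau)\defeq\E[\max_a\{\mu_{t,a}+\tau_a Z_a\}]$ and $Z\sim N(0,I_\numarm)$. All three claims reduce to the scalar asymptotics of the maps $\tau_a(\cdot)$ together with first/second order properties of $g$. A direct computation gives, as $\bar b_t\to\infty$ with $\bar\rho_a$ bounded away from $0$, $\tau_a(\bar\rho_a)\to\sigma_{t,a}$ (the signal saturates at the prior standard deviation, reflecting that an infinite budget recovers $h_a$ exactly), $\tau_a'(\bar\rho_a)\sim s_a^2/(2\sigma_{t,a}\bar b_t\bar\rho_a^2)$, and $\tau_a''(\bar\rho_a)\sim -s_a^2/(\sigma_{t,a}\bar b_t\bar\rho_a^3)$. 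For \textbf{Part 1}, since $\tau\mapsto\max_a\{\mu_{t,a}+\tau_a Z_a\}$ is convex, globally Lipschitz, and differentiable off the null set of ties, dominated convergence permits differentiating under the expectation: $\partial_{\tau_a}g=\E[Z_a\mathbf 1\{a\in\argmax\}]\defeq c_a(\tau)$, hence $\partial V_t^{\bar\rho}/\partial\bar\rho_a=\tau_a'(\bar\rho_a)\,c_a(\tau(\bar\rho))$. Conditioning on the competitors $M_a\defeq\max_{a'\neq a}\{\mu_{t,a'}+\tau_{a'}Z_{a'}\}$ and integrating by parts in $Z_a$ (Stein's identity) gives $c_a(\tau)=\tau_a\,\E[\varphi_{\tau_a}(M_a-\mu_{t,a})]$ with $\varphi_{\tau_a}$ the $N(0,\tau_a^2)$ density; the same conditioning yields $\partial_{\mu_a}\pi_a^{\textup{TS}}(\mu,\sigma)=\E[\varphi_{\sigma_a}(M_a^{\sigma}-\mu_{t,a})]$. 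Letting $\tau\to\sigma$ then gives $c_a(\tau)\to\sigma_{t,a}\,\partial_{\mu_a}\pi_a^{\textup{TS}}$, and combining with the scaling of $\tau_a'$ establishes $\partial V_t^{\bar\rho}/\partial\bar\rho_a\sim\tfrac12\,\partial_{\mu_a}\pi_a^{\textup{TS}}\cdot s_a^2/(\bar b_t\bar\rho_a^2)$, i.e.\ the claim with $c=\tfrac12\partial_{\mu_a}\pi_a^{\textup{TS}}$.

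For \textbf{Part 2} I would decompose the Hessian as $\nabla^2_{\bar\rho}V_t^{\bar\rho}=\diag\!\big(c_a(\tau)\,\tau_a''(\bar\rho_a)\big)+D_{\tau'}\,\nabla^2_\tau g\,D_{\tau'}$ with $D_{\tau'}=\diag(\tau_a'(\bar\rho_a))$. Convexity of $g$ gives $\nabla^2_\tau g\succeq0$, so the second term is positive semidefinite but of order $1/\bar b_t^2$ since each $\tau_a'=O(1/\bar b_t)$; the first term is diagonal with strictly negative entries of order $1/\bar b_t$, because $c_a(\tau)>0$ and $\tau_a''<0$. Scaling by $\bar b_t$, the PSD term vanishes and $\bar b_t\,\nabla^2_{\bar\rho}V_t^{\bar\rho}\to-\diag\!\big(c_a\, s_a^2/(\sigma_{t,a}\bar\rho_a^3)\big)$ uniformly on $\Delta_K^\epsilon$. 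On the truncated simplex the diagonal entries are bounded below by a positive constant (since $\bar\rho_a\le1$ and $c_a$ is bounded away from $0$), so $\bar b_t\nabla^2_{\bar\rho}V_t^{\bar\rho}\preceq-\delta_0 I$ for some $\delta_0>0$ once $T-t$ is large; restricting to the tangent space $\{\sum_a v_a=0\}$ preserves negative definiteness, which is exactly strong concavity of $\bar b_t V_t^{\bar\rho}$ on $\Delta_K^\epsilon$.

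For \textbf{Part 3}, the hypothesis $\rho_{t,a}>\epsilon$ makes the maximizer interior, so the only active constraint is $\sum_a\bar\rho_a=1$ and the KKT conditions reduce to gradient equalization $\partial V_t^{\bar\rho}/\partial\bar\rho_a=\lambda$ for all $a$. Multiplying by $\bar b_t$ and passing to the limit with Part 1 gives $s_a^2\,\partial_{\mu_a}\pi_a^{\textup{TS}}/(2\rho_{t,a}^2)\to$ a common constant, whence $\rho_{t,a}\propto s_a[\partial_{\mu_a}\pi_a^{\textup{TS}}]^{1/2}$; normalizing over $\Delta_\numarm$ is precisely $\pi_a^{\rm DTS}$. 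To make the limit rigorous I would use compactness of the simplex to extract subsequential limits of $\rho_t$, note that each solves the limiting gradient-equalization equation, and invoke the strong concavity of Part 2 to conclude uniqueness, so that the full sequence $\rho_t$ converges to $\pi^{\rm DTS}$.

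The main obstacle is the uniform control underlying Part 2. I must show that $\nabla^2_\tau g$ stays bounded as $\bar b_t\to\infty$ over $\Delta_K^\epsilon$—which holds because $\bar\rho_a\ge\epsilon$ forces $\tau_a$ to remain bounded away from $0$, keeping the Gaussian densities in the integration-by-parts representation of the second derivatives nondegenerate—and that $c_a(\tau)$ is bounded strictly away from $0$, which follows from the strictly positive density $\varphi_{\tau_a}$ in the formula for $c_a$. The precise order separation, namely diagonal curvature $\Theta(1/\bar b_t)$ arising from the concavity of the square-root signal $\tau_a$ against an $O(1/\bar b_t^2)$ convex cross term, is what simultaneously yields the strong concavity and the clean DTS limit; justifying the interchange of differentiation and expectation for the nonsmooth $\max$ is a secondary but necessary technical point.
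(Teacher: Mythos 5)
Your proposal follows essentially the same route as the paper's proof. The paper likewise writes $V_t^{\bar\rho}(\mu,\sigma) = g\left(\varphi\left(\bar\rho,\sigma^2, s^2/\bar{b}_t\right)\right)$ with $g(v) = \E[\max_a\{\mu_a + v_a Z_a\}]$ and $\varphi_a$ identical to your $\tau_a$, derives the same asymptotics $\varphi_a' \sim s_a^2/(2\sigma_a\bar{b}_t\bar\rho_a^2)$ and $\varphi_a'' \sim -s_a^2/(\sigma_a\bar{b}_t\bar\rho_a^3)$, uses the same gradient identity $\partial_{v_a}g = \E[\phi((W_a-\mu_a)/v_a)]$ (cited from Frazier--Powell rather than re-derived via Stein's identity and conditioning, as you do), and proves Part 2 through the identical Hessian decomposition $\diag(\nabla_v g)\,\nabla_{\bar\rho}^2\varphi + \nabla_{\bar\rho}\varphi\,\nabla_v^2 g\,\nabla_{\bar\rho}\varphi$ with the same order separation between the $O(1/\bar{b}_t)$ negative-definite diagonal term and the $O(1/\bar{b}_t^2)$ remainder; your identification of the limiting gradient with $\partial_{\mu_a}\pi_a^{\textup{TS}}$ by conditioning on $\theta_a^*$ is also exactly the paper's. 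The one genuine divergence is Part 3: the paper treats $\alpha = 1/\bar{b}_t$ as a parameter of the KKT system and invokes the implicit function theorem (invertibility of the bordered Hessian via a Schur complement) to obtain continuity of $(\bar\rho(\alpha),\lambda(\alpha))$ at $\alpha = 0$, whereas you extract subsequential limits by compactness of $\Delta_K^\epsilon$ and argue every limit point solves the limiting gradient-equalization system, whose solution is unique. Both are valid; yours is more elementary, needing only locally uniform convergence of the scaled gradient on $\Delta_K^\epsilon$ (which holds) rather than smoothness of the KKT map in $\alpha$, while the paper's IFT argument additionally yields continuity of the whole solution path. One nit: uniqueness of the solution of the limiting system follows directly from the algebra, since gradient equalization forces $\rho_a^* \propto s_a\left[\partial_{\mu_a}\pi_a^{\textup{TS}}\right]^{1/2}$ up to normalization, not from Part 2's strong concavity as you state; the misattribution is harmless but worth fixing.
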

\noindent See Section~\ref{section:proof-asymptotic-rho} for the proof. 

Theorem~\ref{theorem:asymptotic-rho} shows that the planning problem
becomes more amenable to optimization as the residual horizon increases,
although the gradient of the objective decays at rate $1/\bar{b}_{t}$ as
$T-t\to \infty$.  Moreover, although we would expect the open-loop planning
problem to be better calibrated under short time horizons, even in the
infinite horizon limit it gives rise to a natural, stationary posterior
sampling policy, establishing a novel connection between posterior sampling
and model predictive control.  The limiting policy DTS is highly exploring
and does not over-sample the best arm, which makes it better suited for
best-arm identification tasks than standard Gaussian Thompson Sampling (see
Section~\ref{section:proof-log-dts} for more details).

In contrast, we empirically observe that the more sophisticated approaches
, i.e. policy gradient,  suffer
optimization difficulties due to vanishing gradients. This is alluded to in
Theorem~\ref{theorem:asymptotic-rho}: the gradient of the static allocation
value function scales as $(s_{a}^{2}/\bar{b}_{t})$ and thus vanishes under
small measurement variance and long residual horizon.

\section{Comparison with standard bandit approaches}
\label{section:experiments}


Our empirical and theoretical analysis in the previous section shows that
$\algo$ is an exceedingly simple yet effective heuristic. Impelled by these
benefits, we now provide a careful empirical comparison between $\algo$ and
other standard multi-armed bandit algorithms. Overall, we find that although
$\algo$ relies on Gaussian approximations for the rewards and the prior
distribution, performance gains from planning with the Gaussian sequential
experiment greatly outweigh approximation errors. In total, we find that
across 640 settings, $\algo$ outperforms Gaussian policies discussed in Section~\ref{section:algorithms-adp} and
standard batch bandit policies in 493 (77.0\%) of them.
Our empirical results
suggest that $\algo$ particularly excels in settings that are difficult for standard
adaptive policies, including those a limited number of reallocation epochs,
unknown reward distributions, low signal-to-noise ratios, and a large number
of treatment arms.  Its versatile performance across different horizon and
noise levels shows the utility of calibrating exploration using the Gaussian
sequential experiment.  In addition to the key dimensions we highlight in
this section, we provide a comprehensive set of experiments in the Streamlit
app~\eqref{eqn:streamlit}.

\begin{figure}[t]

    \centering
    \hspace{-.9cm}
      \subfloat[\centering Number of arms $\numarm = 10$.]{\label{fig:num_arm_bsr_10}{\includegraphics[height=5cm]{./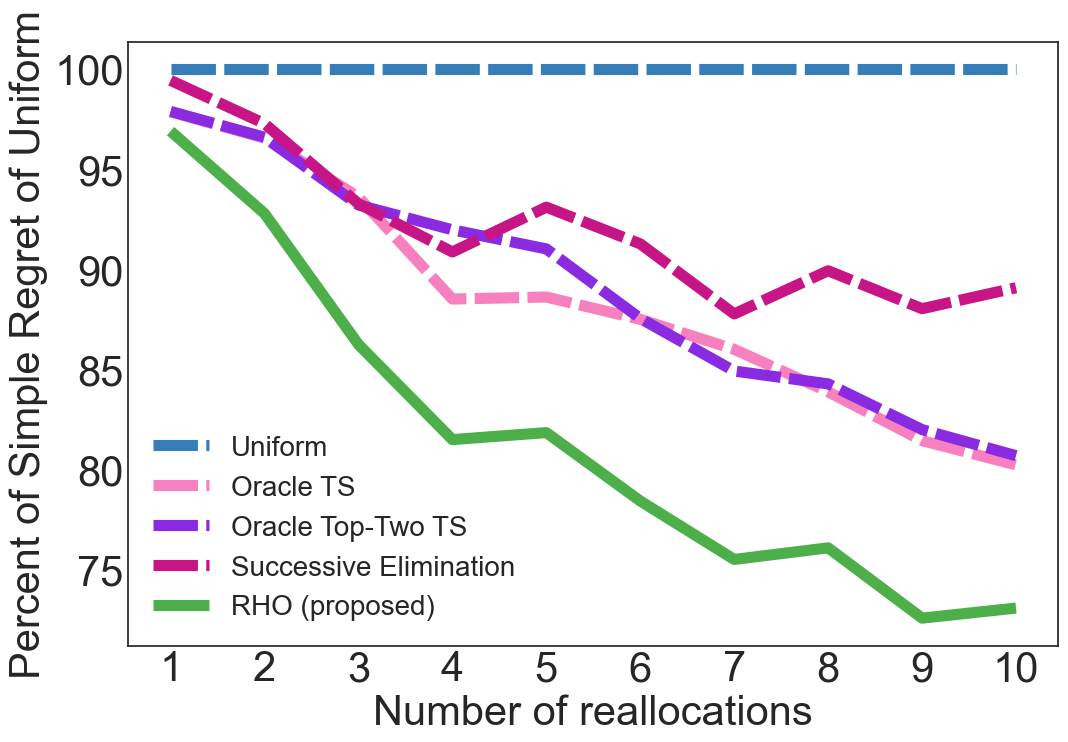}} }%
      \hspace{.5cm}
      \subfloat[\centering Number of arms $\numarm = 100$.]{\label{fig:num_arm_bsr_100}{\includegraphics[height=5cm]{./fig/reg_Bernoulli_100_100_1_Flat.png} }}%
      \vspace{.4cm}
      \caption{\label{fig:num_arm_bsr} Comparison of performance across
        different number of treatment arms.  Relative gains over the uniform
        allocation as measured by the Bayes simple regret for the finite batch
        problem with a batch size of $b_{t}n = 100$.
        $\textsf{Beta-Bernoulli experiment}$ where individual rewards are
        Bernoulli with a $\mbox{Beta}$ prior.  
          }
    
  \end{figure}

\paragraph{Algorithms} To benchmark performance, we consider a suite of standard batch bandit
policies proposed in the literature.
\begin{itemize}[itemsep=-2pt, leftmargin=.7cm]
  \item Uniform: For each batch, assign samples uniformly across treatment arms.
  \item Successive Elimination \citep{EvenDarMaMa06, GaoHaReZh19,
      PerchetRiChSn16}: For each batch, assign samples uniformly across
    treatment arms that are not eliminated. Eliminate all arms whose upper
    confidence bound is less than the lower confidence bound of some other
    arm. Given $\numarm$ arms, $n_{a}$ samples drawn for arm $a$, measurement
    variance $s_{a}^{2}$, the confidence interval is
    $\hat{\mu}_{a} \pm \beta_{a}(n_{a})$, where $\hat{\mu}_{a}$ is the
    empirical mean reward of arm $a$ and the width of the confidence bound
    $\beta_{a}(n_{a})$ is
  $$ \beta_{a}(n_{a}) = cs_{a} \sqrt{\frac{\log(n_{a}^{2}K/\delta)}{n_{a}}},$$
  where $c,\delta$ are constants that are chosen via grid search to minimize regret in each instance.
  \item Oracle Thompson Sampling \citep{KalkanhOz21, KandasamyKrScPo18}: Beta-Bernoulli TS with batch updates.
  \item Oracle Top-Two Thompson Sampling \citep{Russo20}: Beta-Bernoulli Top-Two TS with batch updates.
  \item $\algofull$: Selects the allocation by solving the planning problem \eqref{eqn:rho}, as in Algorithm \ref{alg:rho}.
\end{itemize}

For the $\textsf{Gamma-Gumbel experiment}$, the Gumbel distribution does not
have a known conjugate prior distribution and updating the posterior after
each batch is more involved. For this reason, when in this environment we
restrict our focus to successive elimination as the main baseline. This
captures settings in which the reward model is unknown to the experimenter,
where it is difficult to use Thompson sampling or other Bayesian policies that
require the exact reward distribution.

\begin{figure}[t]
  \centering
  \includegraphics[height=5cm]{./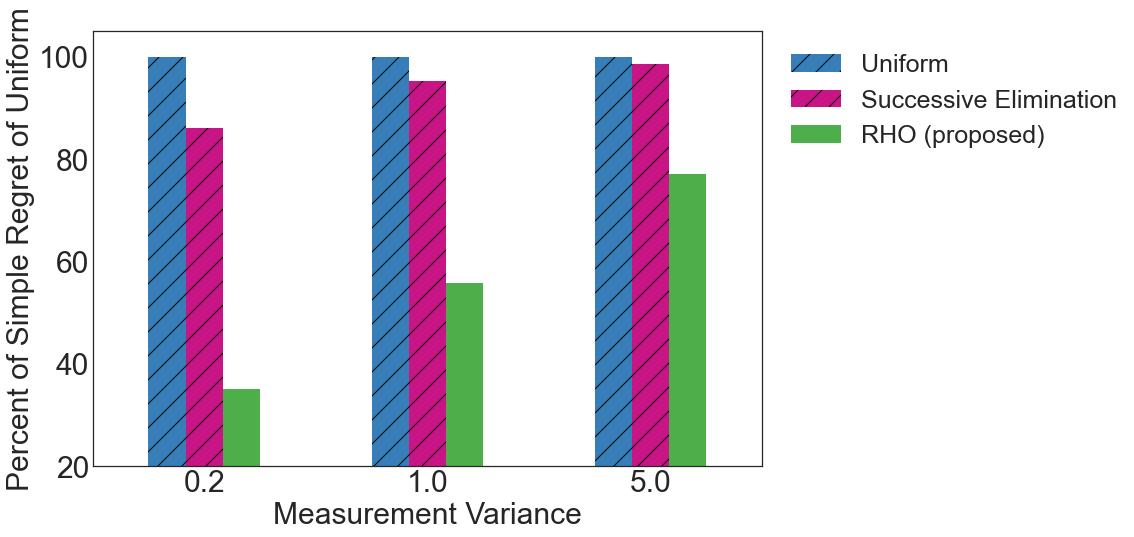}
    \caption{\label{fig:bar_var} Comparison of performance across different
      measurement noise levels.  Relative gains over the uniform allocation as
      measured by the Bayes simple regret. $\textsf{Gamma-Gumbel experiment}$
      where rewards follow Gumbel distributions with measurement variances
      $s_{a}^{2} \in \{0.2, 1, 5 \}$.  There are $K = 100$ treatment arms with
      $T = 10$ batches of size $b_{t}n = 100$.  
    }
\end{figure}

\paragraph{Number of alternative $\numarm$ (treatment arms)}
In order to study how performance changes with the number of alternatives, we
fix the batch size to be $b_{t}n = 100$ and evaluate our policies for
$\numarm = 10$ and $\numarm = 100$ alternatives. Figure \ref{fig:num_arm_bsr}
shows that $\algo$ achieves strong performance for large and small number of
alternatives alike, and the performance gains are larger when there is a large
number of arms.  
In these experiments and others presented in the
interactive app~\eqref{eqn:streamlit}, we observe that the performance gains
of $\algo$ persist across all time horizons, particularly for short
horizons. 

\paragraph{Measurement variance $s_a^2$}
Next, we study how the signal-to-noise of the problem instance affects the
performance of adaptive policies. To study a scenario where we can flexibly
control different measurement variances
$s^{2}_a = \var \epsilon_a$~\eqref{eqn:rewards}, we move away from the
$\textsf{Beta-Bernoulli experiment}$ and instead consider the
$\textsf{Gamma-Gumbel experiment}$. In Figure \ref{fig:bar_var}, we observe
that $\algo$ outperforms uniform allocation and successive elimination in both
high and low signal-to-noise regimes. When the signal-to-noise ratio is high
(i.e., measurement noise $s_a^2$ is small), $\algo$ is able to rapidly hone in
on the highest reward treatment arms. Even when the signal-to-noise is low,
$\algo$ is able to make substantial progress over uniform while other adaptive
policies struggle to learn enough to improve sampling efficiency.  


\begin{figure}[t]
  \centering
  \hspace{-.9cm}
  \subfloat[\centering Batch size $b_{t}n = 100$.]{\label{fig:gumbel_batch_bsr_10}{\includegraphics[height=5cm]{./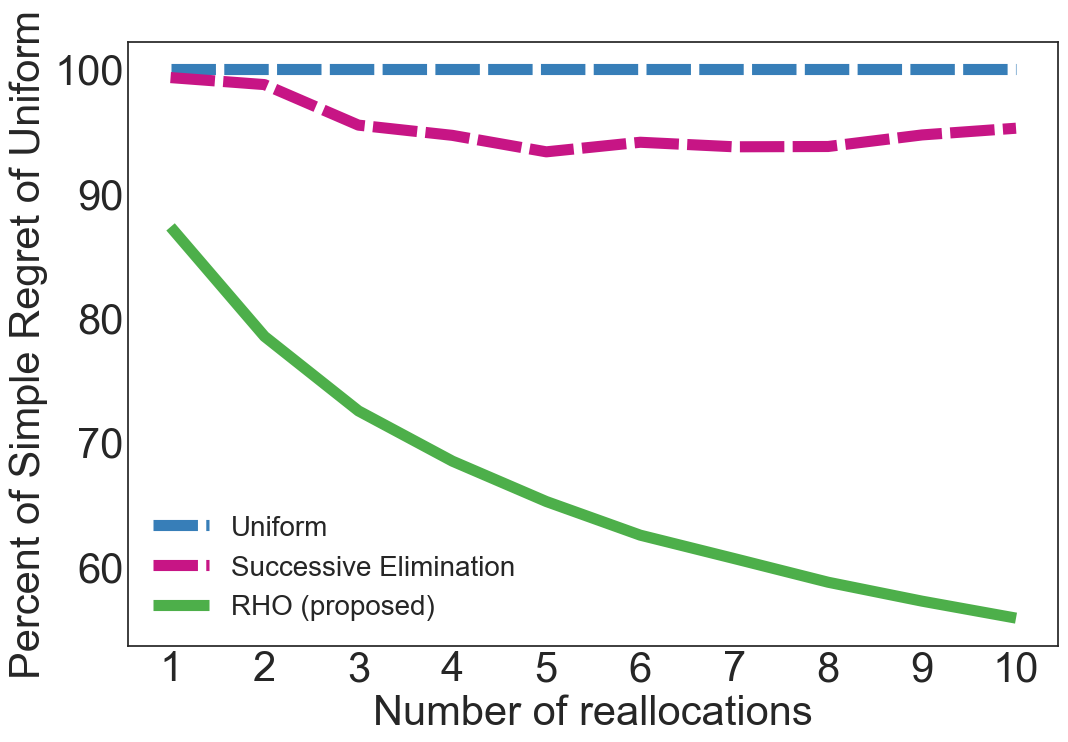}} }%
  \hspace{.5cm}
  \subfloat[\centering Batch size $b_{t}n = 10,000$.]{\label{fig:gumbel_batch_bsr_100}{\includegraphics[height=5cm]{./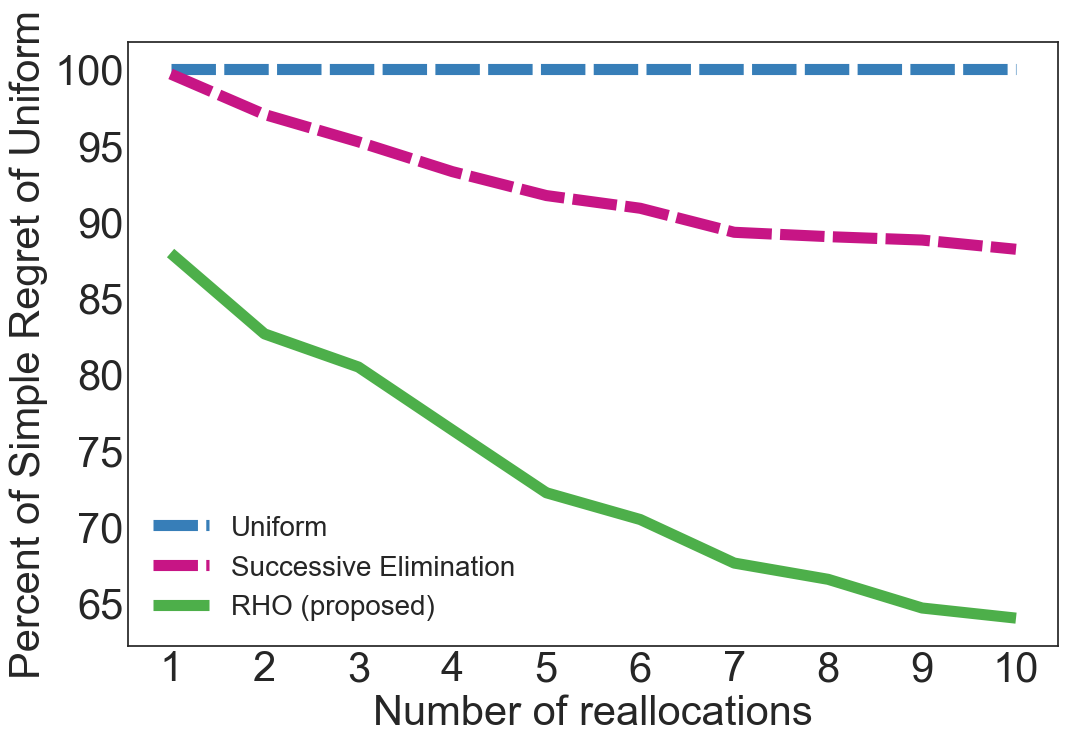} }}%
  \vspace{.4cm}
  \caption{\label{fig:gumbel_batch_bsr} Comparison of performance across
    batch sizes.  Relative gains over the uniform allocation as measured
    by the Bayes simple regret for the finite batch problem with $K = 100$
    treatment arms. $\textsf{Gamma-Gumbel experiment}$ where individual
    rewards are Gumbel with a $\mbox{Gamma}$ prior. 
    }    
\end{figure}

\paragraph{Batch size $b_t n$}
In Figure~\ref{fig:bern-bsr-batch} we presented in
Section~\ref{section:introduction}, we compared the regret incurred by
different policies for large and small batches in the
$\textsf{Beta-Bernoulli experiment}$.  We use the
$\textsf{Gamma-Gumbel experiment}$ as an additional test of whether the
Gaussian approximations hold for batched rewards. Although the Gumbel
distribution has a high excess kurtosis, we still observe that the Gaussian
sequential experiment serves as a useful approximation even for small
batches. In Figure \ref{fig:gumbel_batch_bsr}, we see that $\algo$ greatly
outperforms uniform allocation and successive elimination for small and large
batches alike.  In particular, successive elimination especially struggles
when the batch size is small ($b_{t}n = 100$).

\ifdefined\msom
\else

\begin{figure}[t]
  \centering
  \includegraphics[height=5cm]{./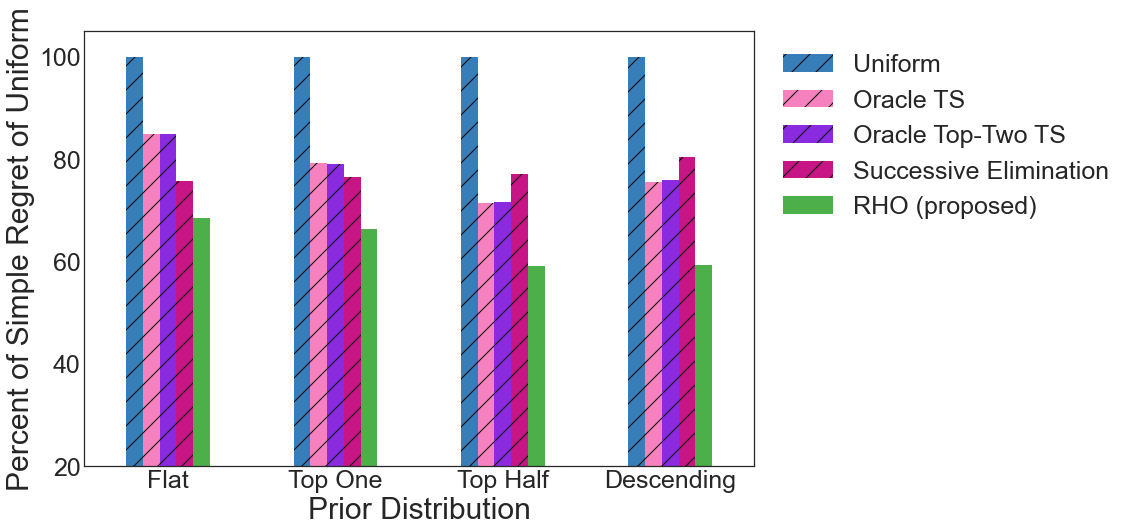}
    \vspace{.4cm}
    \caption{\label{fig:bar_prior} Comparison of performance across  different
      priors.  Relative gains over the uniform allocation as measured by the
      Bayes simple regret for the finite batch problem for $\numarm = 100$
      arms and $T = 10$ batches of size $b_{t}n = 100$ samples.
      $\textsf{Beta-Bernoulli experiment}$ where individual rewards are
      Bernoulli with various $\mbox{Beta}$ priors.  
    }
  
\end{figure}

\paragraph{Non-uniform prior distributions}

The previous numerical experiments focused on the case in which the
experimenter's prior distribution is identical across treatment arms. Yet,
modern experimental platforms typically run many experiments with similar
treatments, so the experimenter may have different prior beliefs across
treatments. These beliefs can be incorporated into the experimental design to
improve sampling efficiency.  To study the effect of non-uniform priors, we
consider the $\textsf{Beta-Bernoulli experiment}$ under different Beta priors
for each arm. Fixing the batch size to be $b_{t}n = 100$ and the number of
arms to be $ \numarm = 100$, we consider the following prior distributions.
\begin{itemize}[itemsep=-2pt, leftmargin=.7cm]
  \item Flat: All arms have an identical prior of Beta$(100,100)$.
  \item Top One: There is a single treatment with a Beta$(110,100)$ prior such
    that it has a higher prior mean than other arms. All other arms have
    Beta$(100,100)$ priors.
  \item Top Half: Half of the arms have Beta$(110,100)$ priors so that they
    have a higher prior mean than the rest which have Beta$(100,100)$ priors.
  \item Descending: The first arm has the highest prior mean (Beta$(100,100)$)
    and the prior means decrease for each arm $i \in [\numarm]$. Each arm has
    a Beta$(100 - (i - 1),100)$ prior.
\end{itemize}

Figure \ref{fig:bar_prior} compares the performance of the sampling policies
across different prior distributions. Unsurprisingly, policies which use prior
information outperform non-Bayesian policies when the prior is more
informative. We observe that despite using a Gaussian prior to approximate the
true prior distribution, $\algo$ obtains significantly larger performance
gains compared to Thompson sampling policies that use the true prior.  
\section{Discussion}
\label{section:discussion}

In this work, we use Gaussian approximations to derive an MDP framework that
describes how posterior beliefs evolve under the experimenter's sampling
policy in batch experiments.  With tools from optimal control and
reinforcement learning, this framework guides the development of policies that
can plan ahead and calibrate exploration to the size of the remaining sampling
budget.

\subsection{Related work}
\label{section:related-work}

This paper is situated in the extensive body of work studying adaptive methods
for instance-dependent pure-exploration problems. Since this literature spans
multiple communities including operations research, statistics, and machine
learning, we provide a necessarily abridged review centered around the
following characteristic aspects of our approach.
\begin{enumerate}[itemsep=-4pt, leftmargin=.7cm]
\item Our proposed adaptive framework focuses on a small number of
  reallocation epochs (horizon $T$). The algorithms we derive optimize
  instance-dependent factors that do not grow with $T$, which are often
  ignored as ``constants'' in the literature.
  \label{item:small-horizon}
\item Using sequential Gaussian approximations, our algorithms can handle
  flexible batch sizes.
  \label{item:batch}
\item The Gaussian sequential experiment we derive is based on the admissible
  regime where the gaps in average rewards scale as $\Theta(1/\sqrt{n})$.
  \label{item:diffusion}
\item When endowed with a Gaussian prior over average rewards, the Gaussian
  sequential experiment gives rise to a Markov decision process with
  fractional measurement allocations. Our MDP is connected to previous works
  in simulation optimization that study problems with Gaussian observations,
  as well as (adaptive) Bayesian experimental design methods.
  \label{item:gaussian}
\end{enumerate}

\paragraph{Pure-exploration MABs and ranking \& selection}

The problem of identifying the top-$k$ arms out of a set of alternatives has
been extensively studied in the simulation optimization and multi-armed bandit
literatures. As a detailed review of this vast literature is outside the scope
of the work, we refer the reader
to~\citet{HongNeXu15},~\citet{ChenChLePu15},~\citet[Ch
33.5]{LattimoreSz19},~\citet[Section 1.2]{Russo20}, and references therein for
a complete review. In light of the small-horizon perspective
(Point~\ref{item:small-horizon}), we discuss these works by dividing them into
two broad categories.

The first category studies fully sequential procedures and study the behavior
of the problem when the number of reallocation epochs (horizon $T$) is
expected to be large. Several authors explicitly study the limiting regime
$T \to \infty$~\cite{Chernoff59, Chernoff73, ChenLiYuCh00, GlynnJu04,
  Russo20}, though the multi-armed bandit (MAB) literature primarily focuses
on finite horizon guarantees. These works are often classified into two
regimes: in the fixed-confidence setting \cite{MannorTs04,EvenDarMaMa06,
  KarninKoSo13, JamiesonMaNoBu14, JamiesonNo14, KaufmannKa13, GarivierKa16,
  KaufmannCaGa16}, the objective is to find the best arm with a fixed level of
confidence using the smallest number of samples, where as the fixed budget
setting \cite{BubeckMuSt09, AudibertBuMu10,
  GabillonGhLa12,KarninKoSo13,CarpentierLo16} aims to find the best arm under
a fixed sampling budget.  Our setting can be viewed as a Bayesian fixed-budget
problem where the sampling budget is split into fixed batches. The main
conceptual difference between this paper and the above body of work is that we
explicitly optimize performance for each problem instance and a fixed horizon,
where ``constant terms'' that do not depend on the horizon $T$ play an outsize
role in determining statistical performance.

From an algorithmic viewpoint, the majority of the works in this category
study sequential elimination schemes or modifications of the upper confidence
bound algorithm (UCB). Our Bayesian adaptive algorithms are mostly closely
related to Bayesian algorithms for best-arm
identification~\cite{QinKlRu17,RussoVaKaOsWe18, ShangHeMeKaVa20,
  Russo20, KasySa21}. When viewed as algorithms for the original pre-limit problem,
these algorithms require a full distributional model of individual rewards, in
contrast to our framework that only requires priors over average rewards as we
belabored in Section~\ref{section:algorithms}. Empirically, we observe in
Figure~\ref{fig:bern-bsr-batch} that a simple algorithm derived from the
\emph{limiting Gaussian sequential experiment} significantly outperforms
these oracle Bayesian sampling methods and these gains hold even when the batch size
is small.

Most of the aforementioned works study the probability of correct selection,
with notable recent exceptions that consider the Bayes simple
regret~\cite{KomiyamaArKaQi21, AtsidakouKaSaKv22}. In this paper, we focus on
the Bayes simple regret for ease of exposition, and as we believe this is the
more relevant objective in practice. However, our asymptotic formulation can
incorporate alternative objectives such as the probability of correct
selection, and the adaptive algorithms we develop can be appropriately
modified.

The second category of works develops one-step or two-step heuristics for
selecting the next alternative, such as Knowledge Gradient or expected
opportunity cost~\cite{GuptaMi96, FrazierPoDa08, ChickBrSc10, ChickIn01,
  HeChCh07} or probability of correct selection~\cite{ChickBrSc10}. Of
particular relevance to our work is \citet{FrazierPo10}, who consider the
problem of allocating a \emph{single} batch of samples across several
alternatives with normally distributed rewards to minimize the Bayes simple
regret. Their setting is equivalent to a single-stage version of the Gaussian
sequential experiment we consider. Empirically, we observe in
Section~\ref{section:experiments} that our proposed planning-based policies
can perform much better than these single-batch or one-step lookahead
procedures.

\paragraph{Batching and delayed feedback}

From the perspective of handling flexible batch sizes
(Point~\ref{item:batch}), several authors have adapted bandit algorithms to
problems with batch evaluations and delayed feedback. In the standard
exploration-exploitation MAB setting with fixed batch sizes,
\citet{PerchetRiChSn16, GaoHaReZh19} find that even when the number of
reallocation epochs scales logarithmically or even sub-lograthmically in the
total sample size, one can recover the same rates for cumulative regret as in
the fully-sequential setting.~\citet{EsfandiariKaMeMi21} show these rates can
be made tighter under adaptive batch sizes, and~\citet{KalkanhOz21, KarbasiMiSh21} show that
Thompson sampling with an adaptive batch scheme also achieves rates equivalent
to the fully sequential case up to similar logarithmic terms.  The growing
literature on bandits with delayed feedback can also be seen as a bandit
problem with stochastic batch sizes \cite{JoulaniGySz13, GroverEtAl18,
  Pike-BurkeAgSzGr18, VernadeCaLaZaErBr20}.  

We focus on pure-exploration when the number of reallocation epochs is small
compared to the total sample size.  In this regard, \citet{JunJaNoZh16} propose
fixed confidence and fixed budget policies that obtain the top-k arms and find
that the batch complexity scales logarithmically in the total sample size.
\citet{AgarwalEtAl17} show that a sequential elimination policy obtains the
top-k arms with fixed confidence with a number of reallocation epochs that
only grows iterated logarithmically in the total sample size. Most of the
policies proposed in the batched bandit literature are either sequential
elimination based policies, or batch variants of Thompson Sampling that know
the true reward distribution. Unlike these works, we study a limiting regime
and propose Bayesian algorithms that use Gaussian approximations for the
likelihoods.  As we further detail in the next discussion point, the policies
we derive for the Gaussian sequential experiment can be directly applied to
the original pre-limit problem, and can thus handle any batch size
flexibly. Empirically, our methods outperform standard batch methods across a
range of settings, as we outline in Figure~\ref{fig:bern-bsr-batch} and
further expand in Section~\ref{section:experiments}.


\paragraph{Gaussian approximations and asymptotics}

\citet{BhatFaMoSi20} study optimal experimental design for binary treatments,
under a linear specification for treatments and covariates. Their approach has
philosophical similarities to ours, which optimizes a final stage outcome by
using DP methods to sequentially assign treatments to subjects.  They analyze
an fully online version of the problem where subjects arrive stochastically
and the experimenter sequentially allocates them to the treatment or control
group. They balance covariates in the two groups to minimize the variance of
the OLS estimator at the end of the experiment; when covariates are drawn
i.i.d.  from an elliptical distribution, they show the dimensionality of the
DP collapses. Recently,~\citet{XiongAtBaImb23} consider a similar problem for
panel data, maximizing the precision of a generalized least squares (GLS)
estimator of the treatment effect.
On the other hand, our DP crystallizes how the adaptive experimentation
problem simplifies in the large-batch approximation; our formulation relies on
Bayesian posterior beliefs over \emph{average rewards}, rather than an
estimate of the uncertainty. We consider multiple treatment arms where an
effective adaptive design must sample strategically to reduce uncertainty for
treatments which are likely to be the best one.

Our work is also connected to recent works that use Gaussian approximations
for statistical inference on data produced by bandit
policies~\cite{HadadHiZhAt21, LuedtkeVa16, LuedtkeVa18}. Our work is most
related to~\citet{ZhangJaMu20} who derive inferential procedures based on
Gaussian approximations in a batched bandit setting.  They construct a
modified ordinary least squares (OLS) estimator and show asymptotic normality
as the batch size grows large. Although our setting does not feature any
contextual information, we similarly consider a fixed horizon experiment with
batches and our main asymptotic result considers the regime where the batch
size grows large while the horizon remains fixed. In this regard, we show that
asymptotic approximations used for inference are also highly effective in
guiding adaptive experimentation procedures. However, unlike their work, we
are able to derive our result without assuming that the propensity scores are
bounded from below or clipped.

Concurrent and independent to the present paper,~\citet{HiranoPo23} consider
data produced from a parametric model and derive Gaussian approximations for
batched data produced by adaptive sampling policies. Their work extends the
classical framework of local asymptotic normality of parametric
models~\cite{VanDerVaart98} to adaptive data.  On the other hand, we do not
assume a parametric model over the reward distribution and instead derive
Gaussian limits over \emph{average rewards}. The two approaches are
complementary: we focus on deriving a formulation on which modern
computational tools can be used to derive new Bayesian batch adaptive sampling
policies, whereas \citet{HiranoPo23} study power calculations and inferential
questions.

There is a nascent literature on deriving diffusion limits for bandit
policies~\cite{KalvitZe21, WagerXu21, FanGl21, Adusumilli21,
  AramanCa22}. These works typically let the number of reallocation epochs
grow to infinity, while scaling down the gaps in average rewards like in our
setting (Point~\ref{item:diffusion}). \citet{WagerXu21} consider any Markovian
policy, and~\citet{FanGl21} study Thompson sampling with extensions to batched
settings where the batch size is small compared to the number of reallocation
epochs. \citet{KalvitZe21} find a sharp discrepancy between the behavior of
UCB and Thompson Sampling in a related regime. \citet{Adusumilli21} considers
an arbitrary Bayesian policy and derive an HJB equation for the optimal policy
in the limit, which can be solved by PDE methods. \citet{AramanCa22} consider
a sequential testing environment with two hypotheses involving experiments
that arrive according to an exogenous Poisson process.  They derive diffusion
process approximations in the limit as intensity of arrivals grows large and
the informativeness of experiments shrinks.

In contrast, our diffusion limit is motivated by the practical difficulty of
reallocating sampling effort and we consider a \emph{fixed} number of
reallocation epochs. Our formulation is derived by letting the batch size $n$
grow large, while scaling down the difference in average rewards as
$\Theta(1/\sqrt{n})$.  This setting is particularly appropriate for modeling
experimentation on online platforms that deal with many units and
interventions with small treatment effects.  Unlike the above line of work
that study limiting diffusion processes, our limiting process is a discrete
time MDP which allows us to use approximate DP methods to develop new adaptive
experimentation policies.

\paragraph{Gaussian observations and priors}

Several research communities have taken interest in sequential decision-making
problems with Gaussian observations and priors due to its tractability. While
the algorithmic connections are salient as we discuss shortly, these
literatures do not provide formal justifications of normality unlike the
present work, although we have found some nice heuristic discussions (e.g.,
see~\citet[Section 3.1]{KimNe07}).  Another key difference between our
asymptotic framework and others is that we consider normal distributions over
the gaps between \emph{average rewards}, rather than individual rewards
(Point~\ref{item:gaussian}).

Taking a decision-theoretic framework in problems with Gaussian observations,
a number of authors~\cite{GuptaMi96, ChickBrSc10, FrazierPo10} have studied
simple settings that yield an exact solution, e.g., when there is a single
measurement. While the formal settings are different, the spirit of our
algorithmic development is similar to these approaches which heuristically
extend the solution derived in the simple setting to build adaptive policies.
Our use of the Gaussian sequential experiment to improve statistical power
appears new; a similar Gaussian MDP problem has been recently studied in the
context of robust control~\cite{MullerVaPrRo17} and attention
allocation~\cite{LiangMuSy22}.  Most recently, \citet{LiuDeVaXu22} provide
bounds on the cumulative regret of a Bayesian agent that maintains a
misspecified Gaussian posterior state in a (fully sequential) Bernoulli reward
environment.

The Bayesian optimization literature uses Gaussian processes to model the
world. Many authors have proposed acquisition functions for batch
evaluations~\cite{GinsbourgerRiCa08, ShahGh15, WangGeKoJe18, GonzalezDaHeLa16,
  WuFr19} and some derive regret bounds for batched versions UCB and Thompson
sampling in Gaussian process environments\cite{ContalBuRoVa13,
  DesautelsKrBu14, KandasamyKrScPo18}. The primary focus of this literature is
problems with a continuous set of arms. In contrast, we focus on the
allocation of measurement effort over a finite number of arms under limited
statistical power, and our setting is characterized by limited extrapolation
between arms and a fixed, finite exploration horizon. Our approach of using a
MDP for posterior beliefs to design non-myopic adaptive experimentation
policies conceptually builds on works that design non-myopic acquisition
functions for Bayesian optimization~\cite{GonzalezOsLa16, JiangChGoGa20,
  LamWiWo16, JiangJiBaKaGaGa19,AstudilloJiBaBaFr21}.

As our methods maximize an expected utility function, it is conceptually
related to Bayesian experimental design methods~\cite{ChalonerVe95,
  RyanDrMcPe16, FosterIvMaRa21}. Instead of optimizing expected information
gain (EIG), we minimize expected simple regret at the terminal period, a more
tractable objective. In particular, we consider policy gradient based methods
to guide the experimental design similar to the works~\cite{FosterIvMaRa21,
  JorkeLeBr22}. Our work is also similar to~\citep{MinMoRu2020}, who propose a policy gradient algorithm
  for shaping the posterior used by Thompson Sampling as a computational tool to improve its performance. But unlike these works, the methods we consider at the end of
Section~\ref{section:algorithms-mdp} use \emph{pathwise} policy gradients of the
value function enabled by the 
reparameterization trick for Gaussian random vectors~\cite{KingmaWe14}.
This is in contrast with the score function or REINFORCE gradient estimator~\cite{Williams92}
commonly used in reinforcement learning, which is known to have higher variance
and is difficult to apply in continuous state and action settings.
Our method allows us to use standard auto-differentiation frameworks to compute exact 
gradients of the value function without having to
fit the value function or the model separately, as is required for
other RL methods such as Deep Deterministic Policy Gradient~\cite{LillicrapEtAl16}. 
While the reparameterization trick
has been used before to maximize one-step or two-step 
acquisition functions in Bayesian optimization
(e.g. ~\cite{WilsonHuDe18, WuFr19, BalandatEtAl20}), our work differs in that we compute pathwise gradients of entire
sample paths with respect to sampling \emph{allocations},
and use these gradients to optimize sampling policies.



\paragraph{Acknowledgement} We are indebted to Omar Besbes, Lin Fan and Daniel
Russo for constructive feedback and helpful discussions. This research was
partially supported by the Digital Futures Initiative.



\bibliographystyle{abbrvnat}

\ifdefined\useorstyle
\setlength{\bibsep}{.0em}
\else
\setlength{\bibsep}{.7em}
\fi

\bibliography{../bib}

\ifdefined\useorstyle

\ECSwitch


\ECHead{Appendix}

\else
\newpage
\appendix

\fi

\section{Further Experimental Results}

\subsection{Unknown measurement variance}
\label{section:unknown-s2}

We consider the case where the measurement variance is unknown to the experimenter. 
We fix $K = 100$ and $n = 10,000$ in the Gamma-Gumbel experiment. The experimenter believes
the measurement variance for all arms is identically equal to $s^{2} = 1$, but the 
true measurement variance is $s^{2} = Y_{i}$ where $Y_{i} \sim \mathsf{Lognormal}(0, \varsigma^{2})$
for $\varsigma \in \{0.25, 0.5, 1.0\}$. The 50\% confidence intervals for $Y_{i}$ under these distributions are
\begin{align*}
    \varsigma = 0.25 &: 50\% \text{ CI } [0.84, 1.18] \\
    \varsigma = 0.50 &: 50\% \text{ CI }[0.75, 1.40] \\
    \varsigma = 1.00 &: 50\% \text{ CI }[0.50, 1.96]
\end{align*}
Intuitively, when $\varsigma = 1.0$, the actual measurement variance can easily range from 50\% to 196\% of the variance 
assumed by the experimenter. We observe in Figure \ref{fig:var_perturb} that despite large mismatch between
the experimenter's belief of the measurement variance and the true measurement variance, 
$\algo$ is still able to retain much of its performance benefits over other policies. This illustrates that the method
is rather robust to estimation errors of the measurement variance.

\begin{figure}[t]
     \centering
     \subfloat{\includegraphics[height=5.7cm]{./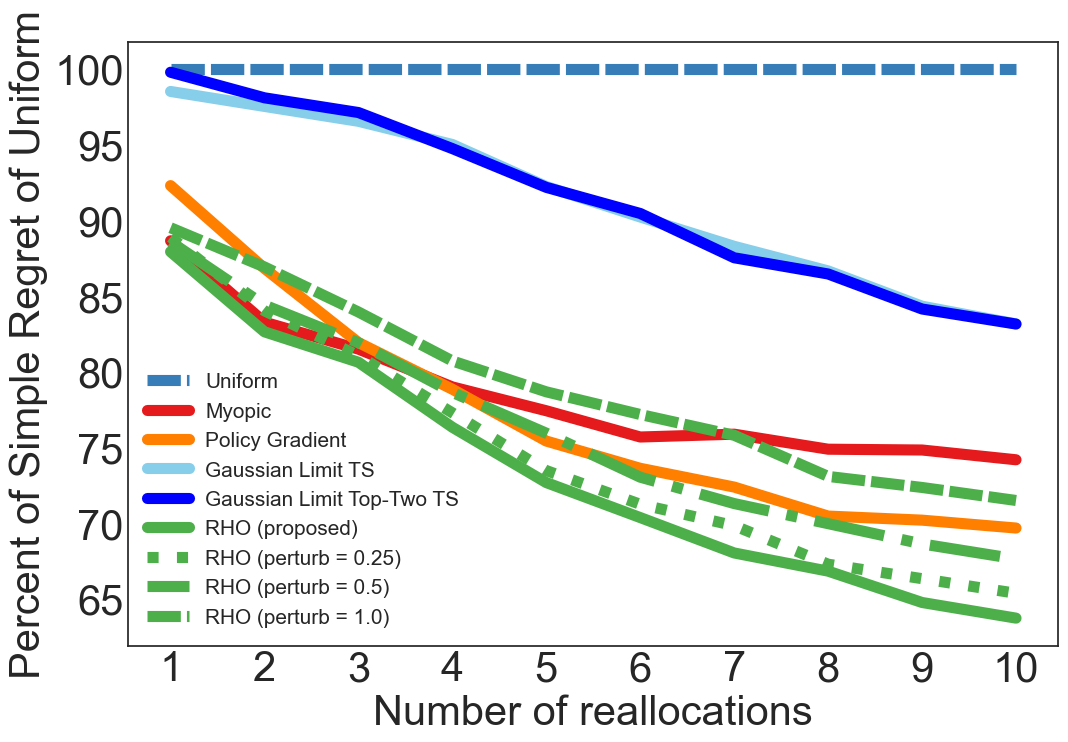}} %
     \caption{\label{fig:var_perturb} Comparison of performance across
       variance perturbations.  Relative gains over the uniform allocation as measured
       by the Bayes simple regret for the finite batch problem with $K = 100$
       treatment arms, batch size $n = 10,000$ in the $\textsf{Gamma-Gumbel experiment}$.
       Despite drastic variance mis-measurement ($\varsigma = 1$), $\algo$ maintains performance
       benefits over other Bayesian policies (except Policy Gradient).}    
   \end{figure}
\section{Derivations of Gaussian sequential experiment}

\subsection{Proof of Theorem \ref{theorem:limit}}
\label{section:proof-limit}

Let $s_{t, a}^{2} \defeq s_a^{2}/b_{t}$ be the rescaled measurement variance.
Recall the Gaussian sequential experiment given in Definition~\ref{def:gse}:
for all $0< t \le T-1$
\begin{equation*}
  G_{0}\sim N(h\pi_{0},\Sigma_{0}),
  ~~G_{t}|G_{t-1},\ldots,G_{0} \sim N(h\pi_{t}(G_{1},...,G_{t-1}),\Sigma_{t})
\end{equation*}
where $\Sigma_{t}=\text{diag}(s_{t,a}^{2}\pi_{t,a}(G_{1},...,G_{t-1}))$ and
$\Sigma_{0}=\text{diag}(s_{t,a}^{2}\pi_{0,a})$.

\paragraph{Induction} We use an inductive argument to prove the weak
convergence~\eqref{eqn:weak-convergence} for a policy $\pi$ and reward process
$R_{a,t}$ satisfying
Assumptions~\ref{assumption:reward},~\ref{assumption:policy}.  For the base
case, let $I_{0} \defeq \{ a \in [K]: \pi_{0,a} > 0 \}$ denote the arms with a
positive sampling probability (this is a deterministic set).  For the arms not in
$I_{0}$,
$\sqrt{n}\bar{R}^{n}_{0} = 0 = N(h_{a}\pi_{0,a}, s_{t,a}^{2}\pi_{0,a})$.  For
the remaining arms, by the Lindeberg CLT, we have that
$\Pstate_{0} \cd N(\pi_{0}h, \Sigma_{0})$.

Next, suppose
\begin{equation}
  \label{eqn:inductive-hypothesis}
  (\sqrt{n}\bar{R}^{n}_{0},\ldots,\sqrt{n} \bar{R}^{n}_{t})
  \cd (G_{0},...,G_{t}).
\end{equation}
To show weak convergence to $G_{t+1}$, it is sufficient to show 
\begin{equation}
  \label{eqn:inductive-weak-convergence}
  \E[f(\sqrt{n}\bar{R}^{n}_{0},\ldots,\sqrt{n} \bar{R}^{n}_{t+1})] \to
  \E[f(G_{0},...,G_{t+1})]~~\mbox{for any
bounded Lipschitz function}~f
\end{equation}
Fixing a bounded Lipschitz $f$, assume without loss of generality that
\begin{equation*}
  \sup_{x,y\in\mathbb{R}^{(t+1)\times k}}|f(x)-f(y)|\leq 1
  ~~~\mbox{and}~~~f\in\text{Lip}(\R^{(t+1)\times K}).
\end{equation*}

We first set up some basic notation. For any
$r_{0:t} \in \R^{(t+1) \times \numarm}$, define the conditional expectation
operator on a random variable $W$
\begin{equation*}
  \E_{r_{0:t}}[W] \defeq \E\left[ W \Bigg| \sqrt{n} \left\{\bar{R}^n_s\right\}_{s=0}^t = r_{0:t}\right].
\end{equation*}
Then, conditional on realizations of previous estimators up to time $t$, we
define a shorthand for the conditional expectation of $f$ and its limiting
counterpart.  Suppressing the dependence on $f$, let
$g_{n},g:\R^{t\times K} \to \R$ be
\begin{equation}
  \begin{aligned}
    \label{eqn:gs}
  g_{n}(r_{0:t}) &\defeq
             \E_{r_{0:t}}\left[f\left(\sqrt{n}\bar{R}_{0}^{n}, \ldots,
             \sqrt{n}\bar{R}_{t}^{n},\sqrt{n}\bar{R}_{t+1}^{n}\right)\right]
             = \E\left[f\left(r_{0:t}, \sqrt{n}\bar{R}_{t+1}^{n}\right)\right] \\
  g(r_{0:t}) & \defeq \E_{r_{0:t}}\left[f\left(\sqrt{n}\bar{R}_{0}^{n}, \ldots,
         \sqrt{n}\bar{R}_{t}^{n},\Sigma_{t+1}^{1/2}(r_{0:t})Z + h\pi_{t+1}(r_{0:t})\right)\right] \\
         & = \E\left[f\left(r_{0:t}, \Sigma_{t+1}^{1/2}(r_{0:t})Z + h\pi_{t+1}(r_{0:t})\right)\right] 
\end{aligned}
\end{equation}
where $Z\sim N(0,I)$ and the conditional covariance is determined by the
allocation $\pi_{t+1,a}(r_{0:t})$
\begin{equation}
  \label{eqn:cov}
  \Sigma_{t+1}(r_{0:t}) \defeq \text{diag}\left(\pi_{t+1,a}(r_{0:t})s_{t+1,a}^{2}\right).
\end{equation}
Conditional on the history $r_{0:t}$, $\sqrt{n}\bar{R}_{t+1}^{n}$ depends on
$r_{0:t}$ only through the sampling probabilities $\pi_{t+1,a}(r_{0:t})$.

To show the weak convergence~\eqref{eqn:inductive-weak-convergence}, decompose the
difference between
$\E[f(\sqrt{n}\bar{R}^{n}_{0},\ldots,\sqrt{n} \bar{R}^{n}_{t})]$ and
$\E[f(G_{0},\ldots,G_{t})]$ in terms of $g_{n}$ and $g$
\begin{align}
  &|\mathbb{E}[f(\sqrt{n}\bar{R}_{0}^{n},...,\sqrt{n}\bar{R}_{t+1}^{n})]
     - \mathbb{E}[f(G_{0},...,G_{t+1})]| \label{eqn:start} \\ 
  & =|\mathbb{E}[g_{n}(\sqrt{n}\bar{R}_{0}^{n},...,\sqrt{n}\bar{R}_{t}^{n})]
    -\mathbb{E}[g(G_{0},...,G_{t})]|  \nonumber  \\
  & =|\mathbb{E}[g_{n}(\sqrt{n}\bar{R}_{0}^{n},...,\sqrt{n}\bar{R}_{t}^{n})]
    -\mathbb{E}[g(\sqrt{n}\bar{R}_{0}^{n},...,\sqrt{n}\bar{R}_{t}^{n})]|
    +|\E[g(\sqrt{n}\bar{R}_{0}^{n},...,\sqrt{n}\bar{R}_{t}^{n})]-\mathbb{E}[g(G_{0},...,G_{t})]| \nonumber
\end{align}
By Assumption~\ref{assumption:policy}\ref{item:continuity} and dominated
convergence, $g$ is continuous almost surely under $(G_0, \ldots, G_{t})$.
From the inductive hypothesis~\eqref{eqn:inductive-hypothesis}, the continuous mapping theorem implies
\begin{equation*}
  \E[g(\sqrt{n}\bar{R}_{0}^{n},...,\sqrt{n}\bar{R}_{t}^{n})] \to \mathbb{E}[g(G_{1},...,G_{t})].
\end{equation*}

\paragraph{Uniform convergence of $g_n \to g$}
It remains to show the convergence
$\E[g_{n}(\sqrt{n}\bar{R}_{0}^{n},...,\sqrt{n}\bar{R}_{t}^{n})] \to
\E[g(\sqrt{n}\bar{R}_{0}^{n},...,\sqrt{n}\bar{R}_{t}^{n})]$.  While one would
expect pointwise convergence of $g_n(r_{0:t})$ to $g(r_{0:t})$ by the CLT,
proving the above requires controlling the convergence across random
realizations of the sampling probabilities
$\pi_{t+1,a}(\sqrt{n}\bar{R}_{0}^{n},...,\sqrt{n}\bar{R}_{t}^{n})$.  This is
complicated by the fact that we allow the sampling probabilities to be zero.  To
this end, we use the Stein's method for multivariate distributions to provide
rates of convergence for the CLT that are uniform across realizations
$r_{0:t}$. We use the bounded Lipschitz distance $d_{\text{BL}}$ to metrize
weak convergence.
                               
To bound $|\E[g_{n}(\Pstate_{0}, \ldots, \Pstate_{t})] - g(\Pstate_{0}, \ldots, \Pstate_{t})|$,
decompose
\begin{align}
  & |\E[g_{n}(\Pstate_{0}, \ldots, \Pstate_{t}) - g(\Pstate_{0}, \ldots, \Pstate_{t})]| \nonumber  \\
  & = |\E[\E_{\pstate_{0:t}}[f(\pstate_{0:t}, \Pstate_{n,t+1})
    - f(\pstate_{0:t},
    \Sigma_{t+1}(\pstate_{0:t})^{1/2}Z + \pi_{t+1}(\pstate_{0:t})h)]]|
    \nonumber \\
    & \leq \E[\E_{\pstate_{0:t}}[d_{\text{BL}}(Q_{n,t+1}(\pstate_{0:t}),
    \Sigma_{t+1}^{1/2}(\pstate_{0:t})Z)]] \nonumber \\
  & \leq \E[\E_{\pstate_{0:t}}[d_{\text{BL}}(Q_{n,t+1}(\pstate_{0:t}),
    \Sigma_{n,t+1}^{1/2}(\pstate_{0:t})Z)]
    + \E_{\pstate_{0:t}}[d_{\text{BL}}(
    \Sigma_{n,t+1}^{1/2}(\pstate_{0:t})Z,
    \Sigma_{t+1}^{1/2}(\pstate_{0:t})Z)]]
\label{eqn:bl-triangle}
\end{align}
where $Q_{n,t+1}(\pstate_{0:t})$ is the demeaned estimator
$$Q_{n,t+1}(\pstate_{0:t}) := \Pstate_{t+1}(\pstate_{0:t}) - \pi_{t+1}(\pstate_{0:t})h 
= \frac{1}{\sqrt{b_{t+1}n}}\sum_{j=1}^{b_{t+1}n} \left( \frac{\xi_{j}^{t+1}R_{j}^{t+1}}{\sqrt{b_{t+1}}} - \frac{\pi_{t+1}(\pstate_{0:t})h}{\sqrt{b_{t+1}n}} \right)$$
and the covariance $\Sigma_{n,t+1}(\pstate_{0:t})$ is
\begin{align}
  \label{eqn:cov-n}
  \Sigma_{n,t+1}(\pstate_{0:t})
  & \defeq \text{Cov}
  \left(\frac{\xi_{j}^{t+1}R_{j}^{t+1}}{\sqrt{b_{t+1}}}\right) \\
  & = \Sigma_{t+1}(\pstate_{0:t}) 
  + \frac{1}{b_{t+1}n}\text{diag}(\pi_{t+1,a}(\pstate_{0:t})h_{a}^{2})
  - \frac{1}{b_{t+1}n}(\pi_{t+1}(\pstate_{0:t})h)
  (\pi_{t+1}(\pstate_{0:t})h)^{\top}. \nonumber
\end{align}
To ease notation, we often omit the dependence on $\pstate_{0:t}$. Note that
$\Sigma_{n,t+1}$ is not quite equal to the covariance matrix $\Sigma_{t+1}$
appearing in the limiting Gaussian sequential experiment~\eqref{eqn:cov}.
This is because for any finite $n$, the covariance of this Gaussian
$\Sigma_{n,t+1}(r_{0:t})$ not only includes the variance of the arm rewards
but also correlations that emerge from sampling arms according to a random
multinomial vector (rather than a deterministic allocation).  This gap
vanishes as $n\to\infty$ and the covariance matrix converges to the diagonal
covariance matrix $\Sigma_{t+1}(r_{0:t})$.

\paragraph{Bounding the first term in inequality~\eqref{eqn:bl-triangle}}
The first term in inequality~\eqref{eqn:bl-triangle} measures the distance
between the sample mean $Q_{n,t+1}$ and its Gaussian limit.  Before we
proceed, it is helpful to define the following quantities, which describe
smoothness of the derivatives of any function $f\in\mathcal{C}^{3}$
\[
  M_{1}(f) = \sup_{x} \norm{\nabla f}_{2} 
  ~~~~M_{2}(f) = \sup_{x} \norm{\nabla^{2}f}_{op} 
  ~~~~M_{3}(f) = \sup_{x} \norm{\nabla^{3}f}_{op}
\]
The following bound, which we prove in
Section~\ref{section:proof-clt-sample-mean}, quantifies the rate of
convergence for the CLT using the multivariate Stein's
method~\citet{Meckes2009stein}. Recall that $\epsilon$ is the noise in the
rewards~\eqref{eqn:rewards}.
\begin{proposition}
  \label{prop:clt-sample-mean}
  For any $f \in \mathcal{C}^{3}$,
  \[
    |\E f(Q_{n,t+1}) - \E f(\Sigma_{n,t+1}^{1/2} Z) |
    \leq 
    C_{1} n^{-1/2} M_{2}(f) + C_{2}n^{-1/2} M_{3}(f)
  \]
  where $C_{1}$ and $C_{2}$ depend polynomially on $K,b_{t+1},h,s^{2},\norm{\epsilon}_{2}^{3}, \norm{\epsilon}_{2}^{4}$.
\end{proposition}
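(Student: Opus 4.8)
The plan is to recognize $Q_{n,t+1}$ as a normalized sum of i.i.d.\ mean-zero random vectors and to apply the multivariate Stein's-method bound of~\citet{Meckes2009stein} for $\mathcal{C}^3$ test functions. Conditioning on $\pstate_{0:t}$ fixes the sampling probabilities $\pi_{t+1,a}(\pstate_{0:t})$, and the definition of $Q_{n,t+1}$ already displays it as $Q_{n,t+1} = (b_{t+1}n)^{-1/2}\sum_{j=1}^{b_{t+1}n}\zeta_j$, where the summands $\zeta_j \defeq \xi_{j}^{t+1}R_{j}^{t+1}/\sqrt{b_{t+1}} - \pi_{t+1}h/\sqrt{b_{t+1}n}$ are i.i.d., mean zero, with common covariance $\Sigma_{n,t+1}$ given in~\eqref{eqn:cov-n}; hence $\mathrm{Cov}(Q_{n,t+1})=\Sigma_{n,t+1}$. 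The essential structural point is that the comparison law $\Sigma_{n,t+1}^{1/2}Z$ is exactly $N(0,\Sigma_{n,t+1})$, a Gaussian with the \emph{same} covariance as $Q_{n,t+1}$. Because the covariances match, the argument never inverts $\Sigma_{n,t+1}$, which is what makes the bound uniform across all realizations of the sampling probabilities: in any coordinate with $\pi_{t+1,a}=0$ both $Q_{n,t+1}$ and its Gaussian surrogate are identically zero and contribute nothing, so no lower bound on $\min_a \pi_{t+1,a}$ is ever needed. This is the feature the subsequent bounded-Lipschitz analysis in~\eqref{eqn:bl-triangle} relies upon, and it is exactly what separates the smooth-test-function regime from the less-smooth one.

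Next I would run Stein's method through an exchangeable pair obtained by resampling a single summand: draw $J$ uniformly from $\{1,\ldots,b_{t+1}n\}$, replace $\zeta_J$ by an independent copy, and let $Q'_{n,t+1}$ be the resulting sum, which satisfies the linearity condition $\E[Q'_{n,t+1}-Q_{n,t+1}\mid Q_{n,t+1}] = -\lambda\, Q_{n,t+1}$ with $\lambda = (b_{t+1}n)^{-1}$. Solving the Stein equation $\mathcal{A}U_f = f - \E f(\Sigma_{n,t+1}^{1/2}Z)$ for the operator $\mathcal{A}g(x)\defeq \langle\Sigma_{n,t+1},\nabla^2 g(x)\rangle - \langle x,\nabla g(x)\rangle$ via the Ornstein--Uhlenbeck/Mehler representation yields a solution $U_f$ whose derivatives obey $M_2(U_f)\le \tfrac12 M_2(f)$ and $M_3(U_f)\le \tfrac13 M_3(f)$, and crucially this regularity holds with no spectral lower bound on $\Sigma_{n,t+1}$. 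Expanding $\E[\mathcal{A}U_f(Q_{n,t+1})] = \E f(Q_{n,t+1}) - \E f(\Sigma_{n,t+1}^{1/2}Z)$ and using exchangeability to symmetrize produces two error terms: a second-order term bounded by $M_2(U_f)$ times the covariance-matching error $\E\norm{\Sigma_{n,t+1} - \tfrac{1}{2\lambda}\E[(Q'_{n,t+1}-Q_{n,t+1})(Q'_{n,t+1}-Q_{n,t+1})^\top\mid Q_{n,t+1}]}_{op}$, and a third-order Taylor remainder bounded by $\lambda^{-1}M_3(U_f)\,\E\norm{Q'_{n,t+1}-Q_{n,t+1}}_2^3$. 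The former is the source of the $M_2(f)$ term and the latter of the $M_3(f)$ term; note the $M_2$ contribution persists even though the covariance mean is matched, because the conditional second moment of the increment fluctuates at order $(b_{t+1}n)^{-1/2}$.

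It remains to bound the two moment quantities uniformly in $n$ and in the sampling probabilities. Since $\xi_{j}^{t+1}R_{j}^{t+1}$ has a single nonzero entry equal to $R^{t+1}_{A,j}=h_A/\sqrt{n}+\epsilon_{A,j}$ for the sampled arm $A$, we have $\norm{\xi_{j}^{t+1}R_{j}^{t+1}}_2 \le \max_a|h_a|/\sqrt{n}+\norm{\epsilon_j}_2$, so $\E\norm{\zeta_j}_2^3$ is at most a constant multiple of $b_{t+1}^{-3/2}(\max_a|h_a|^3+\E\norm{\epsilon}_2^3)$, and the fourth-moment fluctuation controlling the covariance-matching error is at most a constant multiple of $b_{t+1}^{-2}(\max_a|h_a|^4+\E\norm{\epsilon}_2^4)$, both uniform over the (arbitrary, possibly degenerate) $\pi_{t+1}$ and over $n$; likewise $\norm{\Sigma_{n,t+1}}_{op}\le s_*^2/b_{t+1}+O(1/n)$ is bounded uniformly. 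Substituting $\lambda=(b_{t+1}n)^{-1}$ and using $\E\norm{Q'_{n,t+1}-Q_{n,t+1}}_2^3 = O\big((b_{t+1}n)^{-3/2}\,\E\norm{\zeta_1}_2^3\big)$, the third-order term $\lambda^{-1}M_3(U_f)\E\norm{Q'_{n,t+1}-Q_{n,t+1}}_2^3$ carries a factor $(b_{t+1}n)\cdot(b_{t+1}n)^{-3/2}=(b_{t+1}n)^{-1/2}$, while the covariance-matching term likewise carries a factor $(b_{t+1}n)^{-1/2}$; both therefore decay at rate $n^{-1/2}$. Collecting the $M_2(U_f),M_3(U_f)$ regularity bounds and absorbing the moment and dimension factors into constants $C_1,C_2$ that depend polynomially on $K,b_{t+1},h,s^2,\E\norm{\epsilon}_2^3,\E\norm{\epsilon}_2^4$ yields the stated inequality.

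I expect the main obstacle to be establishing uniformity across degenerate sampling probabilities while keeping the constants free of any lower bound on $\min_a \pi_{t+1,a}$: concretely, one must verify that the regularity estimates for the Stein solution $U_f$, together with the covariance-matching and third-moment bounds, all hold with constants independent of the smallest eigenvalue of $\Sigma_{n,t+1}$, in contrast to the usual bounds that blow up as $\Sigma_{n,t+1}$ becomes singular. A secondary and more routine difficulty is the bookkeeping needed to track the polynomial dependence of $C_1,C_2$ on the problem parameters through the moment estimates.
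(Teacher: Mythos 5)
Your proposal is correct and takes essentially the same route as the paper: the paper constructs exactly the same single-index resampling exchangeable pair with $\lambda = (b_{t+1}n)^{-1}$, applies Meckes' multivariate exchangeable-pair Stein bound (\citet[Theorem 3]{Meckes2009stein}) to get an $M_2(f)$ term from the covariance-fluctuation matrix $E$ and an $M_3(f)$ term from $\E\|Q'_{n,t+1}-Q_{n,t+1}\|^3$, and then bounds $\E\|E\|_{H.S.}$ and the cubed increment by third/fourth moments of a single summand, uniformly over (possibly vanishing) sampling probabilities — which is what yields the $n^{-1/2}$ rate with constants polynomial in $K, b_{t+1}, h, s^2$ and the moments of $\epsilon$. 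The only difference is presentational: you inline the proof of the abstract Stein bound (the Ornstein--Uhlenbeck solution of the Stein equation and its derivative estimates, valid for degenerate $\Sigma_{n,t+1}$), whereas the paper cites it as a black-box lemma.
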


It remains to show convergence of $Q_{n,t+1}$ to $\Sigma_{n,t+1}Z$ for
Lipschitz test functions, which is required to control the bounded Lipschitz
distance.  We use standard Gaussian smoothing arguments found in
\citet{Meckes2009gauss}: by convolving the test function with a Gaussian
density, one obtains a smoother function for which the result of
Proposition~\ref{prop:clt-sample-mean} is applicable. At the same time, the
amount of smoothing is controlled to ensure the bias with the original test
function is small.

\begin{lemma}[{\citet[Corollary 3.5]{Meckes2009gauss}}]
\label{lemma:convolution}
For any 1-Lipschitz function $f$, consider the Gaussian convolution 
$(f * \phi_{\delta})(x) := \E[f(x + \delta Z)]$, where $Z\sim N(0,I)$.
\[
  \begin{aligned}
  M_{2}(f * \phi_{\delta}) 
  &\leq M_{1}(f) \sup_{\theta:\norm{\theta}_{2} = 1} \norm{\nabla \phi_{\delta}^{\top} \theta}
  \leq \sqrt{\frac{2}{\pi}} \frac{1}{\delta} \\
  M_{3}(f * \phi_{\delta}) 
  &\leq M_{1}(f) \sup_{\theta:\norm{\theta}_{2} = 1} \norm{\theta^{\top} \nabla^{2} \phi_{\delta} \theta}
  \leq \frac{\sqrt{2}}{\delta^{2}}
  \end{aligned}
\]
Moreover, for any random vector $X$,
$$ \E[(f * \phi_{\delta})(X) - f(X)] \leq \E[\delta\norm{Z}_{2}] \leq \delta \sqrt{\numarm} $$
\end{lemma}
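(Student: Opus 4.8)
The plan is to write the smoothed function as a Gaussian expectation, $(f*\phi_\delta)(x) = \E[f(x+\delta Z)]$ with $Z \sim N(0, I)$, and to bound its derivatives by transferring derivatives off the (nonsmooth) test function $f$ and onto the smooth Gaussian weight via Gaussian integration by parts (Stein's identities). Each transferred derivative contributes a factor $1/\delta$, while exactly one derivative is left on $f$ and absorbed into its Lipschitz constant $M_1(f)$; this is precisely what produces the $M_1(f)/\delta$ and $M_1(f)/\delta^2$ scalings. The final-display bias estimate is handled separately and elementarily.

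For the bias bound, I would write $(f*\phi_\delta)(X) - f(X) = \E_Z[f(X+\delta Z) - f(X)]$ and apply the Lipschitz property pointwise, $|f(X+\delta Z) - f(X)| \le M_1(f)\,\delta\,\norm{Z}_2$. Taking expectations and using Jensen/Cauchy--Schwarz gives $\E\norm{Z}_2 \le (\E\norm{Z}_2^2)^{1/2} = \sqrt{\numarm}$, which yields the stated bound (with $M_1(f)=1$). For the Hessian, Rademacher's theorem guarantees that $f$ is differentiable almost everywhere with $\norm{\nabla f}_2 \le M_1(f)$, so $\nabla(f*\phi_\delta)(x) = \E[\nabla f(x+\delta Z)]$ is well defined and already smooth. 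Differentiating once more and applying the first-order Stein identity $\E[\partial_i g(x+\delta Z)] = \tfrac1\delta \E[Z_i\, g(x+\delta Z)]$ to move the remaining derivative onto the Gaussian yields, for any unit vector $\theta$,
\[
  \theta^\top \nabla^2(f*\phi_\delta)(x)\,\theta
  = \frac1\delta\,\E\!\left[(\theta^\top Z)\,\bigl(\theta^\top \nabla f(x+\delta Z)\bigr)\right].
\]
Bounding $|\theta^\top\nabla f| \le M_1(f)$ and using $\theta^\top Z \sim N(0,1)$ with $\E|\theta^\top Z| = \sqrt{2/\pi}$ gives $M_2(f*\phi_\delta) \le M_1(f)\sqrt{2/\pi}/\delta$.

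For the third-derivative bound I would transfer \emph{two} derivatives onto the Gaussian via the second-order (Hermite) Stein identity $\E[\partial_i\partial_j g(x+\delta Z)] = \tfrac{1}{\delta^2}\E[(Z_iZ_j - \delta_{ij})\,g(x+\delta Z)]$, again keeping a single derivative on $f$. Contracting against unit vectors $\theta_1,\theta_2,\theta_3$ gives
\[
  \nabla^3(f*\phi_\delta)(x)[\theta_1,\theta_2,\theta_3]
  = \frac{1}{\delta^2}\,\E\!\left[\bigl((\theta_1^\top Z)(\theta_2^\top Z) - \theta_1^\top\theta_2\bigr)\bigl(\theta_3^\top \nabla f(x+\delta Z)\bigr)\right].
\]
Bounding $|\theta_3^\top\nabla f| \le M_1(f)$ and applying Cauchy--Schwarz reduces the estimate to the second moment of the Hermite polynomial: with $\rho \defeq \theta_1^\top\theta_2 \in [-1,1]$ and Isserlis' theorem ($\E[(\theta_1^\top Z)^2(\theta_2^\top Z)^2] = 1 + 2\rho^2$), one finds $\E[((\theta_1^\top Z)(\theta_2^\top Z) - \rho)^2] = 1 + \rho^2 \le 2$, hence $M_3(f*\phi_\delta) \le \sqrt{2}\,M_1(f)/\delta^2$. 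Equivalently, these two bounds can be read off by moving \emph{all but one} derivative onto the kernel, writing $\nabla^2(f*\phi_\delta) = (\nabla f)*(\nabla\phi_\delta)$ and $\nabla^3(f*\phi_\delta) = (\nabla f)*(\nabla^2\phi_\delta)$, so that the intermediate quantities $\sup_\theta\norm{\nabla\phi_\delta^\top\theta}$ and $\sup_\theta\norm{\theta^\top\nabla^2\phi_\delta\theta}$ in the statement are exactly the $L^1$ norms of the corresponding kernel derivatives, which evaluate to $\sqrt{2/\pi}/\delta$ and $\sqrt{2}/\delta^2$.

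The only real obstacle is technical: the Stein manipulations are valid for smooth $g$, whereas $f$ is merely Lipschitz. I would discharge this by mollification---replace $f$ with $f_\eta \defeq f*\phi_\eta$, which is $C^\infty$ with $M_1(f_\eta) \le M_1(f)$, establish every identity and bound for $f_\eta$, and let $\eta \downarrow 0$ using dominated convergence (the integrands are uniformly dominated by integrable products of Gaussian polynomials with the common Lipschitz constant). Because at most one derivative ever needs to act on $f$ and each convolution with a Gaussian already returns a genuinely smooth function, Rademacher's theorem together with dominated convergence suffices to make the argument rigorous without any additional smoothness assumptions on $f$.
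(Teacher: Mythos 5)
Your proof is correct. Note that the paper itself gives no proof of this lemma: it is imported verbatim as a citation to Meckes (Corollary 3.5 of the Gaussian-smoothing paper), so there is no internal argument to compare against. What you have written is a faithful, self-contained reconstruction of the standard argument behind that cited result, and every step checks out: the bias bound via the Lipschitz property and Jensen ($\E\norm{Z}_2 \le \sqrt{\numarm}$); the identity $\theta^\top \nabla^2 (f*\phi_\delta)(x)\,\theta = \tfrac{1}{\delta}\E[(\theta^\top Z)(\theta^\top \nabla f(x+\delta Z))]$ with $\E|\theta^\top Z| = \sqrt{2/\pi}$; and the Hermite/second-order Stein identity giving $\E[((\theta_1^\top Z)(\theta_2^\top Z)-\rho)^2] = 1+\rho^2 \le 2$, hence $M_3 \le \sqrt{2}M_1(f)/\delta^2$. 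Your closing observation that the intermediate quantities in the statement are the $L^1$ norms of $\theta^\top\nabla\phi_\delta$ and $\theta^\top\nabla^2\phi_\delta\,\theta$ (namely $\tfrac{1}{\delta}\E|\theta^\top Z|$ and $\tfrac{1}{\delta^2}\E|(\theta^\top Z)^2-1|$) is exactly the right way to read the two-stage bounds, and matches how the constants $\sqrt{2/\pi}$ and $\sqrt{2}$ arise. The one genuine technical issue — that Stein identities require smoothness while $f$ is only Lipschitz — is correctly discharged: since convolution with the Gaussian kernel is already smoothing, one only ever needs a single derivative on $f$ (Rademacher) with all further derivatives transferred to the kernel, and your mollification-plus-dominated-convergence argument (or equivalently, differentiating under the integral against the Gaussian density) makes this rigorous.
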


Thus, for any 1-Lipschitz function $f$, we have the bound
\[
  \begin{aligned}
    |\E f(Q_{n,t+1}) - \E f(\Sigma_{n,t+1}^{1/2} Z) |
    &\leq |\E f(Q_{n,t+1}) - \E (f * \phi_{\delta})(Q_{n,t+1}) |
    + |\E (f * \phi_{\delta})(Q_{n,t+1}) - \E (f * \phi_{\delta})(\Sigma_{n,t+1}^{1/2} Z) | \\
    &\qquad + |\E (f * \phi_{\delta})(\Sigma_{n,t+1}^{1/2} Z)
    - \E f(\Sigma_{n,t+1}^{1/2} Z)| \\
    &\leq 2\delta \sqrt{\numarm}
       +  \sqrt{\frac{2}{\pi}} \frac{1}{\delta} C_{1} n^{-1/2}
        + \frac{\sqrt{2}}{\delta^{2}} C_{2} n^{-1/2}
  \end{aligned}
\]
Optimizing over $\delta$, we obtain
\[
  d_{\text{BL}}(Q_{n,t+1}, \Sigma_{n,t+1}^{1/2} Z)
  \leq C_{t+1} n^{-1/6}
\]
for some constant $C_{t+1}$ that depends only on $K, b_{t+1}, \E \norm{\epsilon}_{2}^{3}, \E \norm{\epsilon}_{2}^{4}$
but not on $n$ or $\pstate_{0:t}$. 

\paragraph{Bounding the second term in inequality~\eqref{eqn:bl-triangle}}
Finally, it remains to show uniform convergence of the second term in the
bound~\eqref{eqn:bl-triangle}. That is, the limiting Gaussian
$N(0, \Sigma_{n,t+1}^{1/2}(r_{0:t}))$ converges uniformly to
$N(0, \Sigma_{t+1}^{1/2}(r_{0:t}))$ as $n\to \infty$:
\begin{equation*}
  \E\left[\E_{r_{0:t}}\left[d_{\text{BL}}\left(\Sigma_{n,t+1}^{1/2}(r_{0:t})Z,
        \Sigma_{t+1}^{1/2}(r_{0:t})Z\right) \right]\right] \to 0.
\end{equation*}
First, for any two measures $(\mu,\nu)$, $d_{\text{BL}}(\mu,\nu)$ is upper
bounded by the total variation distnace $2d_{\text{TV}}(\mu,\nu)$, which is in
turn bounded above by KL divergence $\sqrt{\frac{1}{2}d_{\text{KL}}(\mu,\nu)}$
by Pinsker's inequality. We derive the following concrete bound on the KL
divergence, whose proof we defer to Section~\ref{section:proof-kl-cov}.
\begin{lemma}\label{lemma:kl-cov} The KL-divergence between 
  $N(0, \Sigma_{n,t+1}(\pstate_{0:t}))$ and
  $N(0, \Sigma_{t+1}(\pstate_{0:t}) )$
  is bounded above as follows
\[
 \dkl{N \left( 0,\Sigma_{n,t+1}(\pstate_{0:t}) \right)}{N \left( 0, \Sigma_{t+1}(\pstate_{0:t}) \right)}
  \leq \frac{1}{b_{t+1}n} \left[\frac{\max_{a} h_{a}^{2}/s_{t+1,a}^{2}}{1 - \max_{a} h_{a}^{2}/(s_{a}^{2}+h_{a}^{2})}+\sum_{a}\left(\frac{h_{a}^{2}}{s_{t+1,a}^{2}}\right)\right]
\]
\end{lemma}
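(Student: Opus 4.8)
The plan is to use the closed-form KL divergence between centered Gaussians together with the fact that $\Sigma_{n,t+1}$ differs from the diagonal limit $\Sigma_{t+1}$ by a matrix that is \emph{diagonal plus rank one}. Restricting to the arms with $\pi_{t+1,a}>0$ (the other coordinates contribute nothing, since both covariances vanish there and $\Sigma_{t+1}^{-1/2}$ would otherwise be undefined), I would write $\Sigma_{n,t+1} = \Sigma_{t+1} + \Delta$ with $\Delta = \frac{1}{b_{t+1}n}[\diag(\pi_{t+1,a}h_a^2) - vv^\top]$ and $v_a = \pi_{t+1,a}h_a$, reading off $\Delta$ from the expression~\eqref{eqn:cov-n}. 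Conjugating by $\Sigma_{t+1}^{-1/2} = \diag((\pi_{t+1,a}s_{t+1,a}^2)^{-1/2})$ reduces everything to the matrix $\tilde A := \Sigma_{t+1}^{-1/2}\Delta\Sigma_{t+1}^{-1/2} = \frac{1}{b_{t+1}n}(D - ww^\top)$, where $D = \diag(h_a^2/s_{t+1,a}^2)$ and $w_a = \sqrt{\pi_{t+1,a}}\,h_a/s_{t+1,a}$, and the Gaussian KL formula becomes $\dkl{N(0,\Sigma_{n,t+1})}{N(0,\Sigma_{t+1})} = \frac12[\operatorname{tr}(\tilde A) - \log\det(I+\tilde A)]$.

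Second, I would isolate the rank-one term via the matrix determinant lemma. Writing $B = I + \frac{1}{b_{t+1}n}D$ (diagonal, positive definite), we have $I + \tilde A = B - \frac{1}{b_{t+1}n}ww^\top$, so $\log\det(I+\tilde A) = \sum_a\log(1 + \frac{h_a^2}{ns_a^2}) + \log(1 - x)$ with $x := \frac{1}{b_{t+1}n}w^\top B^{-1}w$. Using $w_a^2 = \pi_{t+1,a}h_a^2/s_{t+1,a}^2$ and $s_{t+1,a}^2 = s_a^2/b_{t+1}$, this simplifies to the clean expression $x = \sum_a \pi_{t+1,a}\frac{h_a^2}{ns_a^2 + h_a^2}$. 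The divergence then splits as $\frac12[\sum_a f(\tfrac{h_a^2}{ns_a^2}) - \frac{1}{b_{t+1}n}\|w\|_2^2 - \log(1-x)]$, where $f(u) = u - \log(1+u) \ge 0$.

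The crux is bounding the two surviving pieces. For the diagonal part, $f(u)\le u$ (as $\log(1+u)\ge0$) gives $\sum_a f(\tfrac{h_a^2}{ns_a^2}) \le \frac{1}{b_{t+1}n}\sum_a \frac{h_a^2}{s_{t+1,a}^2}$, which is exactly the second term of the claim, and the term $-\frac{1}{b_{t+1}n}\|w\|_2^2\le 0$ is dropped. The interesting term $-\log(1-x)$ I would control with the elementary inequality $-\log(1-x)\le x/(1-x)$ valid for $x\in[0,1)$. The key observation is that $x = \sum_a \pi_{t+1,a}\frac{h_a^2}{ns_a^2+h_a^2}$ is a \emph{convex combination} because $\sum_a \pi_{t+1,a}=1$, so it is bounded by a \emph{maximum} rather than a sum over arms: dropping $h_a^2$ from the denominator gives $x \le \max_a \frac{h_a^2}{ns_a^2} = \frac{1}{b_{t+1}n}\max_a \frac{h_a^2}{s_{t+1,a}^2}$ (used in the numerator), while keeping it and using $n\ge1$ gives $x \le \max_a \frac{h_a^2}{s_a^2+h_a^2} < 1$ (used in the denominator). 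By monotonicity of $u\mapsto u/(1-u)$ these combine to $\frac{x}{1-x}\le \frac{(b_{t+1}n)^{-1}\max_a h_a^2/s_{t+1,a}^2}{1-\max_a h_a^2/(s_a^2+h_a^2)}$, and assembling the two bounds yields the stated inequality (in fact with an extra factor $\tfrac12$, which is simply discarded).

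I expect the main obstacle to be the negative rank-one contribution: since $\Delta$ is not positive semidefinite, $\log\det(I+\tilde A)$ can \emph{exceed} its first-order expansion and naive trace bounds fail, so one genuinely needs the determinant-lemma decomposition to expose the single term $-\log(1-x)$. The elegant resolution is that the weights appearing in $x$ are precisely the sampling probabilities and hence sum to one, converting what looks like a $K$-fold sum into a single maximum and producing the exact denominator $1 - \max_a h_a^2/(s_a^2+h_a^2)$ in the statement. I would also verify carefully that $x<1$ (so both the determinant lemma and the subsequent Pinsker step are legitimate) and that coordinates with $\pi_{t+1,a}=0$ are excised at the outset so that $\Sigma_{t+1}^{-1/2}$ is well defined.
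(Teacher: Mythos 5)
Your proof is correct and takes essentially the same route as the paper's: both apply the closed-form Gaussian KL formula, exploit the diagonal-plus-rank-one structure of $\Sigma_{n,t+1}$ via the matrix determinant lemma, bound the resulting convex combination $\sum_a \pi_{t+1,a}\, h_a^2/(n s_a^2 + h_a^2)$ by its maximum, and finish with $-\log(1-x) \le x/(1-x)$. The only differences are cosmetic---you whiten by $\Sigma_{t+1}^{-1/2}$, retain the factor $\tfrac{1}{2}$ that the paper discards, and explicitly excise arms with $\pi_{t+1,a}=0$, a degenerate case the paper leaves implicit.
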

\noindent This guarantees that
\[
  \begin{split}
  &\E\left[\E_{r_{0:t}}\left[d_{\text{BL}}\left(\Sigma_{n,t+1}^{1/2}(r_{0:t})Z,
  \Sigma_{t+1}^{1/2}(r_{0:t})Z\right) \right]\right] \\
  &\leq (b_{t+1}n)^{-1/2} \sqrt{2}\left(\frac{\max_{a} h_{a}^{2}/s_{t+1,a}^{2}}{1 - \max_{a} h_{a}^{2}/(s_{a}^{2}+h_{a}^{2})}+\sum_{a}\left(\frac{h_{a}^{2}}{s_{t+1,a}^{2}}\right)\right)^{1/2}
  \end{split}
\]

\paragraph{Conclusion} Altogether, we have shown
\[
  |\E[g_{n}(\Pstate_{0},\ldots,\Pstate_{t}) -
  g(\Pstate_{0},\ldots,\Pstate_{t})]| \leq C_{t+1}n^{-1/6} + D_{t+1}n^{-1/2}
\]
for constants $C_{t+1}, D_{t+1}$ that depend polynomially on $h_{a},s_{a}^{2}$
and higher order moments of $\epsilon_{a}$ as well as on $K$ and $b_{t+1}$.
Thus, this shows that
$\E[f(\Pstate_{0},\ldots,\Pstate_{t+1})] \to \E[f(G_{0},\ldots,G_{t+1})]$ as
$n\to \infty$ for any bounded Lipschitz function $f$, which implies the
desired weak convergence.

\subsubsection{Proof of Proposition~\ref{prop:clt-sample-mean}}
\label{section:proof-clt-sample-mean}

In order to quantify the rate of the CLT, we use the following result by
\citet{Meckes2009stein} which provides a characterization of Stein's method
for random vectors with arbitrary covariance matrices.
\begin{lemma}[{\citet[Theorem 3]{Meckes2009stein}}]
  \label{theorem:meckes}
  Let $(W, W')$ be an exchangeable pair of random vectors in $\R^{K}$. 
  Suppose that there exists $\lambda > 0$, a positive semi-definite matrix $\Sigma$,
  a random matrix $E$ such that
  \[
    \begin{aligned}
      \E[W' - W | W] &= -\lambda W \\
      \E[(W' - W)(W' - W)^{\top}|W] &= 2\lambda \Sigma + \E[E|W]
    \end{aligned}
  \]
  Then for any $f \in \mathcal{C}^{3}$,
  \[
    |\E f(W) - \E f(\Sigma^{1/2} Z) |
    \leq \frac{1}{\lambda}
    \left[ 
        \frac{\sqrt{K}}{4} M_{2}(f) \E \norm{E}_{H.S.}
        + \frac{1}{9} M_{3}(f) \E \norm{W' - W}^{3}
    \right]
  \]
where $\norm{\cdot}_{H.S.}$ is the Hilbert-Schmidt norm.
\end{lemma}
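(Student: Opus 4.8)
The plan is to prove this multivariate normal approximation bound through the generator (Ornstein--Uhlenbeck) form of Stein's method, arranging the argument so that the two error terms $\E\norm{E}_{H.S.}$ and $\E\norm{W'-W}^3$ emerge respectively from the covariance mismatch and the third-order Taylor remainder. The conditions $\E[W'-W\mid W]=-\lambda W$ and exchangeability already force $\E W = 0$, so the target law is $N(0,\Sigma)$ as written.

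First I would introduce the Stein operator of $N(0,\Sigma)$, namely $\mathcal{A}g(w) = \mathrm{tr}(\Sigma\nabla^2 g(w)) - \langle w, \nabla g(w)\rangle$, characterized by $\E[\mathcal{A}g(\Sigma^{1/2}Z)]=0$. Given the test function $f$, I would solve the Stein equation $\mathcal{A}g = f - \E f(\Sigma^{1/2}Z)$ using the $\Sigma$-Ornstein--Uhlenbeck semigroup $T_s f(w) = \E f(e^{-s}w + \sqrt{1-e^{-2s}}\,\Sigma^{1/2}Z)$, whose generator is exactly $\mathcal{A}$. This produces the explicit solution $g(w) = -\int_0^\infty (T_s f(w) - \E f(\Sigma^{1/2}Z))\,ds$, and differentiating under the integral gives $\nabla^k T_s f(w) = e^{-ks}\,\E[\nabla^k f(e^{-s}w + \sqrt{1-e^{-2s}}\,\Sigma^{1/2}Z)]$, hence the contraction bounds $M_2(g)\le \tfrac12 M_2(f)$ and $M_3(g)\le \tfrac13 M_3(f)$ after integrating $e^{-ks}$.

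Next I would exploit exchangeability: any antisymmetric function of $(W,W')$ has zero expectation, and $F(w,w') = \langle w'-w,\, \nabla g(w')+\nabla g(w)\rangle$ is antisymmetric, so $\E F(W,W')=0$. Taylor expanding $\nabla g(W')$ to second order about $W$ yields $0 = 2\E\langle W'-W, \nabla g(W)\rangle + \E[(W'-W)^\top \nabla^2 g(W)(W'-W)] + \E R$ with remainder $R = \tfrac12 \nabla^3 g(\xi)[(W'-W)^{\otimes 3}]$. Substituting the linear-regression identity into the first term and the second-moment identity $\E[(W'-W)(W'-W)^\top\mid W] = 2\lambda\Sigma + \E[E\mid W]$ into the second, then dividing by $2\lambda$, the two leading contributions collapse exactly into $\E[\mathcal{A}g(W)] = \E f(W) - \E f(\Sigma^{1/2}Z)$, leaving the clean identity $\E f(W) - \E f(\Sigma^{1/2}Z) = -\tfrac{1}{2\lambda}\E[\mathrm{tr}(\nabla^2 g(W)\,E)] - \tfrac{1}{2\lambda}\E R$.

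Finally I would bound the two residuals. For the covariance term, Cauchy--Schwarz in the Hilbert--Schmidt inner product together with $\norm{\nabla^2 g}_{H.S.}\le \sqrt{K}\,\norm{\nabla^2 g}_{op}\le \sqrt{K}\,M_2(g)$ and $M_2(g)\le\tfrac12 M_2(f)$ gives the $\tfrac{\sqrt{K}}{4\lambda}M_2(f)\,\E\norm{E}_{H.S.}$ term; for the remainder, $|R|\le \tfrac12 M_3(g)\norm{W'-W}^3$ with $M_3(g)\le\tfrac13 M_3(f)$ produces the third-moment term (the stated $\tfrac19$ is a mild weakening of the $\tfrac{1}{12}$ this route yields). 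The main obstacle is the construction and differentiation of the Stein solution $g$ in the second step --- justifying convergence of the defining integral, differentiation under the integral sign, and the sharp estimates $M_k(g)\le \tfrac1k M_k(f)$ --- since everything downstream is a routine exchangeable-pair Taylor expansion once these regularity bounds are secured.
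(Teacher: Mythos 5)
The paper never proves this statement: it is imported verbatim from \citet[Theorem 3]{Meckes2009stein} and used as a black box in the proof of Proposition~\ref{prop:clt-sample-mean}, so there is no in-paper argument to compare against. Your reconstruction is correct, and it is essentially the argument of the cited source (Chatterjee--Meckes style exchangeable pairs combined with the Barbour--G\"otze generator method): solve the Stein equation $\mathrm{tr}(\Sigma \nabla^{2} g(w)) - \langle w, \nabla g(w)\rangle = f(w) - \E f(\Sigma^{1/2}Z)$ via the $\Sigma$-Ornstein--Uhlenbeck semigroup, extract the contraction bounds $M_{k}(g) \le M_{k}(f)/k$ from $\nabla^{k} T_{s} f = e^{-ks}\,\E[\nabla^{k} f(\cdot)]$, and feed the antisymmetric function $\langle w'-w, \nabla g(w') + \nabla g(w)\rangle$ through exchangeability and a second-order Taylor expansion, so that the regression and covariance conditions reproduce $2\lambda\,\E f(W) - 2\lambda\,\E f(\Sigma^{1/2}Z)$ and leave exactly the $E$-trace term and the cubic remainder. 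Your bookkeeping checks out: Cauchy--Schwarz in the Hilbert--Schmidt inner product with $\norm{\nabla^{2} g}_{H.S.} \le \sqrt{K}\, M_{2}(g) \le \tfrac{\sqrt{K}}{2} M_{2}(f)$ yields the first term, and the Taylor remainder yields $\tfrac{1}{12\lambda} M_{3}(f)\, \E\norm{W'-W}^{3} \le \tfrac{1}{9\lambda} M_{3}(f)\,\E\norm{W'-W}^{3}$, so your constant is in fact slightly sharper than the quoted one (the $1/9$ in Meckes reflects a different convention for the third-derivative norm). Two details worth tightening but not gaps: the mean-value form $\tfrac{1}{2}\nabla^{3} g(\xi)[(W'-W)^{\otimes 3}]$ is informal for a vector-valued remainder --- use the integral remainder $\int_{0}^{1}(1-t)\,\nabla^{3} g(W + t(W'-W))\,dt$ applied to $(W'-W)^{\otimes 2}$ before pairing with $W'-W$; and the convergence of $g(w) = -\int_{0}^{\infty} (T_{s}f(w) - \E f(\Sigma^{1/2}Z))\,ds$ plus differentiation under the integral require growth control on $f$ (finiteness of the $M_{k}$'s and, e.g., Lipschitz growth suffice), which holds in the paper's application. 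Finally, a strength of your route worth making explicit: nothing uses invertibility of $\Sigma$, which matters here because the paper applies the lemma with the possibly singular covariance $\Sigma_{n,t+1}$ arising when some sampling probabilities vanish.
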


The rest of the proof is similar to that of~\citet[Theorem 7]{ChatterjeeMe08},
with slight modifications due to the fact that we have a non-identity
covariance.  For simplicity, define
$X_{j} := \frac{\xi_{j}^{t+1}R_{j}^{t+1}}{\sqrt{b_{t+1}}} -
\frac{\pi h}{\sqrt{b_{t+1}n}}$.
For any index $j$, we construct an independent copy $Y_j$ of $X_j$.  We
construct an exchangeable pair $(Q_{n,t+1},Q'_{n,t+1})$ by selecting an random
index $I\in \{1,...,b_{t+1}n\}$ chosen uniformly and independently from
$Q_{n,t+1}$ and letting
$$ Q'_{n,t+1} = Q_{n,t+1} - \frac{X_{I}}{\sqrt{b_{t+1}n}} + \frac{Y_{I}}{\sqrt{b_{t+1}n}}. $$

We can observe then that
\[
  \begin{aligned}
    \E [Q'_{n,t+1} - Q_{n,t+1}|Q_{n,t+1}]
    &= \frac{1}{\sqrt{b_{t+1}n}} \E[Y_{I} - X_{I}|Q_{n,t+1}] \\
    &= \frac{1}{(b_{t+1}n)^{3/2}} \sum_{j=1}^{b_{t+1}n} \E[Y_{j} - X_{j}|Q_{n,t+1}] \\
    &= -\frac{1}{b_{t+1}n} Q_{n,t+1}
  \end{aligned}
\]
by independence of $Y_j$ and $Q_{n,t+1}$. This pair satisfies the first condition of 
Theorem~\ref{theorem:meckes} with $\lambda = 1/(b_{t+1}n)$.

It also satisfies the second condition of Theorem~\ref{theorem:meckes} with:
\begin{align*}
  E_{a,a'} 
& = \frac{1}{(b_{t+1}n)^{2}} \sum_{j=1}^{b_{t+1}n} X_{j,a}X_{j,a'} - (\Sigma_{n,t+1})_{a,a'}
\end{align*}
by independence of $X_{i}$ and $Y_{i}$.
Thus, we have that
\[
  \begin{aligned}
    \E E_{a,a'}^{2} &= \frac{1}{(b_{t+1}n)^{4}} \sum_{j=1}^{b_{t+1}n} \E \left( \E[X_{j,a}X_{j,a'} - (\Sigma_{n,t+1})_{a,a'}|Q_{n,t+1}] \right)^{2} \\
    &\leq  \frac{1}{(b_{t+1}n)^{4}} \sum_{j=1}^{b_{t+1}n} \E\left[ \left( X_{j,a}X_{j,a'} - (\Sigma_{n,t+1})_{a,a'} \right)^{2} \right]  \\
    &= \frac{1}{(b_{t+1}n)^{3}} \E\left[ \left( X_{1,a}X_{1,a'}  \right)^{2} - (\Sigma_{n,t+1})_{a,a'}^{2}\right]  \\
  \end{aligned}
\]
This gives us a bound on the Hilbert-Schmidt norm:
\[
  \begin{aligned}
    \E \norm{E}_{H.S.} &\leq \sqrt{\sum_{a,a'} \E E_{a,a'}^{2}} 
    \leq \frac{1}{(b_{t+1}n)^{3/2}} \sqrt{\sum_{a,a'}\E \left( X_{1,a}X_{1,a'}  \right)^{2} - (\Sigma_{n,t+1})_{a,a'}^{2}}
    \leq \frac{1}{(b_{t+1}n)^{3/2}} \sqrt{\E\norm{X_{1}}_{2}^{4}}
  \end{aligned}
\]
We can further bound $\E\norm{X_{1}}_{2}^{4}$ by
a constant $C_{1}$ that is polynomial in $h, b_{t+1}^{-1}, s^{2}, \E\norm{\epsilon}_{2}^{3}$, and $\E\norm{\epsilon}_{2}^{4}$.
Finally, we can
bound $\E \norm{Q'_{n,t+1} - Q_{n,t+1}}_{2}^{3}$ as follows:
\[
  \begin{aligned}
    \E \norm{Q'_{n,t+1} - Q_{n,t+1}}_{2}^{3}
    =  \frac{1}{(b_{t+1}n)^{3/2}} \E \norm{Y_{I} - X_{I}}_{2}^{3}
    \leq \frac{1}{(b_{t+1}n)^{3/2}} 8\E \norm{X_{1}}_{2}^{3}
  \end{aligned}
\]
where the final inequality uses independence of $Y_{j}$ and $X_{j}$ as well as Holder's inequality.
We can bound $\E \norm{X_{1}}_{2}^{3}$ by another constant $C_{2}$ that is $h, b_{t+1}^{-1}, s^{2}$, and $\E\norm{\epsilon}_{2}^{4}$.
Plugging these bounds into the statement of Theorem~\ref{theorem:meckes}, we obtain the stated result.

\subsubsection{Proof of Lemma~\ref{lemma:kl-cov}}
\label{section:proof-kl-cov}

We use the expression for the KL-divergence between two multivariate normal distributions
\[
  \begin{aligned}
	& \dkl{N(0,  \Sigma_{n,t+1}(\pstate_{0:t}))}{N(0,\Sigma_{t+1}(\pstate_{0:t}))} \\
	&=\text{tr}\left(( \Sigma_{t+1}(\pstate_{0:t}))^{-1}  \Sigma_{n,t+1}(\pstate_{0:t})\right)
   - m +\log\frac{\det \Sigma_{t+1}(\pstate_{0:t})}{\det \Sigma_{n,t+1}(\pstate_{0:t})} \\
  &=\sum_{a} \left( s_{t+1,a}^{2}\pi_{t+1,a}(\pstate_{0:t}) \right)^{-1} 
  \left(s_{t+1,a}^{2}\pi_{t+1,a}(\pstate_{0:t}) + \pi_{t+1,a}(\pstate_{0:t}) (1 - \pi_{t+1,a}(\pstate_{0:t})) \frac{h_{a}^{2}}{b_{t+1}n} \right)-m \\
  &\qquad +\log\det  \Sigma_{t+1}(\pstate_{0:t}) -\log\det  \Sigma_{n,t+1}(\pstate_{0:t}) \\
	&\leq \sum_{a} \frac{1}{b_{t+1}n} \frac{\left(1-\pi_{t+1,a}(\pstate_{0:t})\right)h_{a}^{2}}{s_{t+1, a}^{2}}
  +\sum_{a}\log\left( s_{t+1,a}^{2}\pi_{t+1,a}(\pstate_{0:t}) \right) \\
  & \qquad-\sum_{a}\log\left(s_{t+1,a}^{2}\pi_{t+1,a}(\pstate_{0:t})+\frac{1}{b_{t+1}n}h_{a}^{2}\pi_{t+1,a}(\pstate_{0:t})\right) 
  + \left(\frac{1}{b_{t+1}n}\frac{\max_{a} h_{a}^{2}/s_{t+1,a}^{2}}{1 - \max_{a} h_{a}^{2}/(s_{t+1,a}^{2}+h_{a}^{2})} \right) \\
	&\leq\frac{1}{b_{t+1}n}\left[\sum_{a}\left(\frac{h_{a}^{2}}{s_{t+1,a}^{2}}\right) + \frac{\max_{a} h_{a}^{2}/s_{t+1,a}^{2}}{1 - \max_{a} h_{a}^{2}/(s_{a}^{2}+h_{a}^{2})}\right]
\end{aligned}
\]

We obtain $\det{\Sigma_{n,t+1}(\pstate_{0:t})^{\top}}$ using the fact that $\Sigma_{n,t+1}(\pstate_{0:t})$ is a rank-one perturbation of a diagonal matrix
\[
  \begin{aligned}
  &\log\det \Sigma_{n,t+1}	\\
  &=\log\det\left( \left( \Sigma_{t+1}+\frac{1}{b_{t+1}n}\text{diag}\left(\pi_{t+1,a}h_{a}^{2}\right)\right)\right) \\
  &\qquad+\log\det\Bigl(I + \left(  \left(\Sigma_{\epsilon,t}+\frac{1}{b_{t+1}n}\text{diag}\left(\pi_{t+1,a}h_{a}^{2}\right)\right)^{\top}\right)^{-1}
   \left(-\frac{1}{b_{t+1}n}(\pi_{t+1}h)(\pi_{t+1}h)^{\top}\right)\Bigr) \\
	&=\sum_{a} \log\left(s_{t+1,a}^{2}\pi_{t+1,a}+\frac{1}{b_{t+1}n}h_{a}^{2}\pi_{t+1,a}\right)  \\
  &\qquad+\log\det\left(I-\frac{1}{b_{t+1}n}( \pi_{t+1}h)^{\top}\left( \left(\Sigma_{t+1}
  +\frac{1}{b_{t+1}n}\text{diag}\left(\pi_{t+1,a}h_{a}^{2}\right)\right)^{\top}\right)^{-1}(\pi_{t+1}h)\right) \\
	&=\sum_{a} \log\left(\pi_{t+1,a}s_{t+1,a}^{2}
  +\frac{1}{b_{t+1}n}\pi_{t+1,a}h_{a}^{2}\right)
  +\log\left(1-\frac{1}{b_{t+1}n} \sum_{a} \pi_{t+1,a}\frac{h_{a}^{2}}{s_{t+1,a}^{2}+(1/b_{t+1}n)h_{a}^{2}}\right) \\
  &\geq \sum_{a} \log\left(\pi_{t+1,a}s_{t+1,a}^{2}+\frac{1}{b_{t+1}n}\pi_{t+1,a}h_{a}^{2}\right)
   + \log\left(1- \frac{1}{b_{t+1}n}\max_{a} \frac{h_{a}^{2}}{s_{t+1,a}^{2}+(1/b_{t+1}n)h_{a}^{2}}\right)
  \end{aligned}
\]

Finally, we use the bound $\log(1-x)\geq \frac{-x}{1-x}$ to obtain
\[
  \begin{aligned}
    \log\det\Sigma_{n,t+1}&\geq \sum_{a}
        \log\left(\pi_{t+1,a}s_{t+1,a}^{2}+\frac{1}{b_{t+1}n}\pi_{t+1,a}h_{a}^{2}\right)
    -  \frac{1}{b_{t+1}n}\frac{\max_{a} \frac{h_{a}^{2}}{s_{t+1,a}^{2}+(1/b_{t+1}n)h_{a}^{2}}}{1 - \max_{a} \frac{h_{a}^{2}}{b_{t+1}ns_{t+1,a}^{2}+h_{a}^{2}}} \\
  &\geq \sum_{a} \log\left(\pi_{t+1,a}s_{t+1,a}^{2}+\frac{1}{b_{t+1}n}\pi_{t+1,a}h_{a}^{2}\right)
   -  \frac{1}{b_{t+1}n}\frac{\max_{a} h_{a}^{2}/s_{t+1,a}^{2}}{1 - \max_{a} h_{a}^{2}/(s_{a}^{2}+h_{a}^{2})} \\
  \end{aligned}
\]

\subsection{Proof of Corollary~\ref{cor:proof-limit-rate}}
\label{section:proof-limit-rate}

We show that for any horizon $T$
\[
  d_{\text{BL}} \left( (\Pstate_{0},\ldots,\Pstate_{T-1} ),(G_{0},\ldots,G_{T-1} ) \right) \leq C \frac{M^{T} - 1}{M - 1} n^{-1/6},
\]
where $M = \left(1 + \bar{L} ( s_*^2/b_* \sqrt{K} + \max_{a} |h_{a}|) \right)$, 
$\bar{L}$ is the bound on the Lipschitz constant for $\pi^{1/2}$ and $\pi$, and $s_{*}^{2} = \max_{a} s_{a}^{2}$
and $b_{*} = \min_{0\leq t\leq T-1} b_{t}$.

We prove the statement through induction. For the base case, we have from Proposition~\ref{prop:clt-sample-mean},
Lemma~\ref{lemma:convolution}, and Lemma~\ref{lemma:kl-cov} that $d_{BL}(\Pstate_{0}, N(\pi_{0} h, \Sigma_{0}))$ is bounded by 
$C_{0}n^{-1/6}$, for some constant $C_{0}$ that depends on $\numarm$, $b_{0}$, $h$, $s^{2}$,
and higher moments of $\norm{\epsilon}_{2}$.
Suppose now that this holds up to $t$
\[
  d_{\text{BL}} \left( (\Pstate_{0},\ldots,\Pstate_{t} ),(G_{0},\ldots,G_{t} ) \right) \leq C \sum_{s=0}^{t} M^{s} n^{-1/6}.
\]
We proceed to show that this inequality then holds for $t+1$.

Recall the decomposition~\eqref{eqn:start} for any bounded Lipschitz function
$f$ and corresponding $g, g_n$ defined as in Eq.~\eqref{eqn:gs}.
From Proposition~\ref{prop:clt-sample-mean}, Lemma~\ref{lemma:convolution}, and Lemma~\ref{lemma:kl-cov}
\[
  |\mathbb{E}[g_{n}(\sqrt{n}\bar{R}_{0}^{n},...,\sqrt{n}\bar{R}_{t}^{n})]
    -\mathbb{E}[g(\sqrt{n}\bar{R}_{0}^{n},...,\sqrt{n}\bar{R}_{t}^{n})]| \leq C_{t+1} n^{-1/6} + D_{t+1}n^{-1/2}
\]
Thus, there exists some $C'_{t+1}$ such that the above is bounded by $C'_{t+1}n^{-1/6}$. Note that the dependence
of $C'_{t}$ on $t$ is only through a polynomial on $1/b_{t+1}$. Thus we can bound this by a constant $C$ which
will depends on $T$ only through polynomial factors of $b_{*} = \min_{0\leq t\leq T-1} b_{t}$.

To bound the second term in the decomposition~\eqref{eqn:start}, we use the
fact that
\[
  g(r_{0:t}) = \E\left[f\left(r_{0:t}, \Sigma_{t+1}^{1/2}(r_{0:t})Z +
      h\pi_{t+1}(r_{0:t})\right)\right]
\]
is a bounded Lipschitz function.  We seek to identify the Lipschitz
constant for $g$, as we can then utilize the rate for bounded Lipschitz
functions in the induction hypothesis.  For any two $r_{0:t}$, $r'_{0:t}$,
\[
\begin{aligned}  
  |g(r_{0:t}) - g(r'_{0:t})| & \leq \E \ltwo{(r_{0:t}, \Sigma_{t+1}^{1/2}(r_{0:t})Z + h\pi_{t+1}(r_{0:t})) - (r'_{0:t}, \Sigma_{t+1}^{1/2}(r'_{0:t})Z + h\pi_{t+1}(r'_{0:t}))}\\
  &\leq \E \left(\ltwo{r_{0:t} - r'_{0:t}}^{2} 
    + \ltwo{\Sigma_{t+1}^{1/2}(r_{0:t})Z - \Sigma_{t+1}^{1/2}(r'_{0:t})Z}^{2}+ \ltwo{h\pi_{t+1}(r_{0:t}) - h\pi_{t+1}(r'_{0:t})}^2\right)^{\half} \\
  &\leq \ltwo{r_{0:t} - r'_{0:t}} 
  + \E\ltwo{\Sigma_{t+1}^{1/2}(r_{0:t})Z - \Sigma_{t+1}^{1/2}(r'_{0:t})Z}
  + \ltwo{h\pi_{t+1}(r_{0:t}) - h\pi_{t+1}(r'_{0:t})} \\
  &\leq \left(1 + \bar{L} ( s_*^2/b_* \E \ltwo{Z} + \max_{a} |h_{a}|) \right)\ltwo{r_{0:t} - r'_{0:t}}
\end{aligned}
\]
using $\sqrt{a+b} \le \sqrt{a} + \sqrt{b}$ and $a - b = (\sqrt{a} + \sqrt{b})(\sqrt{a}-\sqrt{b})$.
This implies
\[
  \begin{aligned}
  & |\mathbb{E}[f(\sqrt{n}\bar{R}_{0}^{n},...,\sqrt{n}\bar{R}_{t+1}^{n})]
     - \mathbb{E}[f(G_{0},...,G_{t+1})]| \\
  & \leq |\mathbb{E}[g_{n}(\sqrt{n}\bar{R}_{0}^{n},...,\sqrt{n}\bar{R}_{t}^{n})]
  -\mathbb{E}[g(\sqrt{n}\bar{R}_{0}^{n},...,\sqrt{n}\bar{R}_{t}^{n})]|
  +|\E[g(\sqrt{n}\bar{R}_{0}^{n},...,\sqrt{n}\bar{R}_{t}^{n})]-\mathbb{E}[g(G_{0},...,G_{t})]| \\
  & \leq C n^{-1/6}
  +  M \cdot C \sum_{s=0}^{t} M^{s} n^{-1/6} 
  = C \sum_{s=0}^{t+1} M^{s} n^{-1/6}
  \end{aligned}
\]
Noting that $\sum_{s=0}^{T-1} M^{s} = \frac{M^{T} - 1}{M - 1}$,
we have the desired inequality.

\subsection{Proof of Corollary~\ref{cor:root-n-rate}}
\label{section:proof-root-n-rate}

The proof proceeds as in the proof of Corollary~\ref{cor:proof-limit-rate}. We show that for any horizon $T$
\[
  d_{\text{BL}} \left( (\Pstate_{0},\ldots,\Pstate_{T-1} ),(G_{0},\ldots,G_{T-1} ) \right) \leq \tilde{C} \frac{M^{T} - 1}{M - 1} n^{-1/2},
\]
where $M = \left(1 + \bar{L} ( s_*^2/b_* K^{1/2} + \max_{a} |h_{a}|) \right)$, 
$\bar{L}$ is the bound on the Lipschitz constant for $\pi^{1/2}$ and $\pi$, and $s_{*}^{2} = \max_{a} s_{a}^{2}$
and $b_{*} = \min_{0\leq t\leq T-1} b_{t}$.

For the base case, note that by Assumption~\ref{assumption:overlap},
$ \max_{a} \pi_{0,a}^{-3}(\mu_{0}, \sigma_{0}) \leq C_{o}$. Applying standard
results for rates of weak convergence of the central limit theorem (see
Lemma~\ref{lemma:clt-rate}), we have that
$d_{BL}(\Pstate_{0}, N(\pi_{0} h, \Sigma_{0}))$ is bounded by $C_{0}n^{-1/2}$,
for some constant $C_{0}$ that depends on $\numarm$ and polynomially on
$C_{o}$, $b_{0}$, $h$, and $s^{2}$.  For the induction step, assume that up to
$t$, the following inequality holds
\[
  d_{\text{BL}} \left( (\Pstate_{0},\ldots,\Pstate_{t} ),(G_{0},\ldots,G_{t} ) \right) \leq \tilde{C} \sum_{s=0}^{t} M^{s} n^{-1/2}.
\]
for $M = \left(1 + \bar{L} ( s_*^2/b_* K^{1/2} + \max_{a} |h_{a}|) \right)$ and a constant $\tilde{C}$ that depends on $\numarm$, $s_{a}^{2}$, $b_{t+1}$, and $h_{a}$.
We proceed to show that this inequality then holds for $t+1$.

Under Assumption~\ref{assumption:overlap}, we can obtain a $n^{-1/2}$ rate for
the central limit theorem specialized for bounded Lipschitz distance.  First,
we establish some facts about the covariance matrix $\Sigma_{n,t+1}(r_{0:t})$
defined in Eq.~\eqref{eqn:cov-n} for the sample average $Q_{n, t+1}(r_{0:t})$.
We omit the dependence on $r_{0:t}$ to ease notation.
\begin{lemma} \label{lemma:min-max-eig} Let $\lambda_{\max}(\cdot)$ and
  $\lambda_{\min}(\cdot)$ denote the maximum and minimum eigenvalues
  respectively. For any realization of $\pstate_{0:t}$, we have 
  \[
    \begin{aligned}
    \lambda_{\max} (\Sigma_{n,t+1} ) &\leq \lambda_{\max}
    \left( \Sigma_{t+1} 
    +\frac{1}{b_{t+1}n}  \diag \left( \pi_{t+1,a}h_{a}^{2} \right) \right)
    \leq \max_{a}\left( \frac{s_{a}^{2} + h_{a}^{2}}{b_{t+1}} \right) \\
    \lambda_{\min} ( \Sigma_{n,t+1} ) 
    & \geq \lambda_{\min}(  \Sigma_{t+1} )
    +\frac{1}{b_{t+1}n}  \lambda_{\min}\left( \diag(\pi_{t+1,a}h_{a}^{2})
    -  (\pi_{t+1}h)(\pi_{t+1}h)^{\top}  \right)\\
    &\geq  \left( \min_{a}\pi_{t+1,a}(\pstate_{0:t}) \right) \left( \min_{a}s_{t+1,a}^{2}  \right).
    \end{aligned}
  \]
\end{lemma}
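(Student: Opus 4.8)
The plan is to treat the two eigenvalue bounds separately, in each case peeling off the rank-one correction $\frac{1}{b_{t+1}n}(\pi_{t+1}h)(\pi_{t+1}h)^{\top}$ appearing in the covariance~\eqref{eqn:cov-n} and reducing to a diagonal matrix whose eigenvalues can be read off directly. Throughout I abbreviate $p_a \defeq \pi_{t+1,a}(\pstate_{0:t})$ and recall $\Sigma_{t+1} = \diag(s_{t+1,a}^{2} p_a)$ with $s_{t+1,a}^{2} = s_a^{2}/b_{t+1}$.

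For the maximum eigenvalue, I would write $\Sigma_{n,t+1} = A - \frac{1}{b_{t+1}n}(\pi_{t+1}h)(\pi_{t+1}h)^{\top}$, where $A \defeq \Sigma_{t+1} + \frac{1}{b_{t+1}n}\diag(p_a h_a^{2})$ is diagonal. Since the subtracted term is positive semidefinite, monotonicity of eigenvalues (Weyl) gives $\lambda_{\max}(\Sigma_{n,t+1}) \le \lambda_{\max}(A)$, which is exactly the first inequality of the statement. Because $A$ is diagonal, $\lambda_{\max}(A) = \max_a p_a\bigl(s_{t+1,a}^{2} + h_a^{2}/(b_{t+1}n)\bigr)$, and bounding $p_a \le 1$ and $1/n \le 1$ yields the claimed $\max_a (s_a^{2} + h_a^{2})/b_{t+1}$.

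For the minimum eigenvalue, I would apply Weyl's inequality in additive form $\lambda_{\min}(\Sigma_{n,t+1}) \ge \lambda_{\min}(\Sigma_{t+1}) + \lambda_{\min}(B)$, where $B \defeq \frac{1}{b_{t+1}n}\bigl(\diag(p_a h_a^{2}) - (\pi_{t+1}h)(\pi_{t+1}h)^{\top}\bigr)$; this reproduces the first inequality of the min bound. The crux is showing $B \succeq 0$, i.e.\ that the diagonal-minus-rank-one matrix $\diag(p_a h_a^{2}) - (\pi_{t+1}h)(\pi_{t+1}h)^{\top}$ is positive semidefinite. I would argue this either conceptually---this matrix is precisely the covariance of the ``signal'' vector $\xi^{t+1}h$ under the multinomial sampling law underlying~\eqref{eqn:cov-n}, hence automatically PSD---or by a direct rank-one (Schur-complement) check: since $(\pi_{t+1}h)_a = p_a h_a$, on the support $\{a : p_a > 0\}$ one has $(\pi_{t+1}h)^{\top}\diag(p_a h_a^{2})^{-1}(\pi_{t+1}h) = \sum_a p_a = 1$ because $\pi_{t+1}\in\Delta_K$, which is exactly the PSD condition (indices with $p_a = 0$ contribute only zero rows and columns). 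With $\lambda_{\min}(B) \ge 0$ established, I would conclude by bounding $\lambda_{\min}(\Sigma_{t+1}) = \min_a s_{t+1,a}^{2} p_a \ge (\min_a p_a)(\min_a s_{t+1,a}^{2})$.

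The only nontrivial step is the positive semidefiniteness of the rank-one correction, and the key identity is that the weighted quadratic form collapses to $\sum_a \pi_{t+1,a} = 1$ precisely because $\pi_{t+1}$ lives on the simplex, so the correction never pushes the minimum eigenvalue below that of $\Sigma_{t+1}$. The remaining manipulations are routine, though I would take minor care with the degenerate case in which some sampling probabilities vanish, which is handled by restricting the argument to the support.
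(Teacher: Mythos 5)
Your proposal is correct and follows essentially the same route as the paper: both split $\Sigma_{n,t+1}$ into the diagonal part and the rank-one correction, apply Weyl's eigenvalue inequalities, and reduce everything to positive semidefiniteness of $\diag(\pi_{t+1,a}h_a^2) - (\pi_{t+1}h)(\pi_{t+1}h)^{\top}$, which hinges on $\sum_a \pi_{t+1,a} = 1$. The only difference is cosmetic: the paper verifies this PSD step by a direct quadratic-form bound via Cauchy--Schwarz, whereas you invoke the Schur-complement criterion (or the multinomial covariance interpretation), which is an equivalent argument.
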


\begin{proof-of-lemma}
  The first inequality follows since
  $\lambda_{\max}(A + B) \le \lambda_{\max}(A) + \lambda_{\max}(B)$ for
  symmetric matrices $A,B \in \R^{m\times m}$. The second inequality uses
  $\lambda_{\min}(A + B) \geq \lambda_{\min}(A) + \lambda_{\min}(B)$.
  Moreover, for any $b\in \R^{m}$
  \[
  \begin{split}
  & b^{\top}  \left( \text{diag}(\pi_{t+1,a}(\pstate_{0:t})h_{a}^{2}) - (\pi_{t+1}(\pstate_{0:t})h)(\pi_{t+1}(\pstate_{0:t})h)^{\top} \right) b \\
  &= \sum_{a} \pi_{t+1,a}(\pstate_{0:t})h_{a}^{2}b_{a}^{2} - \left( \sum_{a} \pi_{t+1,a}(\pstate_{0:t})h_{a}b_{a} \right)^{2} \geq 0
  \end{split}
  \]
  as by Cauchy-Schwartz:
  \[
   \left( \sum_{a} \pi_{t+1,a}(\pstate_{0:t})h_{a}b_{a} \right)^{2} \leq \left(\sum_{a} \pi_{t+1,a}(\pstate_{0:t}) \right)
   \left(\sum_{a} \pi_{t+1,a}(\pstate_{0:t})h_{a}^{2}b_{a}^{2} \right) \leq \left(\sum_{a} \pi_{t+1,a}(\pstate_{0:t})h_{a}^{2}b_{a}^{2} \right)
  \]
so the matrix is positive semi-definite
\end{proof-of-lemma}

Next, we show convergence of $Q_{n, t+1}(r_{0:t})$ to a standard normal
random vector $Z \sim N(0,I_{m})$ via a classical result, which is a corollary
of~\citet[Theorem 1]{Bhattacharya70}.  The following result explicitly
quantifies how the rate of convergence depends on the sampling probabilities,
allowing us to guarantee uniform convergence.
\begin{lemma}[{\citet[Theorem 1]{Bhattacharya70}}]. 
  \label{lemma:clt-rate}
  Let $\{X_i\}_{i =1}^n$ be a sequence of $n$ independent random vectors
  taking values in $\R^{\numarm}$ with covariance matrix $\Sigma_n$. Assume
  that for some $\delta > 0$
  \begin{equation*}
    M \defeq \sup_{n} \frac{1}{n} \sum_{i=1}^{n}
    \E \ltwo{\Sigma_{n}^{-1/2} X_{i}}^{3+\delta} < \infty.
  \end{equation*}
  Then, the normalized partial sum
  $\bar{Z}_{n} = \frac{1}{\sqrt{n}} \sum_{i=1}^{n} \Sigma_{n}^{-1/2} X_{i}$
  satisfies
  \begin{equation*}
    d_{\text{BL}}(\bar{Z}_n, N(0, I))
    \leq A_{K,\delta} M^{\frac{3+3\delta}{3+\delta}}n^{-1/2} + B_{K,\delta} M^{\frac{3}{3+\delta}}n^{-1/2}
  \end{equation*}
  where $A_{K,\delta}, B_{K,\delta}$ are constants that only depends on $K$ and
  $\delta$. 
\end{lemma}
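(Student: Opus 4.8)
The statement is the bounded-Lipschitz specialization of Bhattacharya's multivariate Berry--Esseen bound, so the plan is to reduce it to \citet[Theorem 1]{Bhattacharya70} rather than reprove a multivariate central limit theorem from scratch. The three ingredients are (i) a whitening step that standardizes the summands, (ii) invocation of Bhattacharya's rate for the appropriate class of test functions, and (iii) a moment-interpolation step that rewrites Bhattacharya's Lyapunov quantities in terms of the single quantity $M$ assumed here.

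First I would whiten. Setting $Y_i \defeq \Sigma_n^{-1/2} X_i$ (which is well defined since $\Sigma_n$ is invertible), the vectors $Y_i$ are independent, mean zero, and $\E[Y_i Y_i^\top] = I_{\numarm}$, so $\bar{Z}_n = n^{-1/2}\sum_{i=1}^n Y_i$ has covariance exactly $I_{\numarm}$ and the hypothesis reads $M = \sup_n \frac{1}{n}\sum_i \E\ltwo{Y_i}^{3+\delta} < \infty$. This reduces the claim to bounding $d_{\text{BL}}(\bar{Z}_n, N(0, I_{\numarm}))$ for a standardized independent array whose averaged $(3+\delta)$-th absolute moment is controlled by $M$.

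Next I would apply Bhattacharya's theorem. Theorem 1 of \citet{Bhattacharya70} bounds $|\E f(\bar{Z}_n) - \E f(Z)|$, uniformly over a class of test functions that (possibly after a rate-preserving reduction) contains the unit bounded-Lipschitz ball, by a dimension-dependent multiple of the relevant standardized Lyapunov ratio; crucially this is the full $n^{-1/2}$ rate, in contrast to the $n^{-1/6}$ rate obtained via Gaussian smoothing in Proposition~\ref{prop:clt-sample-mean}. Bhattacharya's bound comes in two pieces, one linear and one of order $1+\delta$ in the standardized moment ratio. To convert these to the stated form I would use the power-mean (Jensen/H\"older) inequality $\frac{1}{n}\sum_i \E\ltwo{Y_i}^{3} \le \big(\frac{1}{n}\sum_i \E\ltwo{Y_i}^{3+\delta}\big)^{3/(3+\delta)} = M^{3/(3+\delta)}$, which produces the exponent $3/(3+\delta)$ in the $B_{\numarm,\delta}$ term; carrying the same interpolation through the higher-order piece yields the exponent $\frac{3(1+\delta)}{3+\delta} = \frac{3+3\delta}{3+\delta}$ in the $A_{\numarm,\delta}$ term, both terms inheriting the $n^{-1/2}$ prefactor of the Lyapunov ratios. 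Tracking the dependence on $\numarm$ and $\delta$ through Bhattacharya's constants gives $A_{\numarm,\delta}, B_{\numarm,\delta}$.

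The main obstacle --- and the reason for citing Bhattacharya rather than re-running the paper's own Stein's-method argument --- is obtaining the undegraded $n^{-1/2}$ rate for non-smooth bounded Lipschitz test functions: a naive convolution-smoothing argument trades smoothness for bias and loses the rate, so one must appeal to Bhattacharya's characteristic-function/Edgeworth machinery, which handles functions of bounded oscillation directly. The remaining work is bookkeeping: verifying that the unit bounded-Lipschitz ball is covered by (or reducible to) Bhattacharya's admissible test-function class without rate loss, checking that $\Sigma_n$ is invertible so whitening is legitimate (guaranteed in our application by the overlap condition, Assumption~\ref{assumption:overlap}, via the eigenvalue bounds of Lemma~\ref{lemma:min-max-eig}), and confirming that the interpolation constants depend only on $\numarm$ and $\delta$.
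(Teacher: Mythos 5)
Your proposal matches the paper's treatment: the paper gives no independent proof of this lemma, stating it directly as a corollary of \citet[Theorem 1]{Bhattacharya70}, which is exactly the reduction you carry out. The extra bookkeeping you supply---whitening via $\Sigma_n^{-1/2}$, power-mean interpolation $\frac{1}{n}\sum_i \E\|Y_i\|_2^3 \le M^{3/(3+\delta)}$ producing the exponents $3/(3+\delta)$ and $(3+3\delta)/(3+\delta)$, and invertibility of $\Sigma_n$ via Lemma~\ref{lemma:min-max-eig}---correctly fills in the steps the paper leaves implicit in the citation.
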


\noindent  In order to guarantee uniform convergence over $r_{0:t}$ using
Lemma~\ref{lemma:clt-rate}, we use
Assumptions~\ref{assumption:reward}\ref{item:moment}
to control the moment term $M$. Since we assume the existence of a fourth moment,
we let $\delta = 1$.
Using $C_{\numarm}$ to denote a constant that only depends on
$\numarm$ and polynomially on $s^{2}$, $b_{t+1}$, $h$ (that may differ line by line), use lower bound on the
minimum eigenvalue of $\Sigma_{n,t+1}$ given in Lemma~\ref{lemma:min-max-eig}
to get
\begin{equation*}
  \begin{split}
  M(\pstate_{0:t})
  & \defeq \sup_{n} \frac{1}{b_{t+1}n}\sum_{j=1}^{b_{t+1}n} \mathbb{E}
    \ltwo{\left(  \Sigma_{n,t+1}(\pstate_{0:t})  \right)^{-1/2} 
     \left(
      \xi_{j}^{t+1}R_{j}^{t+1} -\frac{\pi_{t+1,a}(\pstate_{0:t})h_{a}}{\sqrt{n}}
    \right)}^{4} \\
  & \leq \left( \min_{a}\pi_{t+1,a}(\pstate_{0:t}) \min_{a} s^{2}_{t+1,a} \right)^{-2} 
  \sup_{n} \frac{1}{b_{t+1}n}\sum_{j=1}^{b_{t+1}n} \mathbb{E}
  \ltwo{ \left( \xi_{j}^{t+1}R_{j}^{t+1} -\frac{\pi_{t+1}(\pstate_{0:t})h}{\sqrt{n}}\right)}^{4} \\
      &\leq C_{K} \left( \max_{a}\pi_{t+1,a}^{-2}(\pstate_{0:t}) \right)
      \max_{a} \E \left|\xi_{a,j}^{t+1}R_{a,j} - \frac{\pi_{t+1,a}(\pstate_{0:t})h_{a}}{\sqrt{n}}\right|^{4} < \infty\\
  \end{split}
\end{equation*}
Using Assumption~\ref{assumption:reward}\ref{item:moment}, conclude
\[
  M(\pstate_{0:t})\leq C_{K} \left( \max_{a}\pi_{t+1,a}^{-2}(\pstate_{0:t}) \right)
  \left( C + \left(\frac{\max_{a} |h_{a}|}{\sqrt{n}} \right)^{4} \right) < \infty 
\]
where $C$ is the moment bound in $\epsilon_{a}$ given in Assumption~\ref{assumption:reward}\ref{item:moment}.

Apply Lemma~\ref{lemma:clt-rate} to get
\[
  \begin{split}
  &\E[\E_{\pstate_{0:t}} \left[ d_{\text{BL}} ( Q_{n, t+1}, \Sigma^{1/2}_{n,t+1}(\pstate_{0:t})Z )\right]] \\
  & \leq \max_{a} \left(\frac{s_{a}^{2} + h_{a}^{2}}{b_{t+1}}\right)^{1/2} 
  \E[\E_{\pstate_{0:t}} \left[ d_{\text{BL}} ( \Sigma_{n,t+1}^{-1/2}Q_{n, t+1}, Z )\right]] \\
  & \leq  \max_{a}\left(\frac{s_{a}^{2} + h_{a}^{2}}{b_{t+1}}\right)^{1/2} 
  \E[\E_{\pstate_{0:t}} 
  \left[ A_{K,\delta} M(\pstate_{0:t})^{\frac{3}{2}} (b_{t+1}n)^{-1/2}
  + B_{K,\delta} M(\pstate_{0:t})^{\frac{3}{4}} (b_{t+1}n)^{-1/2} \right]].
  \end{split}
\]
Taking expectation over $\pstate_{0:t}$ and using the preceding bound on $M(\pstate_{0:t})$, we have that
\[
  \begin{split}
  & \E[\E_{\pstate_{0:t}} \left[ d_{\text{BL}} (Q_{n, t+1}, \Sigma^{1/2}_{n,t+1}(\pstate_{0:t})Z )\right]] \\
  & \leq \max_{a} \left(\frac{s_{a}^{2} + h_{a}^{2}}{b_{t+1}}\right)^{1/2} \cdot 
  \E[\E_{\pstate_{0:t}} 
  \left[ A_{K,\delta} M(\pstate_{0:t})^{\frac{3}{2}} (b_{t+1}n)^{-1/2}
  + B_{K,\delta} M(\pstate_{0:t})^{\frac{3}{4}} (b_{t+1}n)^{-1/2} \right]] \\
  & \leq n^{-1/2} C_{K} \E \left[ \max_{a}\pi_{t+1,a}^{-3}(\Pstate_{0:t}) \right]
  \left( C^{\frac{3}{2}} + \frac{\max_{a} |h_{a}|^{6} }{n^{3}} \right) \\
  &\qquad +n^{-1/2} C_{K} \E \left[\max_{a}\pi_{t+1,a}^{-3/2}(\Pstate_{0:t}) \right]
  \left( C^{\frac{3}{4}} + \frac{\max_{a} |h_{a}|^{3} }{n^{3/2}} \right).
  \end{split}
\]
Due to Assumption~\ref{assumption:overlap}, the moment $\E[\max_{a}\pi_{t+1,a}^{-3}(\Pstate_{0:t})]$ is bounded by $C_{o}$,
which gives us a $n^{-1/2}$ upper bound on $\E[\E_{\pstate_{0:t}} \left[ d_{\text{BL}} (Q_{n}^{t+1}, \Sigma^{1/2}_{n,t+1}(\pstate_{0:t})Z )\right]]$.
Finally, combining with the result in Lemma~\ref{lemma:kl-cov}, there exists a constant $\tilde{C}_{t+1}$ depending on
$\numarm$ and polynomially on $C_{o}$, $s^{2}$, $b_{t+1}$, and $h$
\[
  \E[\E_{\pstate_{0:t}} \left[ d_{\text{BL}} (\Pstate_{n, t+1}, \Sigma^{1/2}_{t+1}(\pstate_{0:t})Z + h\pi_{t+1}(\pstate_{0:t}))\right]] \leq \tilde{C}_{t+1} n^{-1/2}
\]
uniformly over all $\pstate_{0:t}$.

Recall the decomposition~\eqref{eqn:start} for any bounded Lipschitz function
$f$ and corresponding $g, g_n$ defined as in Eq.~\eqref{eqn:gs}. The above result implies that
\[
  |\E[g_{n}(\sqrt{n}\bar{R}_{0}^{n},...,\sqrt{n}\bar{R}_{t}^{n})] - \E[g(\sqrt{n}\bar{R}_{0}^{n},...,\sqrt{n}\bar{R}_{t}^{n})] | \leq \tilde{C} n^{-1/2}
\]
for some constant $\tilde{C} =\max_{t} \tilde{C}_{t}$.

It remains to show convergence of $\E[g(\sqrt{n}\bar{R}_{0}^{n},...,\sqrt{n}\bar{R}_{t}^{n})]$ to $\E[g(G_{1}, \ldots, G_{t})]$.
Recall that
\[
  g(r_{0:t}) = \E\left[f\left(r_{0:t}, \Sigma_{t+1}^{1/2}(r_{0:t})Z +
      h\pi_{t+1}(r_{0:t})\right)\right]
\]
is a bounded Lipschitz function. From the proof of Corollary~\ref{section:proof-limit-rate}
we have that the Lipschitz constant for $g$ is bounded by $M = \left(1 + \bar{L} ( s_*^2/b_* \sqrt{K} + \max_{a} |h_{a}|) \right)$
where $\bar{L}$ is the upper bound of the Lipschitz constants for $\pi^{1/2}$ and $\pi$, $s_{*} := \max_{a} s_{a}$, and $b_{*} := \min_{t} b_{t}$.
We can then use the induction hypothesis, which gives us
\[
  |\E[g(\sqrt{n}\bar{R}_{0}^{n},...,\sqrt{n}\bar{R}_{t}^{n})] - \E[g(G_{0},...,G_{t})]| \leq M \cdot \tilde{C} \sum_{s=0}^{t} M^{s} n^{-1/2}.
\]
This implies
\[
  \begin{aligned}
  & |\mathbb{E}[f(\sqrt{n}\bar{R}_{0}^{n},...,\sqrt{n}\bar{R}_{t+1}^{n})]
     - \mathbb{E}[f(G_{0},...,G_{t+1})]| \\
  & \leq |\mathbb{E}[g_{n}(\sqrt{n}\bar{R}_{0}^{n},...,\sqrt{n}\bar{R}_{t}^{n})]
  -\mathbb{E}[g(\sqrt{n}\bar{R}_{0}^{n},...,\sqrt{n}\bar{R}_{t}^{n})]|
  +|\E[g(\sqrt{n}\bar{R}_{0}^{n},...,\sqrt{n}\bar{R}_{t}^{n})]-\mathbb{E}[g(G_{0},...,G_{t})]| \\
  & \leq \tilde{C} n^{-1/2}
  +  M \cdot \tilde{C} \sum_{s=0}^{t} M^{s} n^{-1/2} 
  = \tilde{C} \sum_{s=0}^{t+1} M^{s} n^{-1/2}
  \end{aligned}
\]
Noting that $\sum_{s=0}^{T-1} M^{s} = \frac{M^{T} - 1}{M - 1}$,
we have the desired inequality.

\subsection{Proof of Corollary~\ref{cor:bsr-limit}}
\label{section:proof-bsr-limit}

From the proof of Theorem~\ref{theorem:limit}, the convergence of the Bayes
simple regret immediately follows from the fact that the weak
convergence~\eqref{eqn:weak-convergence} is uniform in $h$ under the
hypothesized moment condition on $h$.  The discontinuity points of $\pi_T$ are
of measure zero with respect to $(G_0,\dots,G_{T-1})$, the
convergence~\eqref{eqn:weak-convergence} and the Portmanteau theorem implies
that for any fixed $h$, as $n \to \infty$
\begin{equation*}
  \E[\pi_{T}(\sqrt{n}\bar{R}_{0}^{n},\ldots,\sqrt{n}\bar{R}_{T-1}^{n})|h] \to
  \E[\pi_{T}(G_{0},\ldots,G_{T-1})|h]
\end{equation*}
For any prior distribution $\nu$ over $h$ such that $\E_{\nu}||h|| < \infty$,
dominated convergence gives 
\begin{align*}
  \bsr_{T}(\pi, \nu, \sqrt{n}\bar{R}^{n})
  &= \E_{h\sim\nu}[h^{\top}\pi_{T}(\sqrt{n}\bar{R}_{0}^{n},\ldots,\sqrt{n}\bar{R}_{t+1}^{n})] 
  = \E_{h\sim\nu}[h^{\top}\E[\pi_{T}(\sqrt{n}\bar{R}_{0}^{n},\ldots,\sqrt{n}\bar{R}_{t+1}^{n})|h]] \\
  &\to \E_{h\sim\mu}[h^{\top}\E[\pi_{T}(G_{0},\ldots,G_{T-1})|h]] 
  = \bsr_{T}(\pi, \nu, G).
\end{align*}


\section{Derivations for Bayesian adaptive experiment}
\label{section:proof-bae}
\subsection{Proof of Lemma~\ref{lemma:mdp}}
\label{section:proof-mdp}

In the posterior updates~\eqref{eqn:posterior}, the
result~\eqref{eqn:dynamics} follows by noting
\begin{equation*}
  G_{t, a} \mid \mu_t, \sigma_t \eqd
  \E[G_{t, a} \mid \mu_t, \sigma_t] + \sqrt{\var(G_{t, a} \mid \mu_t, \sigma_t)} Z_{t, a},
\end{equation*}
and plugging in expressions for the conditional mean and variance of
$G_{t, a}$. Instead of this change of variables argument, we give a more
evocative derivation below that shows
\begin{align}
  \label{eqn:marginal-normality}
   \mu_{t+1} \mid \mu_t, \sigma_t
  \sim N\left(\mu_t, \diag(\sigma_t^2 - \sigma_{t+1}^2)\right).
\end{align}

Gaussianity of $ \mu_{t+1} \mid \mu_t, \sigma_t$ follows since $\mu_{t+1}$ is
a linear function of $G_{t, a}$ given $\mu_t, \sigma_t$
\begin{align*}
  \mu_{t+1, a}
  & = \sigma_{t+1, a}^{2}
  \left( \sigma_{t, a}^{-2} \mu_{t, a} + s_a^{-2}b_t G_{t, a} \right) \\
  & = \left(\sigma_{t, a}^{-2} + s_a^{-2} b_t \pi_{t, a}(\mu_t, \sigma_t)\right)^{-1}
  \left( \sigma_{t, a}^{-2} \mu_{t, a} + s_a^{-2}b_t G_{t, a} \right).
\end{align*}
We now derive expressions for the conditional mean and variance of
$\mu_{t+1}$. Noting that
\begin{equation*}
  \E[G_{t, a} \mid \mu_t, \sigma_t]
  = \E[ \E[G_{t, a} \mid h, \mu_t, \sigma_t] \mid \mu_t, \sigma_t]
  = \E[\pi_{t, a}(\mu_t, \sigma_t)h_a \mid \mu_t, \sigma_t] = 
  \pi_{t, a}(\mu_t, \sigma_t)\mu_{t, a},
\end{equation*}
we conclude
\begin{equation*}
  \E[\mu_{t+1, a} \mid \mu_t,\sigma_t] = \sigma_{t+1, a}^2
  \left(\sigma_{t, a}^{-2} \mu_{t, a} + s_a^{-2} b_t \pi_{t, a}(\mu_t, \sigma_t) \mu_{t, a}\right)
  = \mu_{t, a}.
\end{equation*}

Next, use the law of total variance to derive
\begin{align*}
  \var(\mu_{t+1, a} \mid \mu_t, \sigma_t)
  & = \left( \sigma_{t+1, a}^2 s_{a}^{-2}b_{t}\right)^2
  \var(G_{t, a} \mid \mu_t, \sigma_t) \\
  & = \left( \sigma_{t+1, a}^2 s_{a}^{-2}b_{t} \right)^2
  \Big(
  \E[\var(G_{t, a} \mid h, \mu_t, \sigma_t) \mid \mu_t, \sigma_t]
  + \var(\E[G_{t, a} \mid h, \mu_t, \sigma_t] \mid \mu_t, \sigma_t)
  \Big) \\
  & = \left( \sigma_{t+1, a}^2 s_{a}^{-2}b_{t} \right)^2
    \left( \frac{\pi_{t, a}(\mu_t, \sigma_t)s_a^2}{b_t} +\pi_{t, a}(\mu_t, \sigma_t)^{2} \sigma_{t, a}^2\right) 
   = \frac{\sigma_{t, a}^4 b_t \pi_{t, a}(\mu_t, \sigma_t) }
    {s_a^2 + \sigma_{t, a}^2 b_t \pi_{t, a}(\mu_t, \sigma_t)}.
\end{align*}
We arrive at the desired result~\eqref{eqn:marginal-normality} since
\begin{equation}
  \label{eqn:var-decrease}
  \sigma_{t, a}^2 - \sigma_{t+1,a}^2 = \sigma_{t, a}^2 - \left(\sigma_{t,
      a}^{-2} + s_a^{-2} b_t \pi_{t, a}(\mu_t, \sigma_t)\right)^{-1}
    = \frac{\sigma_{t, a}^4 b_t \pi_{t, a}(\mu_t, \sigma_t) } {s_a^2 + \sigma_{t,
      a}^2 b_t \pi_{t, a}(\mu_t, \sigma_t)}.
\end{equation}

\subsection{Proof of Corollary~\ref{cor:bayes-limit}}
\label{section:proof-bayes-limit}

The posterior states $(\mu_{n,t}, \sigma_{n,t})$ can be expressed as a
function of the sample mean estimators and the propensity scores
\[
\begin{aligned}
  \sigma_{n,t+1}^{-2} &= \sigma_{0}^{-2} + \sum_{v=0}^{t} b_{v}\pi_{v}(\mu_{n,v}, \sigma_{n,v})s^{-2} \\
  \mu_{n,t+1} &= \sigma_{n,t+1}^{2} \left( \sigma_{0}^{-2}\mu_{0} + \sum_{v=0}^{t} b_{v}s^{-2} \Pstate_{v} \right).
\end{aligned}
\]
where the operations are vector-wise.

Since the allocations $\pi_v$ are assumed to be continuous in the posterior
state $(\mu_{n,v}, \sigma_{n,v})$, the states
$(\mu_{n, t+1}, \sigma_{n, t+1})$ are continuous functions of the sample mean
estimators $\Pstate_v$. By the continuous mapping theorem, we conclude
\[
  (\mu_{n,0}, \sigma_{n,0}, \ldots, \mu_{n,T-1},\sigma_{n,T-1}) \cd (\mu_{0}, \sigma_{0}, \ldots, \mu_{T-1},\sigma_{T-1}).
\]


\subsection{Proof of Lemma~\ref{lemma:rho-reduction}}
\label{section:proof-rho-reduction}

At each fixed epoch $t \in [T]$, we consider imagined \emph{counterfactual
  state transitions} if one follows future allocations
$\bar{\rho}_{t, a},\ldots,\bar{\rho}_{T-1,a}$ that only depend on currently
available information $(\mu_t, \sigma_t)$. Using the same notation for
counterfactual states to simplify the exposition, Lemma~\ref{lemma:mdp} shows
that subsequent counterfactual states will be governed by
\begin{subequations}
  \label{eqn:counterfactual-dynamics}
  \begin{align}
    \sigma_{v+1, a}^{-2}
    & \defeq \sigma_{v, a}^{-2} + s_a^{-2} b_v \bar{\rho}_{v, a}
      \label{eqn:counterfactual-dynamics-var} \\
    \mu_{v+1, a}
    & \defeq   \mu_{v, a} + \left( \sigma_{v, a}^2 - \sigma_{v+1, a}^2\right)^{\half} Z_{v, a}
      = \mu_{v, a} + \sigma_{v, a}\sqrt{\frac{b_v \bar{\rho}_{v, a}(\mu_t, \sigma_t)
      \sigma_{v, a}^{2}}{s_a^{2}+b_v\bar{\rho}_{v, a}(\mu_t, \sigma_t)\sigma_{v, a}^{2}}} Z_{v, a}.
      \label{eqn:counterfactual-dynamics-mean}
  \end{align}
\end{subequations}

First, we rewrite the objective~\eqref{eqn:rho} as a function of a single
Gaussian variable $\bar{Z}$. Recalling the identity~\eqref{eqn:var-decrease}
\begin{align*}
 \left( \sigma_{v, a}^2 - \sigma_{v+1, a}^2\right)^{\half} Z_{v, a}
  = \sigma_{v, a}\sqrt{\frac{b_v \bar{\rho}_{v, a}(\mu_t, \sigma_t)
  \sigma_{v, a}^{2}}{s_a^{2}+b_v\bar{\rho}_{v, a}(\mu_t, \sigma_t)\sigma_{v, a}^{2}}} Z_{v, a},
\end{align*}
use the recursive relation~\eqref{eqn:counterfactual-dynamics-var} to write
\begin{align*}
  \var\left( \sum_{v=t}^{T-1} \sigma_{v,a}\sqrt{\frac{b_v \bar{\rho}_{v, a}(\mu_t, \sigma_t)\sigma_{v, a}^{2}}
      {s_a^{2}+b_v\bar{\rho}_{v, a}(\mu_t, \sigma_t)\sigma_{v, a}^{2}}}Z_{v, a}\right)
  & = \sum_{v=t}^{T-1} \sigma_{v, a}^2 - \sigma_{v+1, a}^2
    = \sigma_{t, a}^2 - \sigma_{T, a}^2 \\
  & = \sigma_{t, a}^2 -  \left( \sigma_{t,a}^{-2}
    + \sum_{v=t}^{T-1} \frac{b_v \bar{\rho}_{v, a}(\mu_t, \sigma_t)}{s_a^2}  \right)^{-1} \\
  & =  \frac{\sigma_{t,a}^4 \sum_{v=t}^{T-1} b_{v}\bar{\rho}_{v, a}(\mu_t, \sigma_t)}
  {s_a^2 + \sigma_{t,a}^2 \sum_{v=t}^{T-1} b_{v}\bar{\rho}_{v, a}(\mu_t, \sigma_t)}.
\end{align*}
Thus, we conclude
\begin{align*}
  & \E \left[ \max_{a} \left\{ \mu_{t,a}
  + \sum_{v=t}^{T-1} \sigma_{v,a}\sqrt{\frac{\sigma_{v, a}^{2} b_v \bar{\rho}_{v, a}(\mu_t, \sigma_t)}
  {s_a^{2}+\sigma_{v, a}^{2} b_v \bar{\rho}_{v, a}(\mu_t, \sigma_t)}}Z_{v, a}
  \right\}~\Bigg|~ \mu_{t},\sigma_{t} \right] \\
  & = \E \left[ \max_{a} \left\{ \mu_{t,a}
  + \sqrt{\frac{\sigma_{t, a}^4 \sum_{v=t}^{T-1} \bar{\rho}_{v, a}(\mu_t, \sigma_t) b_{v}}
  {s_a^2 + \sigma_{t, a}^2 \sum_{v=t}^{T-1} \bar{\rho}_{v, a}(\mu_t, \sigma_t) b_{v}}} \bar{Z}_{a}
  \right\} ~\Bigg|~ \mu_{t},\sigma_{t} \right].
\end{align*}
Abusing notation, we replace $\bar{Z}$ with $Z_t$ in the final expectation.

Note that for any sequence of future allocations
$\bar{\rho}_{t},\ldots,\bar{\rho}_{T-1}$ that only depend on
$(\mu_t,\sigma_t)$, the change of variables
$\bar{\rho}_{a} = \frac{\sum_{v=t}^{T-1} \bar{\rho}_{v,a} b_{v}
}{\sum_{v=t}^{T-1} b_{v}}$ give
\[
  \begin{split}
  V^{\bar{\rho}_{t:T-1}}_{t}(\mu_{t},\sigma_{t})
  &= \E \left[ \max_{a} \left\{ \mu_{t,a}
  + \sqrt{\frac{\sigma_{t, a}^4 \sum_{v=t}^{T-1} \bar{\rho}_{v, a} b_{v}}
  {s_a^2 + \sigma_{t, a}^2 \sum_{v=t}^{T-1} \bar{\rho}_{v, a} b_{v}}} \bar{Z}_{a}
  \right\} ~\Bigg|~ \mu_{t},\sigma_{t} \right] \\
  &= \E \left[ \max_{a} \left\{ \mu_{t,a}
  + \sqrt{\frac{\sigma_{t, a}^4 \bar{\rho}_{a} \bar{b}_{t}}
  {s_a^2 + \sigma_{t, a}^2 \bar{\rho}_{a} \bar{b}_{t}}} \bar{Z}_{a}
  \right\} ~\Bigg|~ \mu_{t},\sigma_{t} \right] \\
  &= V^{\bar{\rho}}_{t}(\mu_{t},\sigma_{t})
  \end{split}
\]
where $\bar{b}_{t} = \sum_{v=t}^{T-1} b_{v}$ and
$V^{\bar{\rho}}_{t}(\mu_{t},\sigma_{t})$ is the value function of the constant
allocation $\bar{\rho}$.  Thus, for any sequence of future allocations that
only depend on $(\mu_t,\sigma_t)$, there exists a constant allocation
$\bar{\rho}(\mu_t, \sigma_t)$ that achieves the same performance.

\subsection{Proof of Proposition~\ref{prop:rho-vs-static}}
\label{section:proof-rho-vs-static}

By Lemma~\ref{lemma:rho-reduction}, for any future allocation
$\bar{\pi}_{t:T} = (\bar{\pi}_t(\mu_{t}, \sigma_{t}), \ldots,
\bar{\pi}_{T-1}(\mu_{t}, \sigma_{t}))$ that only depends on currently
available information $(\mu_{t},\sigma_{t}))$, there is a constant
allocation that matches the same performance.  Thus, it is sufficient to show
that the value function for $\rho$ dominates the value function of any constant
allocation
$\bar{\pi}_{v}(\mu_{t}, \sigma_{t}) \equiv \bar{\rho}(\mu_{t}, \sigma_{t})$ for $v \ge t$.  Proceeding by induction, observe for the base case that
when $t = T-1$, the definition~\eqref{eqn:rho} of the policy $\rho_t$ implies
\begin{equation*}
  V^{\rho}_{T-1}(\mu_{T-1},\sigma_{T-1})
  = \max_{\bar{\rho} \in \Delta_{\numarm}} V^{\bar{\rho}}_{T-1}(\mu_{T-1},\sigma_{T-1})
\end{equation*}
for all $(\mu_{T-1}, \sigma_{T-1})$.

Next, as an inductive hypothesis, suppose
\begin{equation*}
V^{\rho}_{t+1}(\mu_{t+1},\sigma_{t+1}) \geq \max_{\bar{\rho} \in
  \Delta_{\numarm}} V^{\bar{\rho}}_{t+1}(\mu_{t+1},\sigma_{t+1})
~~~\mbox{for all}~~(\mu_{t+1}, \sigma_{t+1}).
\end{equation*}
 Then, for any
$(\mu_{t}, \sigma_{t})$
\begin{align}
  V^{\rho}_{t}(\mu_{t}, \sigma_{t}) 
  = \E_{t} \left[V^{\rho}_{t+1}\left(\mu_{t+1}, \sigma_{t+1}\right)\right] 
  & \geq \E_{t} \left[ \max_{\bar{\rho} \in \Delta_{\numarm}} V^{\bar{\rho}}_{t+1}
  \left(\mu_{t+1}, \sigma_{t+1}\right)\right] \nonumber \\
  & \ge \max_{\bar{\rho} \in \Delta_{\numarm}} \E_{t} \left[  V^{\bar{\rho}}_{t+1}
    \left(\mu_{t+1}, \sigma_{t+1}\right)\right]
      \label{eqn:exp-max-interchange}
\end{align}
where we abuse notation to denote by $(\mu_{t+1}, \sigma_{t+1})$ the state
transition under the policy $\rho_t$~\eqref{eqn:dynamics}
\begin{subequations}
  \label{eqn:dynamics-rho-t}
  \begin{align}
    \sigma_{t+1, a}^{-2}
    & \defeq \sigma_{t, a}^{-2} + s_a^{-2} b_t \rho_{t, a}(\mu_t, \sigma_t)
    \label{eqn:dynamics-rho-t-var} \\
    \mu_{t+1, a}  
    & \defeq   \mu_{t, a} + \left( \sigma_{t, a}^2 - \sigma_{t+1, a}^2\right)^{\half} Z_{t, a}
      = \mu_{t, a} + \sigma_{t, a}\sqrt{\frac{b_t \rho_{t, a}(\mu_t, \sigma_t)
      \sigma_{t, a}^{2}}{s_a^{2}+b_t\rho_{t, a}(\mu_t, \sigma_t)\sigma_{t, a}^{2}}} Z_{t, a}.
              \label{eqn:dynamics-rho-t-mean}
  \end{align}
\end{subequations}

We now derive a more explicit representation for
$\E_{t} \left[ V^{\bar{\rho}}_{t+1} \left(\mu_{t+1},
    \sigma_{t+1}\right)\right]$, which is the value function for a policy that
follows $\rho_t$ at time $t$ and the constant policy $\bar{\rho}$ onwards.  By
the expression~\eqref{eqn:rho} in Lemma~\ref{lemma:rho-reduction}, we have
\begin{align*}
  \E_{t} \left[ V^{\bar{\rho}}_{t+1} \left(\mu_{t+1}, \sigma_{t+1}\right)\right]
  =  \E_{t} \left[ \max_{a} \left\{ \mu_{t+1,a}
          + \sqrt{\frac{\sigma_{t+1, a}^4 \bar{\rho}_{a} \bar{b}_{t+1}}
            {s_a^2 + \sigma_{t+1, a}^2 \bar{\rho}_{a} \bar{b}_{t+1}}} Z_{t+1, a}
        \right\} \right].
\end{align*}
Plugging in the state transition~\eqref{eqn:dynamics-rho-t-mean}, we have
\begin{align*}
  \E_{t} \left[ V^{\bar{\rho}}_{t+1} \left(\mu_{t+1}, \sigma_{t+1}\right)\right]
  =  \E_{t} \left[ \max_{a} \left\{ \mu_{t,a} 
  + \left( \sigma_{t, a}^2 - \sigma_{t+1, a}^2\right)^{\half} Z_{t, a}
          + \sqrt{\frac{\sigma_{t+1, a}^4 \bar{\rho}_{a} \bar{b}_{t+1}}
            {s_a^2 + \sigma_{t+1, a}^2 \bar{\rho}_{a} \bar{b}_{t+1}}} Z_{t+1, a}
        \right\} \right].
\end{align*}
Noting that
\begin{align*}
  \var_t \left(
  \left( \sigma_{t, a}^2 - \sigma_{t+1, a}^2\right)^{\half} Z_{t, a}
          + \sqrt{\frac{\sigma_{t+1, a}^4 \bar{\rho}_{a} \bar{b}_{t+1}}
            {s_a^2 + \sigma_{t+1, a}^2 \bar{\rho}_{a} \bar{b}_{t+1}}} Z_{t+1, a}
  \right) =
  \frac{\sigma_{t, a}^4 \left( \bar{\rho}_{a} \bar{b}_{t+1}
  + \rho_{t, a}(\mu_{t}, \sigma_{t}) b_{t} \right)}
  {s_a^2 + \sigma_{t, a}^2 \left(
  \bar{\rho}_{a} \bar{b}_{t+1} +\rho_{t, a}(\mu_{t}, \sigma_{t}) b_{t} \right)},
\end{align*}
we arrive at the identity
\begin{align*}
  \E_{t} \left[ V^{\bar{\rho}}_{t+1} \left(\mu_{t+1}, \sigma_{t+1}\right)\right]
  =  \E_{t} \left[ \max_{a} \left\{ \mu_{t,a} 
  + \sqrt{\frac{\sigma_{t, a}^4 \left( \bar{\rho}_{a} \bar{b}_{t+1}
  + \rho_{t}(\mu_{t}, \sigma_{t}) b_{t} \right)}
  {s_a^2 + \sigma_{t, a}^2 \left(
  \bar{\rho}_{a} \bar{b}_{t+1} +\rho_{t}(\mu_{t}, \sigma_{t}) b_{t} \right)}}
  Z_{t, a} \right\} \right].
\end{align*}

Recalling the bound~\eqref{eqn:exp-max-interchange}, use
$\bar{\rho} = \rho_{t}(\mu_{t}, \sigma_{t})$ to conclude
\begin{align*}
  V^{\rho}_{t}(\mu_{t}, \sigma_{t})
  & \ge \max_{\bar{\rho} \in \Delta_{\numarm}}
  \E_{t} \left[ \max_{a} \left\{ \mu_{t,a} 
  + \sqrt{\frac{\sigma_{t, a}^4 \left( \bar{\rho}_{a} \bar{b}_{t+1}
  + \rho_{t}(\mu_{t}, \sigma_{t}) b_{t} \right)}
  {s_a^2 + \sigma_{t, a}^2 \left(
  \bar{\rho}_{a} \bar{b}_{t+1} +\rho_{t}(\mu_{t}, \sigma_{t}) b_{t} \right)}}
  Z_{t, a} \right\} \right] \\
  & \ge \E_{t} \left[ \max_{a} \left\{ \mu_{t,a} 
  + \sqrt{\frac{\sigma_{t, a}^4 \rho_{t, a}(\mu_{t}, \sigma_{t}) \bar{b}_{t} }
  {s_a^2 + \sigma_{t, a}^2  \rho_{t, a}(\mu_{t}, \sigma_{t}) \bar{b}_{t}  }}
    Z_{t, a} \right\} \right]  \\
  & = \max_{\bar{\rho} \in \Delta^\numarm} V_t^{\bar{\rho}} (\mu_t, \sigma_t),
\end{align*}
where we used the definition of $\rho_t$ in expression~\eqref{eqn:rho} in
the final line.

\subsection{Distilled Residual Horizon Optimization}
\label{section:distilled}

Instead of computing
$\rho_t(\mu_t, \sigma_t)$ as the experiment progresses, it can sometimes be
convenient to pre-compute the mapping $\rho_t(\cdot, \cdot)$ so that it can be
readily applied for any observed current state.  By reformulating the
optimization problem~\eqref{eqn:rho} as a stochastic optimization problem over
\emph{functions of $(\mu_t, \sigma_t)$}, we propose a \emph{distilled} variant
of the $\algo$ policy that can learn the mapping $\rho_t(\cdot,\cdot)$ in a
fully offline manner.
\begin{proposition}
  \label{prop:loss-min}
  The solution to the $\algofull$ problem~\eqref{eqn:rho} coincides with the
  maximizer of the following reward maximization problem over measurable
  functions $\rho(\mu_t, \sigma_t) \in \Delta^{\numarm}$
  \begin{equation}
    \label{eqn:loss-min}
    \maximize_{\rho(\cdot)~\mbox{\scriptsize measurable}}
    ~\E_{(\mu_t, \sigma_t) \sim P_t} \left[ \max_{a} \left\{ \mu_{t,a}
        + \sqrt{\frac{\sigma_{t, a}^4 \rho(\mu_t, \sigma_t) \bar{b}_t}
        {s_a^2 + \sigma_{t, a}^2 \rho(\mu_t, \sigma_t) \bar{b}_t}} Z_{t, a}
      \right\} \right],
  \end{equation}
  where $P_t$ is any probability measure with support
  $\R^\numarm \times (0, \sigma_0^2]^\numarm$ and
  $\bar{b}_t \defeq \sum_{v=t}^{T-1} b_v$.
\end{proposition}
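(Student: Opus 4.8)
The plan is an interchange-of-maximization-and-integration argument. First I would observe that the bracketed integrand in~\eqref{eqn:loss-min}, evaluated at the allocation $\bar{\rho} = \rho(\mu_t,\sigma_t)$, is exactly the planning objective $V_t^{\bar{\rho}}(\mu_t,\sigma_t)$ defined in~\eqref{eqn:rho}. Hence the distilled problem can be rewritten as
\[
\maximize_{\rho(\cdot)~\text{measurable}}~ \E_{(\mu_t,\sigma_t)\sim P_t}\!\left[ V_t^{\rho(\mu_t,\sigma_t)}(\mu_t,\sigma_t)\right],
\]
whereas the $\algo$ policy~\eqref{eqn:rho} solves $\rho_t(\mu_t,\sigma_t)\in\argmax_{\bar{\rho}\in\Delta_\numarm} V_t^{\bar{\rho}}(\mu_t,\sigma_t)$ separately at each state. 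The key structural fact is that, for a \emph{fixed} state law $P_t$, the allocation chosen at one state has no bearing on the objective at any other state; the functional optimum therefore decouples into independent pointwise problems.

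Concretely, I would first establish the two-sided bound. For any measurable $\rho(\cdot)$ we have the pointwise inequality $V_t^{\rho(\mu_t,\sigma_t)}(\mu_t,\sigma_t)\le \max_{\bar{\rho}\in\Delta_\numarm} V_t^{\bar{\rho}}(\mu_t,\sigma_t)$, and integrating against $P_t$ upper bounds the distilled objective by $\E_{P_t}[\max_{\bar{\rho}} V_t^{\bar{\rho}}(\mu_t,\sigma_t)]$. Conversely, the $\algo$ selection attains equality pointwise by its definition, so it achieves this upper bound --- provided it is a \emph{measurable} function of the state. A minor but useful normalization here is to work with the excess value $V_t^{\bar{\rho}}(\mu_t,\sigma_t)-\max_a\mu_{t,a}$, which lies in $[0,C]$ for a constant $C$ depending only on $\{s_a^2\}$, $\sigma_0^2$, and $\E\|Z\|_\infty$; this makes the objective well-defined for every $P_t$ irrespective of the tails of its $\mu$-marginal, and leaves the argmax unchanged.

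The measurability of the selection is the main obstacle and the only nontrivial step. I would verify that $((\mu_t,\sigma_t),\bar{\rho})\mapsto V_t^{\bar{\rho}}(\mu_t,\sigma_t)$ is jointly continuous: the coefficient $\sqrt{\sigma_{t,a}^4\bar{\rho}_a\bar{b}_t/(s_a^2+\sigma_{t,a}^2\bar{\rho}_a\bar{b}_t)}$ is continuous in $(\sigma_t,\bar{\rho})$ on the relevant domain, the pointwise maximum over the $\numarm$ arms is continuous, and dominated convergence (with the integrable envelope supplied by the above bound) transfers continuity through the expectation over $Z\sim N(0,I)$. Since $\Delta_\numarm$ is compact, Berge's maximum theorem then gives that the value $\max_{\bar{\rho}}V_t^{\bar{\rho}}(\mu_t,\sigma_t)$ is continuous in the state and that the argmax correspondence is nonempty, compact-valued, and upper hemicontinuous; the measurable maximum theorem furnishes a measurable selection, which is precisely the $\algo$ map $\rho_t(\cdot)$. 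This shows $\rho_t$ is a maximizer of~\eqref{eqn:loss-min}.

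Finally, for the converse direction I would argue that any maximizer $\rho^\star$ of~\eqref{eqn:loss-min} must satisfy $V_t^{\rho^\star(\mu_t,\sigma_t)}(\mu_t,\sigma_t)=\max_{\bar{\rho}}V_t^{\bar{\rho}}(\mu_t,\sigma_t)$ for $P_t$-almost every $(\mu_t,\sigma_t)$: otherwise the strict pointwise inequality on a set of positive $P_t$-measure would force the integral strictly below the upper bound already attained by $\rho_t$, a contradiction. Because $P_t$ has full support on $\R^\numarm\times(0,\sigma_0^2]^\numarm$, this identifies $\rho^\star$ with a selection of the $\algo$ argmax on the entire state space (up to a $P_t$-null set, and everywhere when the argmax is a singleton, in which case joint continuity makes $\rho_t$ the unique continuous selection). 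This establishes that the distilled problem~\eqref{eqn:loss-min} and the $\algo$ problem~\eqref{eqn:rho} share the same solution, completing the argument.
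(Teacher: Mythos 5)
Your proof is correct, and it reaches the same conclusion as the paper by a genuinely different (more self-contained) route. The paper's proof is a one-step appeal to normal integrand theory: it defines $f(\rho;\mu_t,\sigma_t) \defeq -\E_{Z}\bigl[\max_a\{\mu_{t,a}+\cdots Z_{t,a}\}\bigr]$, notes that joint continuity makes $f$ a normal integrand, and invokes the interchange-of-infimum-and-integration theorem of Rockafellar and Wets (their Theorem 14.60), which simultaneously yields the decoupling into pointwise problems and the characterization that a measurable $\rho^\star(\cdot)$ is optimal iff it is a pointwise minimizer $P_t$-almost everywhere. You instead rebuild that machinery from elementary pieces: the pointwise sandwich $V_t^{\rho(\mu_t,\sigma_t)}\le\max_{\bar\rho}V_t^{\bar\rho}$, Berge's maximum theorem plus a measurable selection theorem to certify that the $\algo$ argmax admits a measurable (indeed continuous, under uniqueness) selection attaining the upper bound, and a positive-measure contradiction argument for the converse. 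These are the same underlying ideas—Rockafellar--Wets' theorem is itself proved by measurable selection—so neither argument is stronger, but yours makes the selection-measurability issue explicit, which the paper's citation hides. One genuine improvement in your version: the normalization $V_t^{\bar\rho}(\mu_t,\sigma_t)-\max_a\mu_{t,a}\in[0,C]$ ensures the integrated objective is well-defined for \emph{every} $P_t$ with the stated support, whereas the paper's application of the interchange theorem tacitly requires an integrability hypothesis ($\int f\,dP_t<\infty$ for some measurable $\rho$, and finiteness of the common value) that can fail when the $\mu$-marginal of $P_t$ has heavy tails; your shift, which alters neither the pointwise nor the functional argmax, closes that small gap.
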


Proposition~\ref{prop:loss-min} allows us to use standard ML best practices to
(approximately) solve the reward maximization problem~\eqref{eqn:rho}. We
parameterize the policy using black-box ML models
\begin{equation*}
  \left\{\rho_{\theta,t}(\mu_t, \sigma_t): \theta \in \Theta_t\right\},
\end{equation*}
and use stochastic gradient-based optimization algorithms to solve for the
optimal parameterization (Algorithm~\ref{alg:rho}). In particular, the model
training problem~\eqref{eqn:loss-min} can be approximated by neural networks
and optimized through standard auto-differentiation
frameworks~\cite{TensorFlow15, PyTorch19} for computing stochastic
(sub)gradients. 
By the envelope theorem, the stochastic (sub)gradient at the
sample $(\mu_t, \sigma_t) \sim P_t$ is given by
\begin{equation*}
  Z_{t, a\opt} ~\partial_{\theta} \left\{
    \sqrt{\frac{\sigma_{t, a\opt}^4  \rho_{\theta,t} (\mu_t, \sigma_t) \bar{b}_t}
    {s_{a\opt}^2 + \sigma_{t, a\opt}^2 \rho_{\theta,t}(\mu_t, \sigma_t) \bar{b}_t}}
  \right\},
\end{equation*}
where
$a\opt \in \argmax_{a} \left\{ \mu_{t,a} + \sqrt{\frac{\sigma_{t, a}^4
      \rho_{\theta,t}(\mu_t, \sigma_t) \bar{b}_t} {s_a^2 + \sigma_{t, a}^2
      \rho_{\theta,t}(\mu_t, \sigma_t) \bar{b}_t}} Z_{t, a} \right\}$.


\begin{proof}
It remains to show that minimizing over all measurable functions
$\rho(\mu_t, \sigma_t) \in \Delta^{\numarm}$ gives pointwise suprema for each
$(\mu_t, \sigma_t)$ almost surely
\begin{align*}
  & \sup_{\rho(\cdot)~\mbox{\scriptsize measurable}} ~\E_{(\mu_t, \sigma_t) \sim P_t} \left[ \max_{a} \left\{ \mu_{t,a}
        + \sqrt{\frac{\sigma_{t, a}^4 \rho(\mu_t, \sigma_t) \bar{b}_t}
        {s_a^2 + \sigma_{t, a}^2 \rho(\mu_t, \sigma_t) \bar{b}_t}} Z_{t, a}
    \right\} \right] \\
  & = \E_{(\mu_t, \sigma_t) \sim P_t} \left[ \sup_{\rho}
    \E\left[ \max_{a} \left\{ \mu_{t,a}
        + \sqrt{\frac{\sigma_{t, a}^4 \rho \bar{b}_t}
        {s_a^2 + \sigma_{t, a}^2 \rho \bar{b}_t}} Z_{t, a}
      \right\} ~\Bigg|~ \mu_t, \sigma_t \right] \right].
\end{align*}
We use normal integrand theory~\cite[Section 14.D]{RockafellarWe98} to
interchange the integral and supremum over measurable mappings. Recall that a
map
$f: \Delta^{\numarm} \times (\R^{\numarm} \times (0, \sigma_0]^\numarm) \to
\bar{\R}$ is a normal integrand if its epigraphical mapping---viewed as a
set-valued
mapping---$S_f: \R^{\numarm} \times (0, \sigma_0]^\numarm \to \R \times
\R,~(\mu_t, \sigma_t) \mapsto \epi f(\cdot; \mu_t, \sigma_t) = \{ (\rho,
\alpha) \in \Delta^{\numarm} \times \R: f(\rho; \mu_t, \sigma_t) \le \alpha\}$ is
closed-valued. That is, $S_f(\mu_t, \sigma_t)$ is closed for all
$(\mu_t, \sigma_t) \in \R^{\numarm} \times (0, \sigma_0]^\numarm$ and
measurable (i.e., for any open set
$O \subseteq \R^{\numarm} \times (0, \sigma_0]^\numarm$,
$S_f^{-1}(O) \defeq \cup_{o \in O}~ S_f^{-1}(o)$ is measurable).
\begin{lemma}[{\citet[Theorem 14.60]{RockafellarWe98}}]
  \label{lemma:inf-int-interchange}
  If
  $f: \Delta^{\numarm} \times (\R^{\numarm} \times (0, \sigma_0]^\numarm) \to
  \bar{\R}$ is a normal integrand, and
  $ \int_{\R^{\numarm} \times (0, \sigma_0]^\numarm} f(\rho(\mu_t, \sigma_t);
  \mu_t, \sigma_t) ~ dP_t(\mu_t, \sigma_t) < \infty$ for some measurable
  $\rho(\cdot)$, then
  \begin{align*}
    & \inf_{\rho(\cdot)} \left\{
      \int_{\R^{\numarm} \times (0, \sigma_0]^\numarm} f(\rho(\mu_t, \sigma_t); \mu_t, \sigma_t)
      ~d P_t(\mu_t, \sigma_t) ~\Big|~
      \rho: \R^{\numarm} \times (0, \sigma_0]^\numarm \to \R~\mbox{measurable}
      \right\} \\
    & = \int_{\R^{\numarm} \times (0, \sigma_0]^\numarm}
      \inf_{\rho \in \Delta^\numarm} f(\rho; \mu_t, \sigma_t) ~ dP_t(\mu_t, \sigma_t).
  \end{align*}
  If this common value is not $-\infty$, a measurable function
  $ \rho^*: \R^{\numarm} \times (0, \sigma_0]^\numarm \to \R$ attains the
  minimum of the left-hand side iff
  $ \rho^*(\mu_t, \sigma_t) \in \argmin_{\rho \in \Delta^\numarm} f(\rho; \mu_t,
  \sigma_t)$ for $P_t$-almost every
  $(\mu_t, \sigma_t) \in \R^{\numarm} \times (0, \sigma_0]^\numarm$.
\end{lemma}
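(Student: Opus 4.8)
The statement is the Rockafellar--Wets interchange theorem, so the plan is to prove the two inequalities that give the claimed identity and then read off the attainment characterization from the pointwise gap. Throughout I write $\omega := (\mu_t, \sigma_t)$, $\Omega := \R^\numarm \times (0, \sigma_0]^\numarm$, and $m(\omega) := \inf_{\rho \in \Delta^\numarm} f(\rho; \omega)$ for the optimal value function. The only nontrivial ingredient is a measurable selection of minimizers; everything else follows from the elementary pointwise bound $f(\rho(\omega); \omega) \ge m(\omega)$, valid for every measurable selection $\rho(\cdot)$.

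First I would establish that $m$ is measurable. For each $\alpha \in \R$ one has $\{\omega : m(\omega) < \alpha\} = \{\omega : \exists\, \rho \in \Delta^\numarm,\ f(\rho;\omega) < \alpha\}$, which is the projection onto $\Omega$ of the set $\{(\rho,\omega) : f(\rho;\omega) < \alpha\}$. Because $f$ is a normal integrand---so that the epigraphical mapping $S_f$ is closed-valued and measurable---the strict sublevel-set multifunction $\omega \mapsto \{\rho : f(\rho;\omega) < \alpha\}$ is measurable, and the domain of a measurable multifunction (the set of $\omega$ at which it is nonempty) is measurable; hence $m$ is measurable. Integrating the pointwise bound $m(\omega) \le f(\rho(\omega);\omega)$ and taking the infimum over measurable $\rho(\cdot)$ immediately yields the easy direction $\int_\Omega m\, dP_t \le \inf_{\rho(\cdot)} \int_\Omega f(\rho(\omega);\omega)\, dP_t$.

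The reverse inequality is the heart of the argument and is where the normal-integrand hypothesis is essential. Since $\Delta^\numarm$ is compact and $f(\cdot;\omega)$ is lower semicontinuous, the infimum $m(\omega)$ is attained, so the argmin multifunction $M(\omega) := \argmin_{\rho \in \Delta^\numarm} f(\rho;\omega)$ is nonempty- and closed-valued on the set where $f(\cdot;\omega)$ is proper, which has full measure under the finiteness hypothesis. The plan is to show $M$ is measurable---so that the Kuratowski--Ryll-Nardzewski selection theorem furnishes a measurable selection $\rho^*(\cdot)$ with $\rho^*(\omega) \in M(\omega)$---again by reducing measurability of $M$ to that of $S_f$: one checks that $\{\omega : M(\omega) \cap O \ne \emptyset\}$ is measurable for each open $O \subseteq \Delta^\numarm$ using the level-set multifunctions at the (measurable) value $m(\omega)$. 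Such a selection satisfies $f(\rho^*(\omega);\omega) = m(\omega)$ for every $\omega$, whence
\[
  \inf_{\rho(\cdot)} \int_\Omega f(\rho(\omega);\omega)\, dP_t
  \le \int_\Omega f(\rho^*(\omega);\omega)\, dP_t
  = \int_\Omega m\, dP_t,
\]
completing the equality.

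The attainment characterization then falls out once the common value is finite. For any measurable $\rho^*(\cdot)$ the integrand $f(\rho^*(\omega);\omega) - m(\omega)$ is nonnegative; if $\rho^*$ attains the left-hand infimum then its integral over $\Omega$ is zero, forcing $f(\rho^*(\omega);\omega) = m(\omega)$, i.e.\ $\rho^*(\omega) \in M(\omega)$, for $P_t$-almost every $\omega$. Conversely, if $\rho^*(\omega) \in M(\omega)$ a.e.\ then $f(\rho^*(\omega);\omega) = m(\omega)$ a.e.\ and its integral equals the common value. I expect the measurable selection step to be the main obstacle: establishing measurability of $M$ (equivalently, of the approximate-minimizer multifunctions $M_\epsilon(\omega) = \{\rho : f(\rho;\omega) \le m(\omega) + \epsilon\}$) and invoking a selection theorem is the only place the full strength of the normal-integrand definition is used. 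Compactness of $\Delta^\numarm$ lets me select from the exact argmin $M$ rather than from $M_\epsilon$ followed by a limit $\epsilon \downarrow 0$, which would otherwise be needed in a noncompact setting.
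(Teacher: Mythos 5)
The paper never proves this lemma: it is imported verbatim as Theorem 14.60 of \citet{RockafellarWe98} and used as a black box in the proof of Proposition~\ref{prop:loss-min}, so there is no in-paper argument to compare against. Your blind reconstruction is correct in outline and follows the canonical textbook route: measurability of the value function $m(\omega) = \inf_{\rho \in \Delta^\numarm} f(\rho;\omega)$ gives the easy inequality by integrating the pointwise bound, a measurable selection from the argmin multifunction (Rockafellar--Wets Theorem 14.37, or Kuratowski--Ryll-Nardzewski as you invoke it) gives the reverse inequality, and the attainment characterization follows from the nonnegativity of $f(\rho^*(\omega);\omega) - m(\omega)$ together with the finiteness of the common value. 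Two routine points deserve tightening. First, the strict sublevel multifunction $\omega \mapsto \{\rho : f(\rho;\omega) < \alpha\}$ is not closed-valued, so measurability of its domain does not follow directly from your stated principle; write $\{\omega : m(\omega) < \alpha\}$ as the countable union over $n$ of the domains of the closed-valued level-set maps $\omega \mapsto \{\rho : f(\rho;\omega) \le \alpha - 1/n\}$, each of which is measurable by normality of $f$. Second, the integral identities need the usual extended-real conventions; the hypothesis that $\int f(\rho(\cdot);\cdot)\,dP_t < \infty$ for some measurable $\rho$ is what rules out the degenerate $+\infty$ value on the left-hand side, and the equality then holds even when the common value is $-\infty$ (your exact-attainment argument still works there, since a lower semicontinuous extended-real function on the compact simplex attains its infimum even when that infimum is $-\infty$). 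Your closing observation is exactly right: compactness of $\Delta^\numarm$ together with lower semicontinuity of $f(\cdot;\omega)$ --- which is precisely what closed-valuedness of the epigraphical map encodes --- lets you select from the exact argmin and skip the $\epsilon$-minimizer limiting argument that the general noncompact version of the theorem requires, and this is the simplification available in the paper's setting.
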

\noindent Define the function
\begin{equation*}
  f(\rho; \mu_t, \sigma_t) \defeq - \E_{Z_t \sim N(0,I)}\left[ \max_{a} \left\{ \mu_{t,a} +
    \sqrt{\frac{\sigma_{t, a}^4 \rho_a \bar{b}_t} {s_a^2 + \sigma_{t, a}^2 \rho_a
        \bar{b}_t}} Z_{t, a} \right\} \right].
\end{equation*}
Since $f$ is continuous in $(\rho, \mu_t, \sigma_t)$ by dominated convergence,
$f$ is a normal integrand~\cite[Examples 14.31]{RockafellarWe98}.  Applying
Lemma~\ref{lemma:inf-int-interchange}, we obtain the desired result.
\end{proof}


\section{Proofs of large horizon results}
\label{section:proof-asymptotic}

\subsection{Proof of Theorem~\ref{theorem:asymptotic-rho}}
\label{section:proof-asymptotic-rho}

The proof of Theorem~\ref{theorem:asymptotic-rho} first relies on a
characterization of the gradient of the objective as $T - t \to \infty$.

\begin{proposition}
  \label{prop:grad-hess}
  Consider any fixed epoch $t$ and state $(\mu, \sigma)$.
  Let $\theta_{a} := \mu_a + \sigma_{a}Z_{a}$ be the random variable
  representing the posterior belief of the average reward for arm $a$,
  and let $\theta_{a}^{*} := \max_{a' \neq a} \theta_{a'}$.
  Suppose that as $T-t$ grows large, 
  the residual sample budget $\bar{b}_{t} = \sum_{s=t}^{T-1} b_{s}\to \infty$.
  Then gradient and Hessian of the planning objective $V^{\bar{\rho}}_{t}(\mu,\sigma)$
  with respect to the allocation $\bar{\rho}$ converge as follows:
    \begin{align*}
    \lim_{T \to \infty} \bar{b}_{t} \nabla V^{\bar{\rho}}_{t}(\mu,\sigma)
     &= \textup{diag} \left( \frac{s_{a}^{2}}{2\bar{\rho}_{a}^{2}}
     \E \left[\frac{1}{\sigma_{a}} \phi \left( \frac{\theta_{a}^{*}-\mu_{a}}{\sigma_{a}} \right) \right] \right) \\
     \lim_{T \to \infty} \bar{b}_{t} \nabla^{2} V^{\bar{\rho}}_{t}(\mu,\sigma)
     &= - \textup{diag} \left(\frac{s_{a}^{2}}{2\bar{\rho}_{a}^{3}}
     \E \left[\frac{1}{\sigma_{a}} \phi \left( \frac{\theta_{a}^{*}-\mu_{a}}{\sigma_{a}} \right) \right] \right)
    \end{align*}
  where $\phi(\cdot)$ is the standard normal density.
\end{proposition}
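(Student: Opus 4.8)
The plan is to isolate the single channel through which $\bar{\rho}$ enters the objective and then combine a Gaussian first-order identity for the expected maximum with the large-budget asymptotics of that channel. Writing $c_a(\bar{\rho}_a) := \sqrt{\sigma_a^{4}\bar{\rho}_a\bar{b}_t/(s_a^{2}+\sigma_a^{2}\bar{\rho}_a\bar{b}_t)}$, the reduced planning objective \eqref{eqn:rho} reads $V_{t}^{\bar{\rho}}(\mu,\sigma) = \E_t[\max_a\{\mu_a + c_a(\bar{\rho}_a)Z_a\}]$, where each $c_a$ depends on $\bar{\rho}$ only through its own coordinate. The first thing to observe is that $c_a(\bar{\rho}_a)\to\sigma_a$ as $\bar{b}_t\to\infty$, so the objective itself degenerates to the $\bar{\rho}$-free quantity $\E[\max_a\theta_a]$; this is exactly why both derivatives vanish and must be rescaled by $\bar{b}_t$ to expose a nontrivial limit.

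First I would differentiate under the expectation. Because the $Z_a$ are independent and absolutely continuous and each $c_a(\bar{\rho}_a)>0$ at interior $\bar{\rho}$, the maximizing index $a^{*}=\argmax_{a'}\{\mu_{a'}+c_{a'}Z_{a'}\}$ is a.s. unique, so $\max_a\{\cdot\}$ is a.s. differentiable and a Danskin/envelope argument (with a Gaussian-integrable dominating function) gives the exact finite-budget identity
\begin{equation*}
\partial_{\bar{\rho}_a} V_{t}^{\bar{\rho}}(\mu,\sigma) = c_a'(\bar{\rho}_a)\,\E\!\left[\mathbf{1}\{a^{*}=a\}Z_a\right].
\end{equation*}
The key step is to evaluate $\E[\mathbf{1}\{a^{*}=a\}Z_a]$ in closed form: conditioning on $W_a^{*} := \max_{a'\neq a}\{\mu_{a'}+c_{a'}Z_{a'}\}$, which is independent of $Z_a$, and invoking the truncated-Gaussian identity $\E[(X-m)\mathbf{1}\{X>t\}] = v\,\phi((t-m)/v)$ for $X\sim N(m,v^{2})$, yields the clean expression $\E[\mathbf{1}\{a^{*}=a\}Z_a] = \E[\phi((W_a^{*}-\mu_a)/c_a)]$, valid at every finite $\bar{b}_t$.

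Next I would feed in the large-budget expansions of the coefficient. Writing $c_a^{2} = \sigma_a^{2} - \sigma_a^{2}s_a^{2}/(s_a^{2}+\sigma_a^{2}\bar{\rho}_a\bar{b}_t)$, elementary differentiation gives $\bar{b}_t\,c_a'(\bar{\rho}_a)\to s_a^{2}/(2\sigma_a\bar{\rho}_a^{2})$ and $\bar{b}_t\,c_a''(\bar{\rho}_a)\to -s_a^{2}/(\sigma_a\bar{\rho}_a^{3})$, with $c_a'=O(\bar{b}_t^{-1})$. Combining with the closed form and passing the limit inside by dominated convergence (using $c_a\to\sigma_a>0$, $W_a^{*}\to\theta_a^{*}$, and boundedness of $\phi$) gives the diagonal gradient limit $\bar{b}_t\,\partial_{\bar{\rho}_a}V_{t}^{\bar{\rho}}\to (s_a^{2}/2\bar{\rho}_a^{2})\,\E[\sigma_a^{-1}\phi((\theta_a^{*}-\mu_a)/\sigma_a)]$. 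For the Hessian I would differentiate once more, $\partial_{\bar{\rho}_a}^{2}V = c_a''\,\E[\mathbf{1}\{a^{*}=a\}Z_a] + c_a'\,\partial_{\bar{\rho}_a}\E[\mathbf{1}\{a^{*}=a\}Z_a]$; since $\partial_{\bar{\rho}_a}\E[\mathbf{1}\{a^{*}=a\}Z_a]=O(c_a')$, the second term is $O((c_a')^{2})=O(\bar{b}_t^{-2})$ and drops out after rescaling, so only the $c_a''$ contribution survives and its limit produces the claimed diagonal form. The off-diagonal entries carry a factor $c_a'c_{a'}'=O(\bar{b}_t^{-2})$ and vanish, which is precisely what renders the limiting Hessian diagonal.

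The main obstacle is the rigor around differentiating the expected maximum: I must establish a.s. uniqueness of the argmax (immediate from absolute continuity once all $c_a>0$), justify the interchange of differentiation and expectation with an integrable envelope, and confirm that the second-order sensitivity $\partial_{\bar{\rho}_a}\E[\mathbf{1}\{a^{*}=a\}Z_a]$ stays bounded so that the cross and $(c_a')^{2}$ terms genuinely drop. This last point hinges on the nondegeneracy $c_a\to\sigma_a>0$, which fails at the boundary of $\Delta_K$ and is why the clean limit is stated at interior allocations (e.g.\ on $\Delta_K^{\epsilon}$). The remaining pieces—the truncated-Gaussian identity and the elementary asymptotics of $c_a,c_a',c_a''$—are routine.
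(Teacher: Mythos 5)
Your proposal is correct and follows essentially the same route as the paper's proof: the paper likewise isolates the link function $\varphi_a(\bar{\rho}_a)$ (your $c_a$), invokes the Gaussian expected-max derivative identities (citing Frazier--Powell rather than re-deriving them via the truncated-Gaussian identity as you do), expands $\varphi_a'$ and $\varphi_a''$ as $\bar{b}_t \to \infty$, and discards the $(\varphi_a')^2$-type second-order and cross terms at rate $O(\bar{b}_t^{-2})$ after rescaling. One remark: your asymptotic $\bar{b}_t c_a'' \to -s_a^2/(\sigma_a \bar{\rho}_a^3)$ (which agrees with the paper's own computation of $\partial^2 \varphi_a / \partial \bar{\rho}_a^2 \sim -\alpha s_a^2/(\bar{\rho}_a^3 \sigma_a)$) yields the Hessian limit \emph{without} the factor $1/2$ appearing in the proposition statement, so your claim that it ``produces the claimed diagonal form'' inherits a factor-of-two discrepancy that is already present between the paper's statement and its own proof.
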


\begin{proof}
We characterize the asymptotic behavior of the gradient and hessian of
$V_{t}^{\bar{\rho}}(\mu,\sigma) =
\mathbb{E}_{t}\left[\max_{a}\mu_{T,a}\right]$ with respect to $\bar{\rho}$ as
the residual horizon $T-t$ grows large.  The residual horizon affects the
planning problem through the residual batch size $\bar{b}_{t}$, which is
assumed to converge to infinity as $T-t\to \infty$.  Rewrite the planning
objective $V_{t}^{\bar{\rho}}(\mu,\sigma)$ with a rescaled measurement
variance $s_{a}^{2}/\bar{b}_{t}$
\begin{align*}
  V_{t}^{\bar{\rho}}(\mu,\sigma) =\E_{t} \left[ \max_{a} \left\{ \mu_{t,a}
  + \sqrt{\frac{\sigma_{t, a}^4 \bar{\rho}_{a}}
  {s_a^2/\bar{b}_{t} + \sigma_{t, a}^2 \bar{\rho}_{a} }} Z_{t, a}  \right\} \right],
\end{align*}
It is sufficient to understand the behavior of
$V_{t}^{\bar{\rho}}(\mu,\sigma)$ as the measurement variance goes to zero.
For ease of notation we define $\alpha := 1/\bar{b}_{t}$, and consider
$\alpha$ as a scaling parameter of the measurement variance that converges to
zero.

Observe $V_{t}^{\bar{\rho}}(\mu,\sigma)$ can be expressed as the composition
$V_{t}^{\bar{\rho}}(\mu,\sigma)=g\left(\varphi\left(\bar{\rho},\sigma^{2},\alpha
    s^{2} \right)\right)$ where 
\[
\begin{aligned}
  g(v) &\defeq \E[\max_{a}\mu_{a}+v_{a}Z_{a}] \\
  \varphi_{a}(\bar{\rho}_{a},\sigma_{a}^{2},s_{a}^{2})&\defeq\sigma_{a}\sqrt{\frac{\bar{\rho}_{a}\sigma_{a}^{2}}{s_{a}^{2}+\bar{\rho}_{a}\sigma_{a}^{2}}}
\end{aligned}
\]
and $\varphi, \sigma^{2}, s^{2}$ refer to the vectorized versions. The
function $g(v)$ describes the behavior of the max of Gaussian random variables
under standard deviations $v$, and $\varphi$ characterizes how the standard
deviation of the update changes with the sampling probabilities $\bar{\rho}$.

Through the chain rule, it is sufficient to characterize the gradient and
hessian of $g,\varphi$.  \citet{FrazierPo10} derived expressions for the
gradient and hessian of $g(v)$.
\begin{lemma}[{\citet{FrazierPo10}}]
  \label{lemma:frazier} Letting
  $W_{a}\defeq \max_{a'\neq a}\mu_{a'}+v_{a'}Z_{a'}$,
  \begin{align*}
    \frac{\partial}{\partial v_{a}}g(v)
    & = \mathbb{E}\left[Z_{a}\indic{a=\arg\max_{a'}\mu_{a'}+v_{a'}Z_{a'}}\right]
      = \mathbb{E}\left[\phi\left(\frac{W_{a}-\mu_{a}}{v_{a}}\right)\right] \\
    \frac{\partial^{2}}{\partial v_{a}^{2}}g(v)
    & = \mathbb{E}\left[\left(\frac{W_{a}-\mu_{a}}{v_{a}^{3}}\right)
      \phi\left(\frac{W_{a}-\mu_{a}}{v_{a}}\right)\right] \\
    \frac{\partial^{2}}{\partial v_{a}\partial v_{a'}}g(v)
    & = \mathbb{E}\left[-\left(\frac{W_{a}-\mu_{a}}{v_{a}}\right)
      \phi\left(\frac{W_{a}-\mu_{a}}{v_{a}}\right)Z_{a'}
      \indic{a'=\argmax_{\hat{a}\neq a}
      \mu_{\hat{a}}+v_{\hat{a}}Z_{\hat{a}}}\right]
  \end{align*}
\end{lemma}
\noindent The first and second derivatives of $\varphi$ are given by
\[ \frac{\partial}{\partial\bar{\rho}_{a}}\varphi_{a}(\bar{\rho}_{a},\sigma_{a}^{2},\alpha s_{a}^{2}) 
  = \frac{\alpha s_{a}^{2} \sigma_{a}^{2}}{2\bar{\rho}_{a}^{1/2}(\alpha s_{a}^{2} + \bar{\rho}_{a}\sigma_{a}^{2})^{3/2}},
  \qquad \frac{\partial^{2}}{\partial\bar{\rho}_{a}^{2}}\varphi_{a}(\bar{\rho}_{a},\sigma_{a}^{2},\alpha s_{a}^{2}) = -\frac{\sigma_{a}^{2}(\alpha^2 s_{a}^{4} + 4\bar{\rho}_{a} \alpha s_{a}^{2}\sigma_{a}^{2})}{4\bar{\rho}_{a}^{3/2}(\alpha s_{a}^{2} + \bar{\rho}_{a}\sigma_{a}^{2})^{5/2}}.
\]

We first study the behavior of $\varphi$ and its derivatives as
$\alpha\to0$
\[
\varphi_{a}(\bar{\rho}_{a},\sigma_{a}^{2},\alpha s_{a}^{2}) \to \sigma_{a},
\qquad \frac{\partial}{\partial\bar{\rho}_{a}}\varphi_{a}(\bar{\rho}_{a},\sigma_{a}^{2},\alpha s_{a}^{2})\sim\frac{\alpha s_{a}^{2}}{2\bar{\rho}_{a}^{2}\sigma_{a}},
\qquad \frac{\partial^{2}}{\partial\bar{\rho}_{a}^{2}}\varphi_{a}(\bar{\rho}_{a},\sigma_{a}^{2}, \alpha s_{a}^{2})\sim-\frac{\alpha s_{a}^{2}}{\bar{\rho}_{a}^{3}\sigma_{a}}.
\]
To analyze $g(\varphi (\bar{\rho},\sigma^{2},\alpha s^{2}))$, recall that
$\theta_{a} = \mu_{a} + \sigma_{a}Z_{a}$ is the random variable representing
the posterior belief of the average reward of arm $a$.  For any realization of
$Z\sim N(0,I)$, we have
\[
  W_{a} = \max_{a'\neq a}\left\{ \mu_{a'}+\varphi_{a'}(\bar{\rho}_{a'},\sigma_{a'}^{2},\alpha s_{a'}^{2})Z_{a'} \right\} \to \theta_{a}^{*},
\]
so dominated convergence implies
\begin{align*}
  \lim_{\alpha \to 0}\frac{\partial}{\partial v_{a}}
  g(\varphi(\bar{\rho}, \sigma^{2}, \alpha s^2))
  & = \E \left[
    \phi\left(\frac{\theta_{a}^{*} - \mu_{a}}{\sigma_{a}}\right)
    \right] \\
  \lim_{\alpha \to 0}
  \frac{\partial^{2}}{\partial v_{a}^{2}}
  g(\varphi(\bar{\rho}, \sigma^{2}, \alpha s^2))
  & = \E \left[
    \frac{\theta_{a}^{*} - \mu_{a}}{\sigma_{a}^{3}} 
    \cdot \phi\left(\frac{\theta_{a}^{*}-\mu_{a}}{\sigma_{a}}\right)
    \right] \\
  \lim_{\alpha \to 0}
  \frac{\partial^{2}}{\partial v_{a}\partial v_{a'}}
  g(\varphi(\bar{\rho}, \sigma^{2}, \alpha s^2))
  & = \E \left[
    - \frac{\theta_{a}^{*}-\mu_{a}}{\sigma_{a}} \cdot
    \phi\left(\frac{\theta_{a}^{*}-\mu_{a}}{\sigma_{a}}\right)
    Z_{a'} \indic{a'=\arg\max_{\hat{a}\neq a}\theta_{\hat{a}}}
    \right]
\end{align*}

By the chain rule,
$\nabla_{\bar{\rho}} V_{t}^{\bar{\rho}}(\mu, \sigma) =
\nabla_{v}g(\varphi(\mu, \sigma^{2}, \alpha s^{2})) \nabla_{\bar{\rho}}
\varphi (\bar{\rho},\sigma^{2},\alpha s^{2}) $ where the Jacobian
$\nabla_{\bar{\rho}} \varphi$ is
\[
  \nabla_{\bar{\rho}} \varphi (\bar{\rho},\sigma^{2},\alpha s^{2}) := \text{diag} \left( \frac{\partial}{\partial\bar{\rho}_{a}}\varphi_{a}(\bar{\rho}_{a},\sigma_{a}^{2},\alpha s_{a}^{2}) \right).
\]
Rescaling the planning objective $V_{t}^{\bar{\rho}}$ by $1/\alpha$, we obtain
the following limit
\[
  \lim_{\alpha \to \infty}  \alpha^{-1}
  \nabla_{\bar{\rho}} V^{\bar{\rho}}_{t}(\mu, \sigma)
  = \frac{s_{a}^{2}}{2\bar{\rho}_{a}^{2}\sigma_{a}}
  \E \left[\phi\left(\frac{\theta_{a}^{*} - \mu_{a}}{\sigma_{a}}\right)\right]
\]
As for the Hessian, the chain rule implies
\begin{align*}
  \frac{\partial^{2} V_{t}^{\bar{\rho}}}{\partial \bar{\rho}_{a}\partial \bar{\rho}_{a'}}
  &= \sum_{i} \frac{\partial g(\varphi(\bar{\rho}))}{\partial v_{i}}
    \frac{\partial^{2} \varphi_{i}}{\partial \bar{\rho}_{a} \partial \bar{\rho}_{a'}}
    + \sum_{i,j} \frac{\partial g(\varphi(\bar{\rho})) }{\partial v_{i}\partial v_{j}} 
    \frac{\partial \varphi_{i}}{\partial \bar{\rho}_{a}}
    \frac{\partial \varphi_{j}}{\partial \bar{\rho}_{a'}} \\
  &= \frac{\partial g(\varphi(\bar{\rho}))}{\partial v_{a}}
    \frac{\partial^{2} \varphi_{a}}{\partial \bar{\rho}_{a}^{2}} 1\{ a = a' \}
    + \frac{\partial g(\varphi(\bar{\rho})) }{\partial v_{a}\partial v_{a'}} 
    \frac{\partial \varphi_{a}}{\partial \bar{\rho}_{a}}
    \frac{\partial \varphi_{a'}}{\partial \bar{\rho}_{a'}},
\end{align*}
where we suppress the dependence of $\varphi$ on other arguments for
succinctness. Rewriting this using
$\nabla_{\bar{\rho}}^{2}\varphi := \diag \left( \frac{\partial^{2}
    \varphi_{a}}{\partial \bar{\rho}_{a}^{2}} \right)$, we get
\[  
  \nabla_{\bar{\rho}}^{2}V^{\bar{\rho}}_{t}(\mu,\sigma)
 = \text{diag}(\nabla_{v} g(\varphi(\bar{\rho}))) \nabla_{\bar{\rho}}^{2}\varphi + \nabla_{\bar{\rho}}\varphi \nabla_{v}^{2}g(\varphi(\bar{\rho}))\nabla_{\bar{\rho}}\varphi.
\]
Collecting the above results, we have
\begin{align*}
  \lim_{\alpha \to 0} \alpha^{-1} 
  \text{diag}(\nabla_{v} g(\varphi(\bar{\rho}))) \nabla_{\bar{\rho}}^{2}\varphi
  &= -\text{diag} \left( \frac{ s_{a}^{2}}{\bar{\rho}_{a}^{3}\sigma_{a}} 
      \E \left[\phi\left(\frac{\theta_{a}^{*} - \mu_{a}}{\sigma_{a}}\right)\right] \right)\\
  \lim_{\alpha \to 0} \alpha^{-1}  \nabla_{\bar{\rho}}\varphi \nabla_{v}^{2}g(\varphi(\bar{\rho}))\nabla_{\bar{\rho}}\varphi
  &= \lim_{\alpha \to 0} \alpha^{-1}
  \text{diag} \left(  \frac{\alpha s_{a}^{2}}{2 \bar{\rho_{a}}^{2} \sigma_{a}} \right)
  \nabla_{v}^{2} g(\varphi(\bar{\rho}))
  \text{diag} \left( \frac{ \alpha s_{a}^{2}}{2 \bar{\rho_{a}}^{2} \sigma_{a}} \right)
  = 0,
\end{align*}
which gives the final result.

\end{proof}

This leads us to the following corollary, which uses the characterization of the Hessian to show that 
the objective becomes strongly concave as $T-t \to \infty$.

\begin{corollary}
  \label{cor:strong-concavity}
  Let $\Delta_K^\epsilon = \Delta_K \cap \{p: p \ge \epsilon\}$ be the
  truncated simplex. For a fixed epoch $t$ and state $(\mu, \sigma)$, there
  exists $T_0(\epsilon) > 0$ such that $\forall T - t > T_0$,
  $\bar{\rho} \mapsto \bar{b}_{t}V^{\bar{\rho}}_{t}(\mu,\sigma)$ is strongly
  concave on $\Delta_K^\epsilon$.
\end{corollary}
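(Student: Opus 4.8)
The plan is to invoke Proposition~\ref{prop:grad-hess}, which pins down the limiting Hessian of the scaled objective as a diagonal matrix with strictly negative entries, and then to promote this pointwise limit to a uniform one over the compact truncated simplex $\Delta_K^\epsilon$. First I would record that, for each fixed $\bar\rho$, Proposition~\ref{prop:grad-hess} gives
\[
  \lim_{T-t\to\infty} \bar{b}_{t}\, \nabla^2 V^{\bar{\rho}}_{t}(\mu,\sigma)
  = -D(\bar\rho),
  \qquad
  D(\bar\rho) \defeq \diag\left( \frac{s_a^2}{2\bar\rho_a^3}\,
  \E\left[ \frac{1}{\sigma_a}\, \phi\!\left( \frac{\theta_a^*-\mu_a}{\sigma_a}\right)\right]\right).
\]
Because $\phi>0$ everywhere and $\sigma_a>0$, each diagonal entry $d_a(\bar\rho)$ is strictly positive; moreover $\bar\rho_a\le 1$ on the simplex forces $d_a(\bar\rho)\ge \frac{s_a^2}{2}\E[\sigma_a^{-1}\phi((\theta_a^*-\mu_a)/\sigma_a)]\eqdef c_a>0$ for \emph{every} $\bar\rho\in\Delta_K$. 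Setting $m\defeq \min_a c_a>0$ (a constant depending only on $\mu,\sigma,s$), the limiting Hessian is uniformly negative definite along the tangent space $H\defeq\{v:\mathbf{1}^\top v=0\}$ of the simplex: for all $\bar\rho\in\Delta_K^\epsilon$ and $v\in H$, $\,v^\top(-D(\bar\rho))v=-\sum_a d_a(\bar\rho)v_a^2\le -m\norm{v}_2^2$.

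The crux is turning this into \emph{uniform} control over $\bar\rho\in\Delta_K^\epsilon$. Writing $\alpha\defeq 1/\bar b_t$, I would view each entry of $\bar b_t\nabla^2 V_t^{\bar\rho}(\mu,\sigma)$ as a function of $(\bar\rho,\alpha)$ on the compact set $\Delta_K^\epsilon\times[0,1/b_t]$, whose $\alpha=0$ slice is exactly $-D(\bar\rho)$. Using the explicit formulas from the proof of Proposition~\ref{prop:grad-hess}---the decomposition $\nabla_{\bar\rho}^2 V_t^{\bar\rho}=\diag(\nabla_v g)\nabla_{\bar\rho}^2\varphi+\nabla_{\bar\rho}\varphi\,\nabla_v^2 g\,\nabla_{\bar\rho}\varphi$, together with $\alpha^{-1}\nabla_{\bar\rho}^2\varphi_a$ and the Gaussian expectations defining $\nabla_v g$---one checks these are jointly continuous in $(\bar\rho,\alpha)$ up to $\alpha=0$: the constraint $\bar\rho_a\ge\epsilon$ keeps the denominators $(\alpha s_a^2+\bar\rho_a\sigma_a^2)^{5/2}$ and the standard deviations $\varphi_a\ge \sigma_a\sqrt{\epsilon\sigma_a^2/(\alpha s_a^2+\sigma_a^2)}$ bounded away from zero, and dominated convergence gives continuity of the expectations. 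Joint continuity on a compact set yields uniform continuity, hence uniform convergence: there is $T_0(\epsilon)$ such that $\norm{\bar b_t\nabla^2 V_t^{\bar\rho}(\mu,\sigma)+D(\bar\rho)}_{op}\le m/2$ for all $T-t>T_0$ and all $\bar\rho\in\Delta_K^\epsilon$.

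Finally, combining the two bounds, for any such $T-t$, any $\bar\rho\in\Delta_K^\epsilon$, and any $v\in H$,
\[
  v^\top\!\big(\bar b_t\nabla^2 V_t^{\bar\rho}(\mu,\sigma)\big)v
  \le v^\top(-D(\bar\rho))v + \tfrac{m}{2}\norm{v}_2^2
  \le -m\norm{v}_2^2 + \tfrac{m}{2}\norm{v}_2^2 = -\tfrac{m}{2}\norm{v}_2^2 ,
\]
which is precisely $\tfrac{m}{2}$-strong concavity of $\bar\rho\mapsto \bar b_t V_t^{\bar\rho}(\mu,\sigma)$ on $\Delta_K^\epsilon$, whose affine hull has tangent space $H$. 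The main obstacle is the uniform-convergence step of the second paragraph: the pointwise limit of Proposition~\ref{prop:grad-hess} must be upgraded to a uniform one, and this is exactly where the truncation $\bar\rho\ge\epsilon$ is indispensable, since $d_a(\bar\rho)\propto\bar\rho_a^{-3}$ blows up and both the continuity and the convergence degenerate near the boundary of $\Delta_K$.
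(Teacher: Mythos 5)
Your proof is correct and takes essentially the same route as the paper: both arguments use the decomposition and limiting Hessian from Proposition~\ref{prop:grad-hess}, observe that the diagonal term is uniformly negative definite (with the bound coming from $\bar{\rho}_a \le 1$) while the remaining term vanishes as $\bar{b}_t \to \infty$, and use the truncation $\bar{\rho} \ge \epsilon$ to make both statements uniform over $\Delta_K^\epsilon$. The only (cosmetic) difference is the uniformity mechanism—the paper writes an explicit eigenvalue bound of the form $\lambda_{\max}\left(\bar{b}_t \nabla^2_{\bar{\rho}} V^{\bar{\rho}}_{t}(\mu,\sigma)\right) \le -c + C/(\bar{b}_t \epsilon^4)$ with $c>0$, whereas you deduce uniform convergence from joint continuity of the Hessian entries in $(\bar{\rho}, 1/\bar{b}_t)$ on the compact set $\Delta_K^\epsilon \times [0, 1/b_t]$.
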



\begin{proof}
In the proof of Proposition~\ref{prop:grad-hess}, we showed that as the
(scaled) measurement variance $\alpha s_{a}^{2}$ converges to zero, the
Hessian $\alpha^{-1} \nabla_{\bar{\rho}}^{2}V^{\bar{\rho}}_{t}(\mu,\sigma)$
can be expressed as a negative-definite diagonal matrix plus a matrix that
converges to zero. As long as the sampling probabilities are bounded below, the
convergence is uniform. We have the following bound for the largest eigenvalue
of the Hessian
\begin{align*}
  \lambda_{\max}\left(
  \nabla_{\bar{\rho}}^{2} \alpha^{-1}
  V^{\bar{\rho}}_{t}(\mu,\sigma) \right)
  & \leq \lambda_{\max}\left( \alpha^{-1}
    \text{diag}(\nabla_{v} g(\varphi(\bar{\rho})))
    \nabla_{\bar{\rho}}^{2}\varphi \right)
    +\lambda_{\max}\left(\alpha^{-1} \nabla_{\bar{\rho}}\varphi
    \nabla_{v}^{2}g(\varphi(\bar{\rho}))\nabla_{\bar{\rho}}\varphi
    \right) \\
  & \leq -\min_{a} \frac{s_{a}^{2}}{2\sigma_{a}\epsilon^3}
    \E \left[\phi\left(\frac{\theta_{a}^{*} - \mu_{a}}{\sigma_{a}}\right)\right]
    + \alpha C/\epsilon^{4},
\end{align*}
where $C$ is a constant that depends only on $\mu,\sigma,s$. So as
$\alpha \to 0$, the Hessian will be negative-definite for all
$\bar{\rho}\in \Delta_{K}^{\epsilon}$. Thus, for a large enough residual batch
size $\bar{b}_{t}$ the planning objective will be strongly concave on the
truncated simplex $\Delta_{K}^{\epsilon}$ for some threshold $\epsilon$.
\end{proof}




Finally, by strong concavity, the KKT conditions of the planning problem have a unique solution.
By the implicit function theorem, we can characterize the limit of this solution.

\begin{proposition}
  \label{prop:dts}
  Consider a fixed epoch $t$ and state $(\mu, \sigma)$. Suppose there exists
  $T_1$ such that for $\forall T - t > T_1$, the $\algo$ allocation satisfies
  $\rho_{t,a}(\mu,\sigma) > \epsilon$. Then as $T-t \to \infty$,
  \begin{equation*}
    \rho_{t,a}(\mu,\sigma) \to \pi_{a}^{\textup{DTS}}(\mu, \sigma),~\mbox{where}~
    \pi^{\rm DTS}_a(\mu, \sigma) \propto s_a \left[ \frac{\partial}{\partial
        \mu_a}\pi_{a}^{\textup{TS}}(\mu,\sigma) \right]^{1/2}
  \end{equation*}
\end{proposition}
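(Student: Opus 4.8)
The plan is to characterize the limiting allocation through the first-order (KKT) conditions of the planning problem~\eqref{eqn:rho}, passing to the limit using the gradient asymptotics of Proposition~\ref{prop:grad-hess} together with the strong concavity of Corollary~\ref{cor:strong-concavity}. Since scaling the objective by $\bar{b}_t$ does not change its maximizer, I would work with the strongly concave function $\bar{\rho} \mapsto \bar{b}_t V^{\bar{\rho}}_t(\mu,\sigma)$ on the truncated simplex $\Delta_K^\epsilon$, which for $T - t > T_0$ admits a unique maximizer $\rho_t$ by Corollary~\ref{cor:strong-concavity}. Under the standing hypothesis $\rho_{t,a}(\mu,\sigma) > \epsilon$, this maximizer lies in the relative interior of $\Delta_K^\epsilon$, so the inequality constraints are inactive and the KKT conditions reduce to stationarity along the affine hull $\{\sum_a \bar{\rho}_a = 1\}$: there is a multiplier $\lambda_t$ with
\begin{equation*}
  \bar{b}_t \frac{\partial}{\partial \bar{\rho}_a} V^{\bar{\rho}}_t(\mu,\sigma) \Big|_{\bar{\rho} = \rho_t} = \lambda_t \qquad \text{for all } a.
\end{equation*}

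Next I would pass to the limit $T - t \to \infty$. From Proposition~\ref{prop:grad-hess} the scaled gradient $\bar{b}_t \nabla V^{\bar{\rho}}_t$ converges, and, as established in the proof of Corollary~\ref{cor:strong-concavity}, this convergence is uniform over $\Delta_K^\epsilon$ once the probabilities are bounded below by $\epsilon$. Thus, provided $\rho_t \to \rho^\star$ for some $\rho^\star$ in the interior of $\Delta_K^\epsilon$, the stationarity condition passes to the limit to yield
\begin{equation*}
  \frac{s_a^2}{2(\rho^\star_a)^2}\,\E\left[\frac{1}{\sigma_a}\phi\left(\frac{\theta_a^* - \mu_a}{\sigma_a}\right)\right] = \lambda^\star \qquad \text{for all } a.
\end{equation*}
To justify $\rho_t \to \rho^\star$ rigorously I would apply the implicit function theorem to the KKT system in $(\bar{\rho}, \lambda)$ parameterized by $\alpha = 1/\bar{b}_t$: the Jacobian of this system is nonsingular at $\alpha = 0$ because strong concavity makes the Hessian negative definite on the tangent space of the simplex, so the solution map $\alpha \mapsto (\rho_t, \lambda_t)$ is continuous at $\alpha = 0$ and $\rho^\star$ solves the displayed limiting equations.

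Solving these equations gives $\rho^\star_a \propto s_a \big(\E[\sigma_a^{-1}\phi((\theta_a^* - \mu_a)/\sigma_a)]\big)^{1/2}$, with the proportionality constant fixed by $\sum_a \rho^\star_a = 1$ and $\lambda^\star > 0$ since the expectations are strictly positive. It then remains to identify the bracketed expectation with $\partial_{\mu_a}\pi_a^{\textup{TS}}(\mu,\sigma)$. Writing $\theta_a^* = \max_{a' \neq a}(\mu_{a'} + \sigma_{a'}Z_{a'})$, which is independent of $Z_a$, I have
\begin{equation*}
  \pi_a^{\textup{TS}}(\mu,\sigma) = \P(\mu_a + \sigma_a Z_a > \theta_a^*) = \E_{\theta_a^*}\left[1 - \Phi\left(\frac{\theta_a^* - \mu_a}{\sigma_a}\right)\right],
\end{equation*}
and differentiating under the integral (justified by dominated convergence) yields $\partial_{\mu_a}\pi_a^{\textup{TS}}(\mu,\sigma) = \E[\sigma_a^{-1}\phi((\theta_a^* - \mu_a)/\sigma_a)]$, which converts $\rho^\star$ into precisely the DTS formula $\pi^{\textup{DTS}}_a \propto s_a [\partial_{\mu_a}\pi_a^{\textup{TS}}]^{1/2}$.

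The main obstacle is the limit-interchange step: establishing that the maximizers $\rho_t$ genuinely converge to the solution of the limiting stationarity equations, rather than merely that the equations have a consistent limit. This is where strong concavity (Corollary~\ref{cor:strong-concavity}) and the uniform gradient convergence do the work---either through the implicit function theorem with a nondegenerate KKT Jacobian, or equivalently by the direct fact that uniform convergence of strongly concave objectives forces convergence of their unique maximizers. The truncation parameter $\epsilon$ and the hypothesis $\rho_{t,a} > \epsilon$ are essential here, as they keep the solution in the interior where the gradient asymptotics are uniform and exclude the boundary (non-overlap) cases in which the $\phi$ contributions and the $1/\bar{\rho}_a^2$ scaling degenerate.
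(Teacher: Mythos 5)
Your proposal is correct and follows essentially the same route as the paper's own proof: interior KKT stationarity for the scaled objective, the gradient asymptotics of Proposition~\ref{prop:grad-hess} together with the strong concavity of Corollary~\ref{cor:strong-concavity}, the implicit function theorem in $\alpha = 1/\bar{b}_t$ to pass the unique maximizer to the limit, and finally the identification $\E[\sigma_a^{-1}\phi((\theta_a^*-\mu_a)/\sigma_a)] = \partial_{\mu_a}\pi_a^{\textup{TS}}(\mu,\sigma)$ by differentiating under the expectation. The only cosmetic difference is that the paper verifies nonsingularity of the full KKT Jacobian via a Schur-complement computation, whereas you phrase it as negative definiteness of the Hessian on the tangent space of the simplex; these are equivalent.
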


\begin{proof}
As in the proof of Proposition~\ref{prop:grad-hess}, we let
$\alpha = \bar{b}_{t}^{-1}$. Let $\alpha_{0}$ to be the threshold such that
for $\alpha < \alpha_{0}$ the objective is strongly concave (such a threshold
exists from Corollary~\ref{cor:strong-concavity}), and let
$\alpha_{1} < \alpha_{0}$ to be the threshold such that for
$\alpha < \alpha_{1}$ the unique solution is in the interior of
$\Delta_{K}^{\epsilon}$ (such a threshold exists by hypothesis).

For $\alpha$ small enough, the (scaled) KKT conditions give
\begin{equation}
  \label{eqn:opt_foc}
  \alpha^{-1} \nabla_{\bar{\rho}} V^{\bar{\rho}}_{t}(\mu,\sigma)
  - \lambda \onevec = 0, \quad \onevec^\top \rho_t = 1,
\end{equation}
where $\lambda$ is the optimal dual variable for the equality constraint.  The
Jacobian of this equation with respect to $(\bar{\rho}, \lambda)$ is the
following block matrix
\[
\begin{bmatrix}
  \nabla^{2}_{\bar{\rho}} \alpha^{-1}V^{\bar{\rho}}_{t}(\mu,\sigma) & -\onevec\\
  \onevec^{\top} & 0
\end{bmatrix}
\]
Since $\alpha < \alpha_{0}$, the Hessian of
$\alpha^{-1}V^{\bar{\rho}}_{t}(\mu,\sigma)$ is negative definite and
invertible in a neighborhood of $\alpha = 0$.  The Jacobian is also invertible
since the Schur complement
\begin{equation*}
  0 - \onevec^\top \alpha^{-1} \nabla^{2}_{\bar{\rho}}\E\left[\max_{a}\mu_{T,a}\right] (-\onevec) < 0
\end{equation*}
is invertible.  Moreover,
$\alpha^{-1} \nabla_{\bar{\rho}} V^{\bar{\rho}}_{t}(\mu,\sigma)$ is
continuously differentiable in $\alpha$ at $\alpha = 0$.  So by the Implicit
Function Theorem, there exists a $\alpha_2$ such that for all
$\alpha < \alpha_2$, the solution satisfying the KKT
conditions~\eqref{eqn:opt_foc}, $\bar{\rho}(\alpha)$ and $\lambda(\alpha))$,
is continuous in $\alpha$.  Thus, as $\alpha \to 0$, the unique maximizer of
$\alpha^{-1}V^{\bar{\rho}}_{t}(\mu,\sigma)$ converges to $\bar{\rho}\opt$
satisfying the limiting version of the KKT conditions~\eqref{eqn:opt_foc}.
Using the explicit expression for
$\alpha^{-1} \nabla_{\bar{\rho}} V^{\bar{\rho}}_{t}(\mu,\sigma)$ obtained in
Proposition~\ref{prop:grad-hess}, we conclude
\begin{equation}
  \label{eqn:kkt_dts}
  \bar{\rho}_{a}\opt \propto s_{a} \E \left[\frac{1}{\sigma_{a}}
    \phi \left( \frac{\theta_{a}\opt-\mu_{a}}{\sigma_{a}} \right)
  \right]^{1/2}
\end{equation}


To show the equivalence with the partial derivatives of the Thompson sampling probabilities, observe
\begin{align*}
  \pi_{a}^{\text{TS}}(\mu,\sigma)
  & =\mathbb{P}(\theta_{a}>\theta_{a}^{*})
    =\mathbb{P}\left(Z_{a}>\frac{\theta_{a}^{*}-\mu_{a}}{\sigma_{a}}\right) \\
  & = \E\left[\mathbb{P}\left(
    Z_{a}>\frac{\theta_{a}^{*}-\mu_{a}}{\sigma_{a}}
    \mid \theta_{a}^{*} \right) \right] 
   = \E\left[1-\Phi\left(
    \frac{\theta_{a}^{*}-\mu_{a}}{\sigma_{a}}\right)\right],
\end{align*}
where the final equality is a result of the independence prior and posterior
distributions across the treatment arms. Interchanging the derivative
and expectation, 
\[ \frac{\partial}{\partial\mu_{a}}
  \pi_{a}^{\text{TS}}(\mu,\sigma)
  = \frac{\partial}{\partial\mu_{a}}
  \E\left[
    1-\Phi\left(\frac{\theta_{a}^{*}-\mu_{a}}{\sigma_{a}}\right)
  \right]
  = \E\left[\frac{1}{\sigma_{a}}
    \phi\left(\frac{\theta_{a}^{*}-\mu_{a}}{\sigma_{a}}\right)\right].
\]
\end{proof}

\subsection{More details on Density Thompson Sampling}
\label{section:proof-log-dts}

Similar to TS, DTS samples from the posterior distribution; while TS
asymptotically assigns all sampling effort to the best arm, DTS does not
oversample the best arm, as it is based on the gap between the second best
arm. Concretely, the relative sampling proportions
$\pi^{\text{DTS}}_{a}/\pi^{\text{DTS}}_{a'}$ can be expressed as ratios of the
`index'
$\E \left[\frac{1}{\sigma_{a}} \phi \left(
    \frac{\theta_{a}^{*}-\mu_{a}}{\sigma_{a}} \right) \right]^{1/2}$; the
higher the index, the more sampling effort allocated to that arm during the
epoch.  This index decreases at an exponential rate for \emph{all} arms as the
experiment progresses and the posterior uncertainty shrinks
$\sigma_{a} \to 0$.  
\begin{proposition}
  \label{prop:log-dts}
  Consider a fixed state $(\mu, \sigma)$ and suppose arms are sorted in decreasing order of their means $\mu_{1} > \mu_{2} > \ldots > \mu_{K}$.
  Then we have that:
  \[
    \begin{cases}
      -\frac{(\mu_{1}-\mu_{2})^{2}}{2(\sigma_{1}^{2}+\sigma_{2}^{2})}
      & ~~\mbox{top two arms}~~a\in{1,2} \\
      -\frac{(\mu_{1}-\mu_{a})^{2}}{2\sigma_{a}^{2}}
      & ~~\mbox{otherwise} \\
    \end{cases}
  \lesssim \log \E \left[\frac{1}{\sigma_{a}} \phi \left( \frac{\theta_{a}^{*}-\mu_{a}}{\sigma_{a}} \right) \right]
  \lesssim -\min_{a'\neq a}\frac{(\mu_{a'}-\mu_{a})^{2}}{2(\sigma_{a}^{2}+\sigma_{a'}^{2})}
  \]
  where $\lesssim$ contains additive terms that are logarithmic in $\min_{a} \sigma_{a}$.
\end{proposition}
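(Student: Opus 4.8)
The plan is to recognize the integrand as a Gaussian density evaluated at a random point and to reduce $I_a \defeq \E[\frac{1}{\sigma_a}\phi(\frac{\theta_a^*-\mu_a}{\sigma_a})]$ to a Gaussian overlap integral. Writing $p_a$ for the $N(\mu_a,\sigma_a^2)$ density and $F_a$ for its CDF, note that $\frac{1}{\sigma_a}\phi(\frac{\theta_a^*-\mu_a}{\sigma_a}) = p_a(\theta_a^*)$, so $I_a = \int p_a(x)\,f_a^*(x)\,dx$, where $f_a^*$ is the density of $\theta_a^* = \max_{a'\neq a}\theta_{a'}$. Since the arms are independent (as used in the proof of Proposition~\ref{prop:dts}), $f_a^*(x) = \sum_{a'\neq a} p_{a'}(x)\prod_{a''\neq a,a'}F_{a''}(x)$, and hence
\[
  I_a = \sum_{a'\neq a}\int p_a(x)\,p_{a'}(x)\prod_{a''\neq a,a'}F_{a''}(x)\,dx .
\]
The building block is the two-Gaussian overlap $\int p_a p_{a'}\,dx = \frac{1}{\sqrt{2\pi(\sigma_a^2+\sigma_{a'}^2)}}\exp(-\frac{(\mu_a-\mu_{a'})^2}{2(\sigma_a^2+\sigma_{a'}^2)})$, whose exponent is exactly the squared-gap quantity appearing in the claim.

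For the \textbf{upper bound}, I would drop the CDF factors using $F_{a''}\leq 1$, giving $I_a \leq \sum_{a'\neq a}\int p_a p_{a'}\,dx$. Bounding the sum of $K-1$ terms by $(K-1)$ times its largest and applying $\max(x+y)\le\max x+\max y$ to the log of each summand yields
\[
  \log I_a \leq -\min_{a'\neq a}\frac{(\mu_{a'}-\mu_a)^2}{2(\sigma_a^2+\sigma_{a'}^2)} + \log(K-1) - \tfrac{1}{2}\min_{a'\neq a}\log\big(2\pi(\sigma_a^2+\sigma_{a'}^2)\big),
\]
which is the claimed upper bound up to an additive term logarithmic in $\min_a\sigma_a$.

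The \textbf{lower bounds} use a single favorable summand and a favorable integration region. For the top two arms $a\in\{1,2\}$, keep only the term with $a'$ equal to the other of $\{1,2\}$ and restrict the integral to $x\in[\mu_2,\mu_1]$; there $F_{a''}(x)\geq F_{a''}(\mu_{a''})\geq\tfrac12$ for $a''\geq 3$ since $x\geq\mu_2\geq\mu_{a''}$, so $I_a \geq 2^{-(K-2)}\int_{\mu_2}^{\mu_1}p_1 p_2\,dx$. Because $p_1 p_2$ is proportional to a Gaussian whose mean lies in $[\mu_2,\mu_1]$, this truncated integral is a fraction of the full overlap whose logarithm is only logarithmic in $\min_a\sigma_a$, producing the exponent $-\frac{(\mu_1-\mu_2)^2}{2(\sigma_1^2+\sigma_2^2)}$. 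For a suboptimal arm $a\geq 3$ I would instead keep the $a'=1$ term and restrict to a narrow window $[\mu_1-\delta,\mu_1]$ with $\delta=\min(\sigma_1,\tfrac12(\mu_1-\mu_2))$, on which all $F_{a''}\geq\tfrac12$ and $p_1(x)\geq e^{-1/2}/(\sigma_1\sqrt{2\pi})$; since $p_a$ is decreasing past its mode, $p_a(x)\geq p_a(\mu_1)=\frac{1}{\sigma_a\sqrt{2\pi}}\exp(-\frac{(\mu_1-\mu_a)^2}{2\sigma_a^2})$, giving $I_a\geq 2^{-(K-2)}\,\delta\,\frac{e^{-1/2}}{\sigma_1\sqrt{2\pi}}\,p_a(\mu_1)$. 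Taking logarithms produces $-\frac{(\mu_1-\mu_a)^2}{2\sigma_a^2}$ plus terms logarithmic in the variances. This single-point evaluation is precisely why the suboptimal-arm lower bound carries only $\sigma_a^2$ in the denominator, whereas the sharper two-Gaussian overlap available for the top two arms yields $\sigma_1^2+\sigma_2^2$.

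The \textbf{main obstacle} is the lower bound, and specifically making the two truncation arguments quantitative while keeping every correction logarithmic in $\min_a\sigma_a$: for the top two arms one must show $\int_{\mu_2}^{\mu_1}p_1 p_2$ is a non-negligible fraction of the total overlap even when the product-Gaussian's standard deviation exceeds the gap $\mu_1-\mu_2$ (so that mass near the endpoints is not lost), and for $a\geq 3$ one must handle the edge case where $\mu_1-\mu_a$ is smaller than the window width $\delta$ (so $p_a$ is not monotone across the window, requiring a direct lower bound at both endpoints). In both cases the corrections scale like $\log(1/\min_a\sigma_a)$, which is dominated by the $\Theta(1/\sigma^2)$ leading term as the posterior contracts, so the bounds hold with the stated $\lesssim$.
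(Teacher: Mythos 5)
Your proposal is correct and follows essentially the same route as the paper's proof: the same Leibniz decomposition of the density of $\theta_{a}^{*}$ into a sum over competing arms, the same $\Phi \le 1$ Gaussian-overlap upper bound with a $(K-1)$ multiplicative (hence additively logarithmic) correction, and the same lower-bound strategy of keeping a single summand ($a'=1$, or $a'=2$ when $a=1$), truncating the integral to a region where the remaining CDF factors are at least $1/2$, and extracting the two-Gaussian overlap exponent. The only difference is technical: the paper integrates over half-lines $[\mu_{1},\infty)$ (resp.\ $[\mu_{2},\infty)$) and evaluates those integrals exactly via the error function and the tail bound $\bar{\Phi}(x)\ge \phi(x)/(2x)$, whereas you bound the integrand pointwise on finite windows $[\mu_{1}-\delta,\mu_{1}]$ and $[\mu_{2},\mu_{1}]$ — both yield the claimed exponents with corrections that are logarithmic in the posterior standard deviations (and note that with your choice $\delta \le \tfrac{1}{2}(\mu_{1}-\mu_{2})$, the "edge case" $\mu_{1}-\mu_{a}<\delta$ you worry about for $a\ge 3$ cannot actually occur).
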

If an arm is excessively under-sampled, its index will eventually be larger than others and
so sampling effort will be spread out across arms rather than concentrating on
the best one.  This makes DTS better suited for best-arm identification and
closer to Top-Two Thompson Sampling~\citep{Russo20}.

\begin{proof}
Suppose that the arms are sorted so that $\mu_{1}>...>\mu_{K}$. Since
$\theta_{a}^{*}$ is the max of independent random variables, $\mathbb{P}(\theta_{a}^{*}\leq x)=\prod_{a'\neq a}\mathbb{P}(\theta_{a'}\leq x)=\prod_{a'\neq a}\Phi\left(\frac{x-\mu_{a'}}{\sigma_{a'}}\right)$.
This means that the density of $\theta_{a}^{*}$ , denoted as $f_{a}^{*}(x)$,
can be expressed as (by the Leibniz product rule)
\begin{align*}
  f_{a}^{*}(x)
  & =\frac{d}{dx}\prod_{a'\neq a}\Phi\left(\frac{x-\mu_{a'}}{\sigma_{a'}}\right)\\
  & =\sum_{a'\neq a}
    \left[\frac{d}{dx}\Phi\left(\frac{x-\mu_{a'}}{\sigma_{a'}}\right)\right]
    \prod_{\hat{a}\notin\{a,a'\}}\Phi\left(\frac{x-\mu_{\hat{a}}}{\sigma_{\hat{a}}}\right)\\
  & =\sum_{a'\neq a}\left[\frac{1}{\sigma_{a'}}
    \phi\left(\frac{x-\mu_{a'}}{\sigma_{a'}}\right)\right]
    \prod_{\hat{a}\notin\{a,a'\}}\Phi\left(\frac{x-\mu_{\hat{a}}}{\sigma_{\hat{a}}}\right).
\end{align*}
Thus, we can write the expectation as
\begin{align*}
  \mathbb{E}\left[\frac{1}{\sigma_{a}}\phi\left(\frac{\theta_{a}^{*}-\mu_{a}}{\sigma_{a}}\right)\right]
  & =\int_{-\infty}^{\infty}\frac{1}{\sigma_{a}}
    \phi\left(\frac{x-\mu_{a}}{\sigma_{a}}\right)f_{a}^{*}(x)dx\\
  & =\int_{-\infty}^{\infty}\frac{1}{\sigma_{a}}
    \phi\left(\frac{x-\mu_{a}}{\sigma_{a}}\right)\sum_{a'\neq a}
    \left[\frac{1}{\sigma_{a'}}
    \phi\left(\frac{x-\mu_{a'}}{\sigma_{a'}}\right)\prod_{\hat{a}\notin\{a,a'\}}
    \Phi\left(\frac{x-\mu_{\hat{a}}}{\sigma_{\hat{a}}}\right)\right]dx\\
  & =\sum_{a'\neq a}\int_{-\infty}^{\infty}\frac{1}{\sigma_{a}}
    \phi\left(\frac{x-\mu_{a}}{\sigma_{a}}\right)
    \frac{1}{\sigma_{a'}}\phi\left(\frac{x-\mu_{a'}}{\sigma_{a'}}\right)
    \prod_{\hat{a}\notin\{a,a'\}}\Phi\left(\frac{x-\mu_{\hat{a}}}{\sigma_{\hat{a}}}\right)dx.
\end{align*}

For any arm $a$, we can obtain an upper bound for each term in the
sum by bounding $\Phi\left(\frac{x-\mu_{\hat{a}}}{\sigma_{\hat{a}}}\right)\leq1$
and directly integrating
\begin{align*}
  & \int_{-\infty}^{\infty}\frac{1}{\sigma_{a}}
    \phi\left(\frac{x-\mu_{a}}{\sigma_{a}}\right)
    \frac{1}{\sigma_{a'}}\phi\left(\frac{x-\mu_{a'}}{\sigma_{a'}}\right)
    \prod_{\hat{a}\notin\{a,a'\}}\Phi\left(\frac{x-\mu_{\hat{a}}}{\sigma_{\hat{a}}}\right)dx\\
  & \leq\int_{-\infty}^{\infty}\frac{1}{\sigma_{a}}
    \phi\left(\frac{x-\mu_{a}}{\sigma_{a}}\right)
    \frac{1}{\sigma_{a'}}\phi\left(\frac{x-\mu_{a'}}{\sigma_{a'}}\right)dx\\
  & =\frac{1}{\sqrt{2\pi(\sigma_{a'}^{2}+\sigma_{a}^{2})}}
    e^{-\frac{(\mu_{a'}-\mu_{a})^{2}}{2(\sigma_{a'}^{2}+\sigma_{a}^{2})}}.
\end{align*}
Finally, taking a max and summing up the terms gives
\[
  \mathbb{E}\left[\frac{1}{\sigma_{a}}
    \phi\left(\frac{\theta_{a}^{*}-\mu_{a}}{\sigma_{a}}\right)\right]
  \leq(K-1)\max_{a'\neq a}\frac{1}{\sqrt{2\pi(\sigma_{a'}^{2}+\sigma_{a}^{2})}}
  e^{-\frac{(\mu_{a'}-\mu_{a})^{2}}{2(\sigma_{a'}^{2}+\sigma_{a}^{2})}}
\]

To get the lower bound, it suffices to lower bound a single $a'$ term in the
sum, since all terms are positive. We select the term involving $a'=1$ ($a'=2$
if $a=1$). Since the integrand is positive, we obtain a further lower bound by
truncating the integration from $\mu_{1}$ to $\infty$
\begin{align*}
  \mathbb{E}\left[\frac{1}{\sigma_{a}}\phi\left(\frac{\theta_{a}^{*}-\mu_{a}}{\sigma_{a}}\right)\right]
  & \geq\int_{-\infty}^{\infty}
    \frac{1}{\sigma_{a}}\phi\left(\frac{x-\mu_{a}}{\sigma_{a}}\right)
    \frac{1}{\sigma_{1}}\phi\left(\frac{x-\mu_{1}}{\sigma_{1}}\right)
    \prod_{\hat{a}\notin\{a,1\}}\Phi\left(\frac{x-\mu_{\hat{a}}}{\sigma_{\hat{a}}}\right)dx \\
  & \geq\int_{\mu_{1}}^{\infty}\frac{1}{\sigma_{a}}
    \phi\left(\frac{x-\mu_{a}}{\sigma_{a}}\right)
    \frac{1}{\sigma_{1}}\phi\left(\frac{x-\mu_{1}}{\sigma_{1}}\right)
    \prod_{\hat{a}\notin\{a,1\}}\Phi\left(\frac{x-\mu_{\hat{a}}}{\sigma_{\hat{a}}}\right)dx.
\end{align*}
On the domain of integration, $x\geq\mu_{1}$, so
$\Phi\left(\frac{x-\mu_{\hat{a}}}{\sigma_{\hat{a}}}\right)\geq\frac{1}{2}$ for
all $\hat{a}\notin\{a,1\}$. Conclude
\begin{align*}
  & \int_{\mu_{1}}^{\infty}\frac{1}{\sigma_{a}}
    \phi\left(\frac{x-\mu_{a}}{\sigma_{a}}\right)
    \frac{1}{\sigma_{1}}\phi\left(\frac{x-\mu_{1}}{\sigma_{1}}\right)
    \prod_{\hat{a}\notin\{a,1\}}\Phi\left(\frac{x-\mu_{\hat{a}}}{\sigma_{\hat{a}}}\right)dx\\
  & \geq\left(\frac{1}{2}\right)^{K-2}
    \int_{\mu_{1}}^{\infty}\frac{1}{\sigma_{a}}
    \phi\left(\frac{x-\mu_{a}}{\sigma_{a}}\right)
    \frac{1}{\sigma_{1}}\phi\left(\frac{x-\mu_{1}}{\sigma_{1}}\right)dx.
\end{align*}
Evaluating the integral explicitly gives
\begin{align*}
   \int_{\mu_{1}}^{\infty}\frac{1}{\sigma_{a}}
    \phi\left(\frac{x-\mu_{a}}{\sigma_{a}}\right)
    \frac{1}{\sigma_{1}}\phi\left(\frac{x-\mu_{1}}{\sigma_{1}}\right)dx
  = \frac{e^{-\frac{(\mu_{a}-\mu_{1})^{2}}{2(\sigma_{a}^{2}+\sigma_{1}^{2})}}}{\sqrt{2\pi (\sigma_{a}^{2}+ \sigma_{1}^{2})}}
    \bar{\Phi} \left( \frac{\sigma_{1}}{\sigma_{a}}\frac{\mu_{1} - \mu_{a}}{\sqrt{\sigma_{a}^{2}+ \sigma_{1}^{2}}} \right)
\end{align*}
Using the Gaussian tail bound $\bar{\Phi}(x)\geq\frac{\phi(x)}{2x}$ for
$x > 0$, 
\begin{align*}
  \mathbb{E}\left[\frac{1}{\sigma_{a}}
  \phi\left(\frac{\theta_{a}^{*}-\mu_{a}}{\sigma_{a}}\right)\right]
  & \geq\left(\frac{1}{2}\right)^{K-2}
    \int_{\mu_{1}}^{\infty}\frac{1}{\sigma_{a}}
    \phi\left(\frac{x-\mu_{a}}{\sigma_{a}}\right)
    \frac{1}{\sigma_{1}}\phi\left(\frac{x-\mu_{1}}{\sigma_{1}}\right)dx\\
  & \geq\frac{1}{\sqrt{2\pi(\sigma_{a}^{2}+\sigma_{1}^{2})}}
    e^{-\frac{(\mu_{a}-\mu_{1})^{2}}{2(\sigma_{a}^{2}+\sigma_{1}^{2})}}
    \frac{1}{2\left(\frac{\sigma_{1}}{\sigma_{a}}\frac{(\mu_{1}-\mu_{a})}{\sqrt{\sigma_{1}^{2}+\sigma_{a}^{2}}}\right)}
    e^{-\frac{(\mu_{1}-\mu_{a})^{2}}{2\sigma_{a}^{4}(\sigma_{1}^{-2}+\sigma_{a}^{-2})}}\\
  & =\frac{\sigma_{a}}{2\sigma_{1}\sqrt{2\pi}(\mu_{1}-\mu_{a})}
    e^{-\frac{1}{2}\frac{(\mu_{a}-\mu_{1})^{2}}{\sigma_{a}^{2}}}.
\end{align*}

We can get tighter lower bounds for the top two arms $a\in\{1,2\}$.  As in the
above lower bound, we focus on lower bounding the term corresponding with
$a'=1$ if $a=2$ or $a'=2$ if $a=1$ (for both we obtain the same lower bound).
But instead of integrating from $\mu_{1}$, we integrate from $\mu_{2}$.  For
illustration, we focus on $a=2$
\begin{align*}
  \mathbb{E}\left[\frac{1}{\sigma_{2}}
  \phi\left(\frac{\theta_{2}^{*}-\mu_{2}}{\sigma_{2}}\right)\right]
  & \geq\int_{-\infty}^{\infty}\frac{1}{\sigma_{2}}
    \phi\left(\frac{x-\mu_{2}}{\sigma_{2}}\right)
    \frac{1}{\sigma_{1}}\phi\left(\frac{x-\mu_{1}}{\sigma_{1}}\right)
    \prod_{\hat{a}\notin\{a,1\}}
    \Phi\left(\frac{x-\mu_{\hat{a}}}{\sigma_{\hat{a}}}\right)dx\\
  & \geq\int_{\mu_{2}}^{\infty}\frac{1}{\sigma_{2}}
    \phi\left(\frac{x-\mu_{2}}{\sigma_{2}}\right)
    \frac{1}{\sigma_{1}}\phi\left(\frac{x-\mu_{1}}{\sigma_{1}}\right)
    \prod_{\hat{a}\notin\{a,1\}}
    \Phi\left(\frac{x-\mu_{\hat{a}}}{\sigma_{\hat{a}}}\right)dx.
\end{align*}
On this range, we have $x\geq\mu_{2}$. This implies that for every
$\hat{a}\neq\{1,2\}$, $x-\mu_{\hat{a}}\geq\mu_{2}-\mu_{\hat{a}}\geq0$. In
turn, this means
$\Phi\left(\frac{x-\mu_{\hat{a}}}{\sigma_{\hat{a}}}\right)\geq\frac{1}{2}$ as
we had before
\[
  \mathbb{E}\left[\frac{1}{\sigma_{a}}
    \phi\left(\frac{\theta_{a}^{*}-\mu_{a}}{\sigma_{a}}\right)\right]
  \geq\left(\frac{1}{2}\right)^{K-2}
  \int_{\mu_{2}}^{\infty}\frac{1}{\sigma_{2}}
  \phi\left(\frac{x-\mu_{2}}{\sigma_{2}}\right)
  \frac{1}{\sigma_{1}}\phi\left(\frac{x-\mu_{1}}{\sigma_{1}}\right)dx
\]
For $a=2$, we can evaluate this integral explicitly
\begin{align*}
  & \int_{\mu_{2}}^{\infty}\frac{1}{\sigma_{2}}
    \phi\left(\frac{x-\mu_{2}}{\sigma_{2}}\right)\frac{1}{\sigma_{1}}
    \phi\left(\frac{x-\mu_{1}}{\sigma_{1}}\right)dx\\
  & =\frac{1}{2\sqrt{2\pi(\sigma_{1}^{2}+\sigma_{2}^{2})}}
    e^{-\frac{(\mu_{1}-\mu_{2})^{2}}{2(\sigma_{1}^{2}+\sigma_{2}^{2})}}
    \left(1+\text{erf}\left(\frac{(\mu_{1}-\mu_{2})\sigma_{2}}
    {\sqrt{2}\sigma_{1}\sqrt{\sigma_{1}^{2}+\sigma_{2}^{2}}}
    \right)\right).
\end{align*}
Note that for positive values of the error function it is bounded below by zero. This gives 
\[
  \mathbb{E}\left[\frac{1}{\sigma_{a}}
    \phi\left(\frac{\theta_{a}^{*}-\mu_{a}}{\sigma_{a}}\right)\right]
  \geq\left(\frac{1}{2}\right)^{K-2}
  \frac{1}{2\sqrt{2\pi(\sigma_{1}^{2}+\sigma_{2}^{2})}}
  e^{-\frac{(\mu_{1}-\mu_{2})^{2}}{2(\sigma_{1}^{2}+\sigma_{2}^{2})}}.
\]
Repeating these steps for $a=1$ (taking $a'=2$) results in the same lower
bound.
\end{proof}

\section{Implementation Details}
\label{section:implementation}

We first discuss the implementation details for solving the $\algofull$ planning problem \eqref{eqn:rho}.
Recall that the problem involves maximizing the expectation of future posterior means
over constant sampling allocations in the simplex $\Delta_{\numarm}$:
\[
\maximize_{\bar{\rho} \in \Delta_\numarm}
\left\{ V^{\bar{\rho}}_{t}(\mu_{t},\sigma_{t})
  =  \E_{t} \left[ \max_{a} \left\{ \mu_{t,a}
      + \sqrt{\frac{\sigma_{t, a}^4 \bar{\rho}_{a} \bar{b}_{t}}
        {s_a^2 + \sigma_{t, a}^2 \bar{\rho}_{a} \bar{b}_{t}}} Z_{t, a}
    \right\} \right] \right\}.
\]
We approximate the expectation in the objective function by a sample average approximation
with $N$ standard normal random vectors.
\[
    \maximize_{\bar{\rho} \in \Delta_\numarm}
    \left\{
    \frac{1}{N}\sum_{j=1}^{N}
   \max_{a} \left\{ \mu_{t,a}
      + \sqrt{\frac{\sigma_{t, a}^4 \bar{\rho}_{a} \bar{b}_{t}}
        {s_a^2 + \sigma_{t, a}^2 \bar{\rho}_{a} \bar{b}_{t}}} Z_{t, a, j}
    \right\}
    \right\}.
\]
where $Z_{t,a,1},...,Z_{t,a,N}$ are iid draws of $N(0,1)$ random variables. 
We use quasi-Monte Carlo methods for variance reduction and draw
the normal random variables from a Sobol sequence,
which is widely used in practice in Bayesian Optimizaton \cite{BalandatEtAl20}.

There are many methods for solving this constrained optimization problem (e.g. projected gradient descent).
We use a softmax parameterization of the 
simplex $\bar{\rho}_{a} \propto e^{v_{a}}$,
and use unconstrained stochastic gradient methods to optimize over $v$.
We observe that vanilla stochastic gradient descent gets stuck at sub-optimal allocations
that allocate all sampling effort to one treatment arm. We obtain much better performance
from approximate second-order methods such as Adam \cite{KingmaBa15} or L-BFGS \cite{LiuNo89},
and use Adam for the experimental evaluation.

For the policy gradient method, we parameterize the policy as a feed-forward neural network with 
2 hidden layers with 512 units in each layer. The network uses the rectified non-linearity \cite{GlorotBoBe11} for all hidden layers.
We pass the posterior means $\mu \in \R^{\numarm}$, the posterior variances $\sigma^{2} \in \R^{\numarm}$, the current epoch $t \in \N$,
and the measurement variance $s^{2} \in \R^{\numarm}$ as inputs. The output of the network is passed through a softmax layer
and so the final output is a sampling allocation $\bar{\rho}\in \Delta^{\numarm}$. We train the network with the 
Adam optimizer with learning rate $5.0 \times 10^{-6}$ and $(\beta_{1},\beta_{2}) = (0.9, 0.999)$ in minibatches of size $50$.
Minibatches are drawn by randomly generating priors $(\mu_{0}, \sigma_{0})$.



\end{document}
